\documentclass{article} 
\usepackage[numbers,sort&compress,square]{natbib}
\usepackage{iclr2023_conference,times}
\usepackage[T1]{fontenc}
\usepackage[latin9]{inputenc}

\usepackage{amsmath,amsfonts,bm}









\def\eqref#1{equation~\ref{#1}}









\def\1{\bm{1}}










\DeclareMathAlphabet{\mathsfit}{\encodingdefault}{\sfdefault}{m}{sl}
\SetMathAlphabet{\mathsfit}{bold}{\encodingdefault}{\sfdefault}{bx}{n}













\usepackage{url}

\usepackage[pagebackref=false,breaklinks=true,colorlinks,bookmarks=false]{hyperref}
\hypersetup{linkcolor=[rgb]{0.7,0.1,0.1}}
\hypersetup{citecolor=[rgb]{0.4,0.15,0.95}}
\usepackage{bm}
\usepackage{url}            
\usepackage{booktabs}       
\usepackage{amsfonts}       
\usepackage{nicefrac}       
\usepackage{microtype}      
\usepackage{xcolor}         
\usepackage{algorithm,algorithmicx,algpseudocode}
\usepackage{epsfig}
\usepackage{graphicx}
\usepackage{comment}

\usepackage{subcaption}

\usepackage{caption}
\renewcommand{\captionlabelfont}{\scriptsize}
\usepackage{enumitem}
\usepackage{multirow}
\usepackage{makecell}
\usepackage{wrapfig}
\usepackage{amsmath}
\usepackage{pifont}

\usepackage{tocloft}
\usepackage[toc,page,header]{appendix}
\usepackage{adjustbox}

\usepackage{minitoc}

\renewcommand \thepart{}
\renewcommand \partname{}

\newcommand{\ie}{\emph{i.e.}}
\newcommand{\eg}{\emph{e.g.}}
\newcommand{\norm}[1]{\left\lVert#1\right\rVert}

\usepackage{amssymb}
\newcommand*{\QEDA}{\null\nobreak\hfill\ensuremath{\blacksquare}}%

\newtheorem{proof}{Proof}[section]
\newtheorem{definition}{Definition}
\newtheorem{theorem}{Theorem}
\newtheorem{lemma}{Lemma}
\newtheorem{proposition}{Proposition}

\usepackage{xcolor,colortbl}
\definecolor{Gray}{gray}{0.94}

\newlength\savewidth\newcommand\shline{\noalign{\global\savewidth\arrayrulewidth
  \global\arrayrulewidth 1pt}\hline\noalign{\global\arrayrulewidth\savewidth}}

\usepackage[align=center,framemethod=TikZ,skipabove=7pt,skipbelow=7pt,innertopmargin=7pt,innerbottommargin=6.5pt,innerleftmargin=5pt,innerrightmargin=5pt,leftmargin=1.5pt,rightmargin=1.5pt]{mdframed}

\title{\fontsize{15.25pt}{\baselineskip}\selectfont \vspace{-6mm} Generalizing and Decoupling Neural Collapse via \\Hyperspherical Uniformity Gap}


\author{Weiyang Liu\textsuperscript{1,2,*},~~Longhui Yu\textsuperscript{3,*},~~Adrian Weller\textsuperscript{2,4},~~Bernhard Sch\"olkopf\textsuperscript{1}\\[0.75mm]
\textsuperscript{1}Max Planck Institute for Intelligent Systems - T\"ubingen~~~~~\textsuperscript{2}University of Cambridge\\\textsuperscript{3}Peking University~~~~~\textsuperscript{4}The Alan Turing Institute \vspace{-1mm}
}

%

\iclrfinalcopy 
\begin{document}

\doparttoc 
\faketableofcontents
\maketitle

\begin{abstract}
The neural collapse (NC) phenomenon describes an underlying geometric symmetry for deep neural networks, where both deeply learned features and classifiers converge to a simplex equiangular tight frame. It has been shown that both cross-entropy loss and mean square error can provably lead to NC. We remove NC's key assumption on the feature dimension and the number of classes, and then present a generalized neural collapse (GNC) hypothesis that effectively subsumes the original NC. Inspired by how NC characterizes the training target of neural networks, we decouple GNC into two objectives: minimal intra-class variability and maximal inter-class separability. We then use hyperspherical uniformity (which characterizes the degree of uniformity on the unit hypersphere) as a unified framework to quantify these two objectives. Finally, we propose a general objective -- hyperspherical uniformity gap~(HUG), which is defined by the difference between inter-class and intra-class hyperspherical uniformity. HUG not only provably converges to GNC, but also decouples GNC into two separate objectives. Unlike cross-entropy loss that couples intra-class compactness and inter-class separability, HUG enjoys more flexibility and serves as a good alternative loss function. Empirical results show that HUG works well in terms of generalization and robustness.
\end{abstract}

\section{Introduction}
\vspace{-0.7mm}
Recent years have witnessed the great success of deep representation learning in a variety of applications ranging from computer vision~\cite{krizhevsky2017imagenet}, natural language processing~\cite{kenton2019bert} to game playing~\cite{mnih2015human,silver2016mastering}. Despite such a success, how deep representations can generalize to unseen scenarios and when they might fail remain a  black box. Deep representations are typically learned by a multi-layer network with cross-entropy (CE) loss optimized by stochastic gradient descent. In this simple setup, \cite{zhang2021understanding} has shown that zero loss can be achieved even with arbitrary label assignment. After continuing to train the neural network past zero loss with CE, \cite{papyan2020prevalence} discovers an intriguing phenomenon called neural collapse~(NC). NC can be summarized as the following characteristics:

\begin{itemize}[leftmargin=*,nosep]
\setlength\itemsep{0.28em}
    \item \textbf{Intra-class variability collapse}: Intra-class variability of last-layer features collapses to zero, indicating that all the features of the same class concentrate to their intra-class feature mean.
    \item \textbf{Convergence to simplex ETF}: After being centered at their global mean, the class-means are both linearly separable and maximally distant on a hypersphere. Formally, the class-means form a simplex equiangular tight frame (ETF) which is a symmetric structure defined by a set of maximally distant and pair-wise equiangular points on a hypersphere.
    \item \textbf{Convergence to self-duality}: The linear classifiers, which live in the dual vector space to that of the class-means, converge to their corresponding class-mean and also form a simplex ETF.
    \item \textbf{Nearest decision rule}: The linear classifiers behave like nearest class-mean classifiers.
\end{itemize}

The NC phenomenon suggests two general principles for deeply learned features and classifiers: minimal intra-class compactness of features (\ie, features of the same class collapse to a single point), and maximal inter-class separability of classifiers / feature mean (\ie, classifiers of different classes have maximal angular margins). While these two principles are largely independent, popular loss functions such as CE and square error (MSE) completely couple these two principles together. Since there is no trivial way for CE and MSE to decouple these two principles, we identify a novel quantity -- hyperspherical uniformity gap~(HUG), which not only characterizes intra-class feature compactness and inter-class classifier separability as a whole, but also fully decouples these two principles. The decoupling enables HUG to separately model intra-class compactness and inter-class separability, making it highly flexible. More importantly, HUG can be directly optimized and used to train neural networks, serving as an alternative loss function in place of CE and MSE for classification. HUG is formulated as the difference between inter-class and intra-class hyperspherical uniformity. Hyperspherical uniformity~\cite{liu2021learning} quantifies the uniformity of a set of vectors on a hypersphere and is used to capture how diverse these vectors are on a hypersphere. Thanks to the flexibility of HUG, we are able to use many different formulations to characterize hyperspherical uniformity, including (but not limited to) minimum hyperspherical energy~(MHE)~\cite{liu2018learning}, maximum hyperspherical separation~(MHS)~\cite{liu2021learning} and maximum gram determinant~(MGD)~\cite{liu2021learning}. Different formulations yield different interpretation and optimization difficulty (\eg, HUG with MHE is easy to optimize, HUG with MGD has interesting connection to geometric volume), thus leading to different performance.

Similar to CE loss, HUG also provably leads to NC under the setting of unconstrained features~\cite{mixon2022neural}. Going beyond NC, we hypothesize a generalized NC~(GNC) with hyperspherical uniformity, which extends the original NC to the scenario where there is no constraint for the number of classes and the feature dimension. NC requires the feature dimension no smaller than the number of classes while GNC no longer requires this. We further prove that HUG also leads to GNC at its objective minimum.

Another motivation behind HUG comes from the classic Fisher discriminant analysis~(FDA)~\cite{fisher1936use} where the basic idea is to find a projection matrix $\bm{T}$ that maximizes between-class variance and minimizes within-class variance. What if we directly optimize the input data (without any projection) rather than optimizing the linear projection in FDA? We make a simple derivation below:
\begin{equation}
\footnotesize
\begin{aligned}
\textbf{Projection FDA:}~~\max_{\bm{T}\in\mathbb{R}^{d\times r}}\text{tr}\left( \left(\bm{T}^\top\bm{S}_{w}\bm{T}\right)^{-1} \bm{T}^\top\bm{S}_{b}\bm{T}\right)~~~~~~~\textbf{Data FDA:}~~\max_{\bm{x}_1,\cdots,\bm{x}_n\in\mathbb{S}^{d-1}} \text{tr}\left(\bm{S}_{b}\right)-\text{tr}\left( \bm{S}_{w}\right)\nonumber
\end{aligned}
\end{equation}
where the between-class scatter matrix is $\thickmuskip=2mu \medmuskip=2mu\bm{S}_w=\sum_{i=1}^C\sum_{j\in A_c}( \bm{x}_j - \bm{\mu}_i ) ( \bm{x}_j - \bm{\mu}_i )^\top$, the within-class scatter matrix is $\thickmuskip=2mu \medmuskip=2mu\bm{S}_b=\sum_{i=1}^C n_i ( \bm{\mu}_i -\bar{\bm{\mu}} )( \bm{\mu}_i -\bar{\bm{\mu}} )^\top$, $n_i$ is the number of samples in the $i$-th class, $n$ is the total number of samples, $\thickmuskip=2mu \medmuskip=2mu \bm{\mu}_i= n_i^{-1}\sum_{j\in A_c}\bm{x}_j$ is the $i$-th class-mean, and $\thickmuskip=2mu \medmuskip=2mu \bar{\bm{\mu}}=n^{-1}\sum_{j=1}^n\bm{x}_j$ is the global mean. By considering class-balanced data on the unit hypersphere, optimizing data FDA is equivalent to simultaneously maximizing $\textnormal{tr}(\bm{S}_{b})$ and minimizing $\textnormal{tr}(\bm{S}_w)$. Maximizing $\textnormal{tr}(\bm{S}_{b})$ encourages inter-class separability and is a necessary condition for hyperspherical uniformity.\footnote{We first obtain the upper bound $n$ of $\textnormal{tr}(\bm{S}_{b})$ from $\thickmuskip=2mu \medmuskip=2mu\textnormal{tr}(\bm{S}_b)=\sum_{i=1}^C n_i\|\bm{\mu}_i-\bar{\bm{\mu}}\|_F^2\leq\sum_{i=1}^C n_i\|\bm{\mu}_i\|\cdot\|\bar{\bm{\mu}}\|\leq n$. Because a set of vectors $\{\bm{\mu}_i\}_{i=1}^n$ achieving hyperspherical uniformity has $\mathbb{E}_{\bm{\mu}_1,\cdots,\bm{\mu}_n}\{\|\bar{\bm{\mu}}\|\}\rightarrow \bm{0}$ (as $n$ grows larger)~\cite{garcia2018overview}. Then we have that $\textnormal{tr}(\bm{S}_b)$ attains $n$. Therefore, vectors achieving hyperspherical uniformity are one of its maximizers. $\textnormal{tr}(\bm{S}_w)$ can simultaneously attain its minimum if intra-class features collapse to a single point.} Minimizing $\textnormal{tr}(\bm{S}_w)$ encourages intra-class feature collapse, reducing intra-class variability. Therefore, HUG can be viewed a generalized FDA criterion for learning maximally discriminative features. 

However, one may ask the following questions: \emph{Why is HUG useful if we already have the FDA criterion? Could we simply optimize data FDA?} In fact, the FDA criterion has many degenerate solutions. For example, we consider a scenario of 10-class balanced data where all features from the first 5 classes collapse to the north pole on the unit hypersphere and features from the rest 5 classes collapse to the south pole on the unit hypersphere. In this case, $\textnormal{tr}(\bm{S}_w)$ is already minimized since it achieves the minimum zero. $\textnormal{tr}(\bm{S}_b)$ also achieves its maximum $n$ at the same time. In contrast, HUG naturally generalizes FDA without having these degenerate solutions and serves as a more reliable criterion for training neural networks. We summarize our contributions below:

\begin{itemize}[leftmargin=*,nosep]
\setlength\itemsep{0.4em}
    \item We decouple the NC phenomenon into two separate learning objectives: maximal inter-class separability (\ie, maximally distant class feature mean and classifiers on the hypersphere) and minimal intra-class variability (\ie, intra-class features collapse to a single point on the hypersphere). 
    \item Based on the two principled objectives induced by NC, we hypothesize the generalized NC which generalizes NC by dropping the constraint on the feature dimension and the number of classes.
    \item We identify a general quantity called hyperspherical uniformity gap, which well characterizes both inter-class separability and intra-class variability. Different from the widely used CE loss, HUG naturally decouples both principles and thus enjoys better modeling flexibility. 
    \item Under the HUG framework, we consider three different choices for characterizing hyperspherical uniformity: minimum hyperspherical energy, maximum hyperspherical separation and maximum Gram determinant. HUG provides a unified framework for using different characterizations of hyperspherical uniformity to design new loss functions. 
\end{itemize}


\section{On Generalizing and Decoupling Neural Collapse}

\vspace{-1.5mm}

NC describes an intriguing phenomenon for the distribution of last-layer features and classifiers in overly-trained neural networks, where both features and classifiers converge to ETF. However, ETF can only exist when the feature dimension $d$ and the number of classes $C$ satisfy $\thickmuskip=2mu \medmuskip=2mu d\geq C-1$. This is not always true for deep neural networks. For example, neural networks for face recognition are usually trained by classifying large number of classes (\eg, more than 85K classes in \cite{guo2016ms}), and the feature dimension (\eg, 512 in SphereFace~\cite{liu2017sphereface}) is usually much smaller than the number of classes. In general, when the number of classes is already large, it is prohibitive to use a larger feature dimension. Thus a question arises: \emph{what will happen in this case if a neural network is fully trained?}

\begin{wrapfigure}{r}{0.47\linewidth}
\vspace{-1.15em}
\centering
\includegraphics[width=0.99\linewidth]{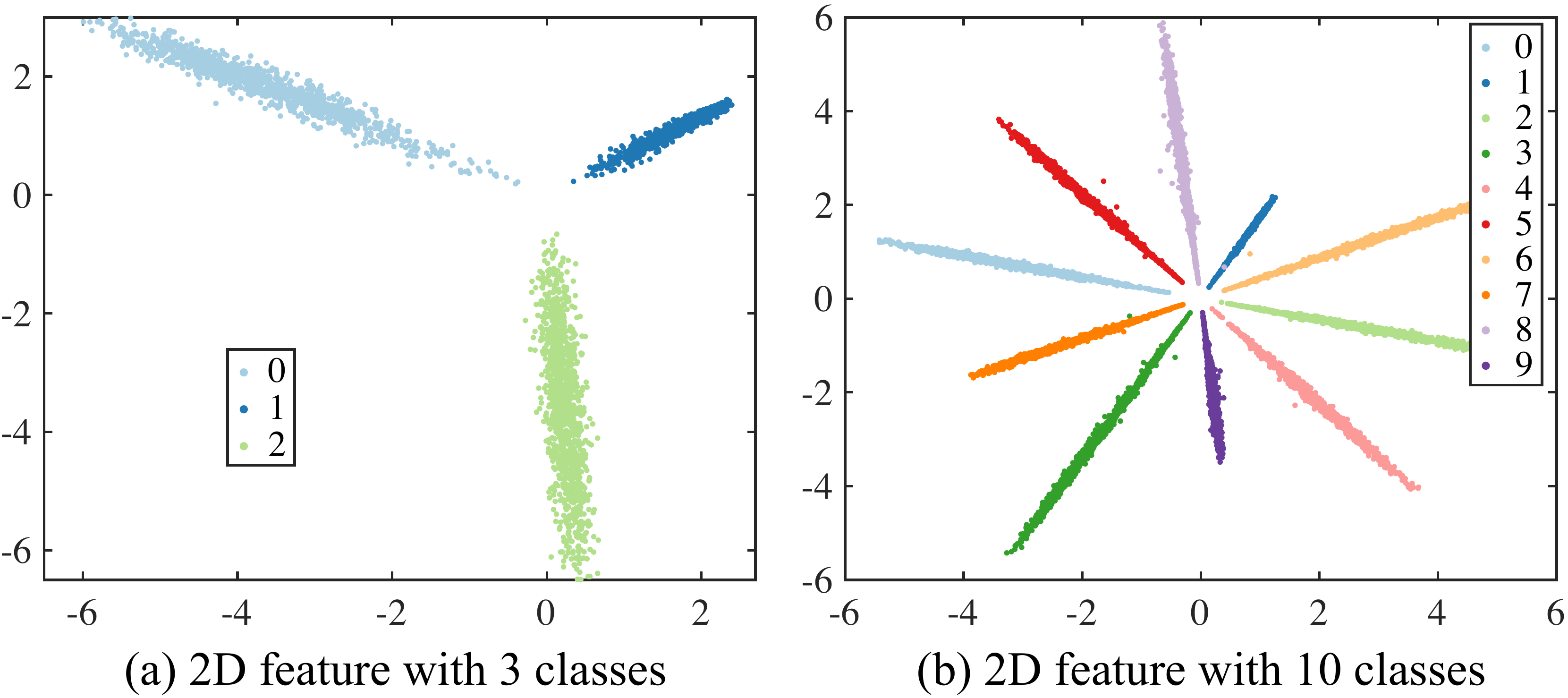}
  \vspace{-1.85em}
	\caption{\scriptsize 2D learned feature visualization on MNIST. The features are inherently 2-dimensional and are plotted without visualization tools. (a) Case 1: $\thickmuskip=2mu \medmuskip=2mu d=2, C=3$; (b) Case 2: $\thickmuskip=2mu \medmuskip=2mu d=2,C=10$.}
	\label{fig:mnist_vis}
\vspace{-0.7em}
\end{wrapfigure}

Motivated by this question, we conduct a simple experiment to simulate the case of $\thickmuskip=2mu \medmuskip=2mu d\geq C-1$ and the case of $\thickmuskip=2mu \medmuskip=2mu d<C-1$. Specifically, we train a convolutional neural network~(CNN) on MNIST with feature dimension 2. For the case of $\thickmuskip=2mu \medmuskip=2mu d\geq C-1$, we use only 3 classes (digit 0,1,2) as the training set. For the case of $\thickmuskip=2mu \medmuskip=2mu d< C-1$, we use all 10 classes as the training set. We visualize the learned features of both cases in Figure~\ref{fig:mnist_vis}. The results verify the case of $\thickmuskip=2mu \medmuskip=2mu d\geq C-1$ indeed approaches to NC, and ETF does not exist in the case of $\thickmuskip=2mu \medmuskip=2mu d< C-1$. Interestingly, one can observe that learned features in both cases approach to the configuration of equally spaced frames on the hypersphere. To accommodate the case of $\thickmuskip=2mu \medmuskip=2mu d< C-1$, we extend NC to the generalized NC by hypothesizing that last-layer inter-class features and classifiers converge to equally spaced points on the hypersphere, which can be characterized by hyperspherical uniformity.

\vspace{-0.9mm}
\begin{mdframed}[frametitle={\colorbox{white}{\space Generalized Neural Collapse (GNC) \space}},
frametitleaboveskip=-2.6mm,
linewidth=0.8pt,
frametitlealignment=\center
]
\vspace{-5.1mm}
We define the feature global mean as $\thickmuskip=2mu \medmuskip=2mu \bm{\mu}_G=\text{Ave}_{i,c}\bm{x}_{i,c}$ where $\thickmuskip=2mu \medmuskip=2mu \bm{x}_{i,c}\in\mathbb{R}^d$ is the last-layer feature of the $i$-th sample in the $c$-th class, the feature class-mean as $\thickmuskip=2mu \medmuskip=2mu \bm{\mu}_c=\text{Ave}_{i}\bm{x}_{i,c}$ for different classes $\thickmuskip=2mu \medmuskip=2mu c\in\{1,\cdots,C\}$, the feature within-class covariance as $\thickmuskip=2mu \medmuskip=2mu \bm{\Sigma}_W=\text{Ave}_{i,c}(\bm{x}_{i,c}-\bm{\mu}_c)(\bm{x}_{i,c}-\bm{\mu}_c)^\top$ and the feature between-class covariance as $\thickmuskip=2mu \medmuskip=2mu \bm{\Sigma}_B=\text{Ave}_{c}(\bm{\mu}_{c}-\bm{\mu}_G)(\bm{\mu}_{c}-\bm{\mu}_G)^\top$. GNC states that
\vspace{-0.425mm}
\begin{itemize}[leftmargin=*,nosep]
\setlength\itemsep{0.4em}
    \item \textbf{(1) Intra-class variability collapse}: Intra-class variability of last-layer features collapse to zero, indicating that all the features of the same class converge to their intra-class feature mean. Formally, GNC has that $\thickmuskip=2mu \medmuskip=2mu \bm{\Sigma}_B^{\dagger}\bm{\Sigma}_W\rightarrow \bm{0}$ where $\dagger$ denotes the Moore-Penrose pseudoinverse.
    \item \textbf{(2) Convergence to hyperspherical uniformity}: After being centered at their global mean, the class-means are both linearly separable and maximally distant on a hypersphere. Formally, the class-means converge to equally spaced points on a hypersphere, \ie,
    
    \vspace{-3.5mm}
    \begin{equation}\label{eq:gnc}
    \footnotesize
        \sum_{c\neq c'}K(\hat{\bm{\mu}}_c,\hat{\bm{\mu}}_{c'})\rightarrow \min_{\hat{\bm{\mu}}_1,\cdots,\hat{\bm{\mu}}_C}\sum_{c\neq c'}K(\hat{\bm{\mu}}_c,\hat{\bm{\mu}}_{c'}),~~~~\|\bm{\mu}_c-\bm{\mu}_G\|-\|\bm{\mu}_{c'}-\bm{\mu}_G\|\rightarrow 0,~\forall c\neq c'
    \end{equation}
    \vspace{-3.5mm}

    where $\thickmuskip=2mu \medmuskip=2mu \hat{\bm{\mu}}_i=\|\bm{\mu}_i-\bm{\mu}_G\|^{-1}(\bm{\mu}_i-\bm{\mu}_G)$ and $K(\cdot,\cdot)$ is a kernel function that models pairwise interaction. Typically, we consider Riesz $s$-kernel $\thickmuskip=2mu \medmuskip=2mu K_s(\hat{\bm{\mu}}_c,\hat{\bm{\mu}}_{c'})={\text{sign}(s)}\cdot\|\hat{\bm{\mu}}_c-\hat{\bm{\mu}}_{c'}\|^{-s}$ or logarithmic kernel $\thickmuskip=2mu \medmuskip=2mu K_{\text{log}}(\hat{\bm{\mu}}_c,\hat{\bm{\mu}}_{c'})=\log\|\hat{\bm{\mu}}_c-\hat{\bm{\mu}}_{c'}\|^{-1}$. For example,
    the Riesz $s$-kernel with $\thickmuskip=2mu \medmuskip=2mu s=d-2$ is a variational characterization of hyperspherical uniformity (\eg, {hyperspherical energy}~\cite{liu2018learning}) using Newtonian potentials. With $\thickmuskip=2mu \medmuskip=2mu d=3,s=1$, the Riesz kernel is called Coulomb potential and the problem of finding minimal coulomb energy is called Thomson problem~\cite{thomson1904xxiv}.
    \item \textbf{(3) Convergence to self-duality}: The linear classifiers, which live in the dual vector space to that of the class-means, converge to their corresponding class-means, leading to hyperspherical uniformity. Formally, GNC has that $\thickmuskip=2mu \medmuskip=2mu \|\bm{w}_c\|^{-1}\bm{w}_c-\hat{\bm{\mu}}_c\rightarrow 0$ where $\thickmuskip=2mu \medmuskip=2mu \bm{w}_c\in\mathbb{R}^d$ is the $c$-th classifier.
    \item \textbf{(4) Nearest decision rule}: The learned linear classifiers behave like the nearest class-mean classifiers. Formally, GNC has that $\thickmuskip=2mu \medmuskip=2mu \arg\max_{c}\langle\bm{w}_c,\bm{x}\rangle+b_c\rightarrow\arg\min_{c}\|\bm{x}-\bm{\mu}_c\|$.
\end{itemize}
\end{mdframed}
\vspace{-3.1mm}

In contrast to NC, GNC further considers the case of $\thickmuskip=2mu \medmuskip=2mu d<C-1$ and hypothesizes that both feature class-means and classifiers converge to hyperspherically uniform point configuration that minimizes some form of pairwise potentials. Similar to how NC connects tight frame theory~\cite{waldron2018introduction} to deep learning, our GNC hypothesis connects potential theory~\cite{borodachov2019discrete} to deep learning, which may shed new light on understanding it. We show in Theorem~\ref{thm:rso} that GNC reduces to NC in the case of $\thickmuskip=2mu \medmuskip=2mu d\geq C-1$.

\vspace{-0.9mm}

\begin{theorem}[Regular Simplex Optimum for GNC]\label{thm:rso}
    Let $f:(0,4]\rightarrow \mathbb{R}$ be a convex and decreasing function defined at $\thickmuskip=2mu \medmuskip=2mu v=0$ by $\lim_{v\rightarrow0^+}f(v)$. If $\thickmuskip=2mu \medmuskip=2mu 2\leq C\leq d+1$, then we have that the vertices of regular $\thickmuskip=2mu \medmuskip=2mu(C-1)$-simplices inscribed in $\mathbb{S}^{d-1}$ with centers at the origin (equivalent to simplex ETF) minimize the hyperspherical energy $\sum_{c\neq c'}K(\hat{\bm{\mu}}_c,\hat{\bm{\mu}}_{c'})$ on the unit hypersphere $\mathbb{S}^{d-1}$ ($d\geq3$) with the kernel as $\thickmuskip=2mu \medmuskip=2mu K(\hat{\bm{\mu}}_c,\hat{\bm{\mu}}_{c'})=f(\|\hat{\bm{\mu}}_{c}-\hat{\bm{\mu}}_{c'}\|^2)$. If $f$ is strictly convex and strictly decreasing, then these are the only energy minimizing $C$-point configurations. Thus GNC reduces to NC when $\thickmuskip=2mu \medmuskip=2mu d\geq C-1$.
\end{theorem}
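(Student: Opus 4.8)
The plan is to reduce the minimization to a one-dimensional inequality in the pairwise squared distances $v_{cc'}=\|\hat{\bm{\mu}}_c-\hat{\bm{\mu}}_{c'}\|^2$, exploiting convexity together with a single global linear constraint. First I would record the elementary identity $v_{cc'}=2-2\langle\hat{\bm{\mu}}_c,\hat{\bm{\mu}}_{c'}\rangle$, valid because the $\hat{\bm{\mu}}_c$ are unit vectors; this converts inner products into squared distances and keeps every $v_{cc'}$ inside the domain $(0,4]$ of $f$ (with $v=0$ only for coincident points, where $f(0)=\lim_{v\to0^+}f(v)$ can only be larger, so such configurations never minimize).

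The crucial ingredient is a ``balance'' constraint. Since $\|\sum_c\hat{\bm{\mu}}_c\|^2=\sum_{c,c'}\langle\hat{\bm{\mu}}_c,\hat{\bm{\mu}}_{c'}\rangle\geq 0$ and each diagonal term equals $1$, expanding yields $\sum_{c\neq c'}\langle\hat{\bm{\mu}}_c,\hat{\bm{\mu}}_{c'}\rangle\geq -C$, i.e. $\sum_{c\neq c'}v_{cc'}\leq 2C^2$. Equivalently, the average off-diagonal squared distance $\bar v:=\frac{1}{C(C-1)}\sum_{c\neq c'}v_{cc'}$ satisfies $\bar v\leq\frac{2C}{C-1}$, with equality exactly when the centroid $\sum_c\hat{\bm{\mu}}_c$ vanishes.

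Then I would apply Jensen's inequality. Because $f$ is convex, $\sum_{c\neq c'}f(v_{cc'})\geq C(C-1)\,f(\bar v)$; because $f$ is decreasing and $\bar v\leq\frac{2C}{C-1}\leq 4$ (the last step using $C\geq 2$), we get $f(\bar v)\geq f\!\left(\frac{2C}{C-1}\right)$. Chaining these gives the universal lower bound $\sum_{c\neq c'}K(\hat{\bm{\mu}}_c,\hat{\bm{\mu}}_{c'})\geq C(C-1)\,f\!\left(\frac{2C}{C-1}\right)$. It remains to check attainment: for the $C$ vertices of a regular $(C-1)$-simplex inscribed in $\mathbb{S}^{d-1}$ one has $\langle\hat{\bm{\mu}}_c,\hat{\bm{\mu}}_{c'}\rangle=-\frac{1}{C-1}$ for all $c\neq c'$, hence every $v_{cc'}=\frac{2C}{C-1}$ and the bound is met with equality. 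The hypothesis $C\leq d+1$ is precisely what guarantees that such a simplex fits on $\mathbb{S}^{d-1}$, so the minimizer exists.

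For the uniqueness claim under strict convexity and strict monotonicity I would trace the two equality conditions separately. Strict convexity forces equality in Jensen only when all $v_{cc'}$ coincide (an equidistant set), while strict monotonicity forces $\bar v=\frac{2C}{C-1}$, i.e. $\sum_c\hat{\bm{\mu}}_c=\bm{0}$. An equidistant $C$-point set on the sphere has common inner product $t$ and Gram matrix $(1-t)\bm{I}+t\bm{J}$; its centroid vanishes precisely when $t=-\frac{1}{C-1}$, which fixes the Gram matrix and hence the configuration up to an orthogonal transformation, namely the regular simplex. Since $d\geq C-1$ is identical to $C\leq d+1$ and the resulting configuration is the simplex ETF, GNC's condition (2) collapses to the NC condition. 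I expect the main obstacle to be this equality analysis: one must verify that the two conditions are simultaneously \emph{necessary} and that together they rule out all non-simplex equidistant configurations (those with $t>-\frac{1}{C-1}$, which satisfy Jensen-equality but fail the centroid constraint), rather than merely being sufficient for optimality.
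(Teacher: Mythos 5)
Your proposal is correct and follows essentially the same route as the paper's proof: the identity $\sum_{c\neq c'}\|\hat{\bm{\mu}}_c-\hat{\bm{\mu}}_{c'}\|^2=2C^2-2\|\sum_c\hat{\bm{\mu}}_c\|^2\leq 2C^2$, Jensen's inequality via convexity of $f$, monotonicity to push the average distance to $\frac{2C}{C-1}$, and attainment by the regular simplex. Your equality analysis for uniqueness (Gram matrix $(1-t)\bm{I}+t\bm{J}$ with the centroid condition forcing $t=-\frac{1}{C-1}$) is in fact slightly more explicit than the paper's, which asserts the uniqueness conclusion without spelling out that step.
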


\begin{wrapfigure}{r}{0.35\linewidth}
\vspace{-1.6em}
\centering
\includegraphics[width=0.99\linewidth]{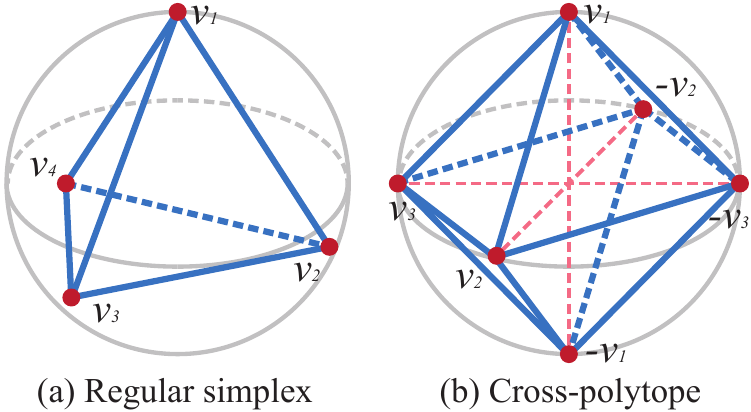}
  \vspace{-1.85em}
	\caption{\scriptsize Geometric illustration in $\mathbb{R}^3$ of (a) regular simplex optimum (equivalent to simplex ETF in NC) and (b) cross-polytope optimum in GNC.}
	\label{fig:simpcrossp}
\vspace{-0.9em}
\end{wrapfigure}
We note that Theorem~\ref{thm:polytope} guarantees the simplex ETF as the minimizer of a general family of hyperspherical energies (as long as $f$ is convex and decreasing). This suggests that there are many possible kernel functions $K(\cdot,\cdot)$ in GNC that can effectively generalize NC. The case of $\thickmuskip=2mu \medmuskip=2mu d<C-1$ is where GNC really gets interesting but complicated. Other than the regular simplex case, we also highlight a special uniformity case of $\thickmuskip=2mu \medmuskip=2mu 2d=C$. In this case, we can prove in Theorem~\ref{thm:polytope} that GNC(2) converges to the vertices of a cross-polytope as hyperspherical energy gets minimized. As the number of classes gets infinitely large, we show in Theorem~\ref{thm:asyc} that GNC(2) leads to a point configuration that is uniformly distributed on $\mathbb{S}^{d-1}$. Additionally, we show a simple yet interesting result in Proposition~\ref{thm:randint} that the last-layer classifiers are already initialized to be uniformly distributed on the hypersphere in practice. 

\vspace{-1mm}
\begin{theorem}[Cross-polytope Optimum for GNC]\label{thm:polytope}
    If $\thickmuskip=2mu \medmuskip=2mu C=2d$, then the vertices of the cross-polytope are the minimizer of the hyperspherical energy in GNC(2).
\end{theorem}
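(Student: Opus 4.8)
The plan is to establish the \emph{universal optimality} of the cross-polytope through a linear-programming (Delsarte--Yudin) argument in the potential-theoretic framework of \cite{borodachov2019discrete}, so that the conclusion holds simultaneously for the Riesz, logarithmic and Newtonian kernels appearing in GNC(2). First I would fix coordinates so that the cross-polytope is realized as $\{\pm\bm{e}_1,\dots,\pm\bm{e}_d\}\subset\mathbb{S}^{d-1}$ and record its two structural features. (i) It is a two-distance set: the pairwise inner products $\langle\bm{x}_i,\bm{x}_j\rangle$ take only the values $-1$ on the $d$ antipodal pairs and $0$ on the remaining off-axis pairs. (ii) Writing $\bm{M}=\sum_{i=1}^{2d}\bm{x}_i\bm{x}_i^\top$, it satisfies $\bm{M}=2\bm{I}$ and $\sum_i\bm{x}_i=\bm{0}$, i.e.\ it is an antipodal spherical $2$-design (hence a $3$-design, since antipodal symmetry annihilates all odd moments). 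These identities make it a \emph{sharp configuration}, which is precisely the situation in which energy minimization can be certified by a single auxiliary polynomial.

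Second, I would extract the two moment inequalities that drive the bound. For any $N=2d$ unit vectors in $\mathbb{R}^d$, Cauchy--Schwarz on the eigenvalues of $\bm{M}$ gives $\sum_{i,j}\langle\bm{x}_i,\bm{x}_j\rangle^2=\mathrm{tr}(\bm{M}^2)\ge(\mathrm{tr}\,\bm{M})^2/d=4d$, hence $\sum_{i\neq j}\langle\bm{x}_i,\bm{x}_j\rangle^2\ge 2d$ with equality iff $\bm{M}=2\bm{I}$; similarly $\sum_{i\neq j}\langle\bm{x}_i,\bm{x}_j\rangle=\|\sum_i\bm{x}_i\|^2-2d\ge-2d$ with equality iff $\sum_i\bm{x}_i=\bm{0}$. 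Both bounds are attained exactly by the cross-polytope by feature (ii).

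Third, writing the kernel as a function $g$ of the inner product $t=\langle\hat{\bm{\mu}}_c,\hat{\bm{\mu}}_{c'}\rangle\in[-1,1]$, I would construct a quadratic $h(t)=h_0+h_1 t+h_2 t^2$ that (a) satisfies $h(t)\le g(t)$ on $[-1,1]$, (b) matches $g$ at the two orthoplex values $t\in\{-1,0\}$, and (c) has $h_1\ge0$ and $h_2\ge0$. The natural choice is the Hermite interpolant with a double node at $t=0$ and a simple node at $t=-1$. Combining (a)--(c) with the two moment inequalities yields
\[
\sum_{c\neq c'}g(t_{cc'})\ge\sum_{c\neq c'}h(t_{cc'})= 2d(2d-1)\,h_0 - 2d\,h_1 + 2d\,h_2,
\]
and the right-hand side is exactly the cross-polytope energy, since there $g$ agrees with $h$ at its only inner-product values $\{-1,0\}$. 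This certifies that the cross-polytope is a global minimizer; strict convexity/monotonicity of $g$ makes each moment inequality strict unless $\bm{M}=2\bm{I}$, $\sum_i\bm{x}_i=\bm{0}$, and all inner products lie in $\{-1,0\}$, which pins down the minimizer up to an orthogonal transformation.

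The main obstacle is step three: verifying that the interpolating quadratic genuinely lies below the kernel on all of $[-1,1]$ \emph{and} has the required sign pattern $h_1,h_2\ge0$. Equivalently, in the LP language, one must check that the Gegenbauer coefficients of $g-h$ are nonnegative and that $g-h$ is sign-definite with the prescribed contact set. For completely monotonic potentials (Riesz $s>0$, logarithmic, Newtonian), this is guaranteed because the relevant divided differences of $g$ are sign-definite; this is the analytic content underlying universal optimality of antipodal sharp configurations. Extra care is needed at the boundary node $t=-1$: being an endpoint, $h\le g$ there requires only the one-sided condition $(g-h)'(-1)\ge0$, which I would verify from antipodal symmetry rather than a double-contact condition. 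For the broader convex-and-decreasing class of Theorem~\ref{thm:rso}, I would note that a single tangent line (as used for the simplex) can no longer interpolate both distances, so the quadratic certificate---and hence the stronger monotonicity of $g$---is genuinely needed.
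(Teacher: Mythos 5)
Your proposal is correct, and it reaches the same destination as the paper by unpacking the black box the paper relies on. The paper's proof verifies that the cross-polytope is a \emph{sharp code} (a $2$-distance set with inner products in $\{-1,0\}$ that is a spherical $3$-design) and then invokes the Cohn--Kumar universal optimality theorem \cite{cohn2007universally} wholesale. You instead re-derive the relevant special case of that theorem from scratch: your two moment inequalities $\sum_{i\neq j}\langle\bm{x}_i,\bm{x}_j\rangle\ge-2d$ and $\sum_{i\neq j}\langle\bm{x}_i,\bm{x}_j\rangle^2\ge 2d$ are exactly the degree-$1$ and degree-$2$ Gegenbauer positivity constraints (in the monomial basis $h_1,h_2\ge0$ is equivalent to nonnegativity of the corresponding Gegenbauer coefficients), and the quadratic Hermite interpolant with nodes $\{-1,0,0\}$ is precisely the Cohn--Kumar auxiliary polynomial for this configuration; the remainder formula $g(t)-h(t)=g[-1,0,0,t]\,(t+1)t^2\ge0$ closes step three since the GNC(2) kernel $g(t)=(2-2t)^{-s/2}$ is absolutely monotone on $[-1,1)$. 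What your route buys is a self-contained, checkable certificate, a uniqueness statement (up to orthogonal transformation) that the paper does not claim, and the correct observation that the weaker convex-and-decreasing hypothesis of Theorem~\ref{thm:rso} cannot suffice here because a single tangent line cannot interpolate two distinct contact values -- so complete monotonicity is genuinely used. What the paper's route buys is brevity and immediate generality to all absolutely monotone potentials without coefficient checks. The only cosmetic quibble is your extra one-sided derivative condition at $t=-1$: with simple contact there, the factor $(t+1)\ge0$ in the remainder already handles the endpoint, so no additional verification is needed.
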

\vspace{-1mm}

The cross-polytope optimum for GNC(2) is in fact quite intuitive, because it corresponds to the Cartesian coordinate system (up to a rotation). For example, the vertices of the unit cross-polytope in $\mathbb{R}^3$ are $(\pm 1,0,0),(0,\pm 1,0),(0,0,\pm 1)$. These 6 vectors minimize the hyperspherical energy on $\mathbb{S}^2$. We illustrate both the regular simplex and cross-polytope cases in Figure~\ref{fig:simpcrossp}. For the other cases of $\thickmuskip=2mu \medmuskip=2mu d<C-1$, there exists generally no simple and universal point structure that minimizes the hyperspherical energy, as heavily studied in \cite{cohn2007universally,hardin2004discretizing,kuijlaars1998asymptotics,saff1997distributing}. For the point configurations that asymptotically minimize the hyperspherical energy as $C$ grows larger, Theorem~\ref{thm:asyc} can guarantee that these configurations asymptotically converge to a uniform distribution on the hypersphere.

\vspace{-1mm}
\begin{theorem}[Asymptotic Convergence to Hyperspherical Uniformity]\label{thm:asyc}
    Consider a sequence of point configurations $\{\hat{\bm{\mu}}_1^C,\cdots,\hat{\bm{\mu}}_C^C\}_{C=2}^{\infty}$ that asymptotically minimizes the hyperspherical energy on $\mathbb{S}^{d-1}$ as $C\rightarrow\infty$, then $\{\hat{\bm{\mu}}_1^C,\cdots,\hat{\bm{\mu}}_C^C\}_{C=2}^{\infty}$ is uniformly distributed on the hypersphere $\mathbb{S}^{d-1}$.
\end{theorem}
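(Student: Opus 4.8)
The plan is to lift the discrete minimization to potential theory and exploit the standard correspondence between asymptotically energy-minimizing configurations and the continuous equilibrium measure. First I would introduce the continuous energy functional $I_K[\mu]=\int\int K(\bm{x},\bm{y})\,d\mu(\bm{x})\,d\mu(\bm{y})$ over Borel probability measures $\mu$ on $\mathbb{S}^{d-1}$, together with the Wiener constant $W_K=\inf_\mu I_K[\mu]$. I work in the integrable regime (the Riesz $s$-kernel with $0<s<d-1$, which includes the Newtonian case $s=d-2$ highlighted in GNC(2), as well as the logarithmic kernel), where $I_K[\sigma]$ is finite for the normalized surface measure $\sigma$ on $\mathbb{S}^{d-1}$. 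Since $K$ depends only on $\|\bm{x}-\bm{y}\|$ and is strictly positive definite on the sphere in this range, $I_K$ is strictly convex on the space of probability measures, so it has a unique minimizer; rotation invariance of both the kernel and the sphere then forces that minimizer to be $\sigma$. The theorem thus reduces to showing that the empirical measures $\nu_C=\frac{1}{C}\sum_{c=1}^{C}\delta_{\hat{\bm{\mu}}_c^C}$ converge weak-$*$ to $\sigma$, which is precisely the meaning of asymptotic uniform distribution.

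Second, I would match the discrete and continuous energies, proving $\lim_{C\to\infty} C^{-2}\mathcal{E}_K(C)=W_K$, where $\mathcal{E}_K(C)$ is the minimal discrete hyperspherical energy over $C$ points. For the upper bound I would use a first-moment argument: drawing $C$ points i.i.d.\ from $\sigma$, the expected off-diagonal energy is exactly $C(C-1)\,I_K[\sigma]$, so some configuration attains $C^{-2}E_K\le \frac{C-1}{C}I_K[\sigma]$, giving $\limsup_C C^{-2}\mathcal{E}_K(C)\le W_K$. For the lower bound I would truncate, setting $K_M=\min(K,M)$, a bounded continuous kernel, so that $C^{-2}E_K(\{\hat{\bm{\mu}}_c^C\})\ge \int\int K_M\,d\nu_C\,d\nu_C-\tfrac{M}{C}$; combining weak-$*$ convergence with monotone convergence as $M\to\infty$ recovers the matching lower bound.

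Third, and at the heart of the argument, I would establish the weak-$*$ convergence of the empirical measures. Because $\mathbb{S}^{d-1}$ is compact, $\mathcal{P}(\mathbb{S}^{d-1})$ is weak-$*$ sequentially compact, so it suffices to show every limit point $\nu$ of $\{\nu_C\}$ equals $\sigma$. Passing to a convergent subsequence $\nu_{C_k}\rightharpoonup\nu$ and using the truncated kernels again, for each fixed $M$ I would write $\int\int K_M\,d\nu\,d\nu=\lim_k \int\int K_M\,d\nu_{C_k}\,d\nu_{C_k}\le \liminf_k\big(C_k^{-2}E_K+ M/C_k\big)=W_K$, and then let $M\to\infty$ by monotone convergence to obtain $I_K[\nu]\le W_K$. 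Since $W_K$ is the infimum, $\nu$ is a continuous-energy minimizer, hence $\nu=\sigma$ by uniqueness. As every subsequence admits a further subsequence with limit $\sigma$, the whole sequence $\nu_C$ converges weak-$*$ to $\sigma$.

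The main obstacle is the diagonal singularity of the Riesz kernel, which prevents a direct application of weak-$*$ convergence because $K$ is neither bounded nor continuous there. This is exactly why the truncation $K_M=\min(K,M)$, the careful accounting of the excluded diagonal through the $M/C$ correction, and the final monotone-convergence passage are all necessary. A secondary point is justifying strict positive definiteness and hence uniqueness of the equilibrium measure, which is classical for $0<s<d-1$ and for the logarithmic kernel and reduces the identification of the minimizer to the symmetry argument giving $\sigma$. Outside the integrable regime ($s\ge d-1$) the continuous energy diverges and the above variational scheme fails; there one would instead invoke the hypersingular minimal-energy asymptotics of Hardin and Saff to reach the same uniform limit, so I would either restrict to $s<d-1$ or cite that machinery.
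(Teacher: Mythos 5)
Your proposal is correct, and it in fact proves the classical potential-theory result that the paper's own proof merely cites: the appendix proof of this theorem consists of quoting the general statement (for compact $A$ and $0<s<\dim A$, the normalized counting measures of asymptotically $s$-energy-minimizing configurations converge weak-$*$ to the unique equilibrium measure, with $\varepsilon_s(A,N)/N^2\to W_s(A)$) from \cite{hardin2004discretizing,kuijlaars1998asymptotics,saff1997distributing,borodachov2019discrete} and specializing to $A=\mathbb{S}^{d-1}$. Your argument --- i.i.d.\ sampling from $\sigma$ for the upper bound $\limsup_C C^{-2}\mathcal{E}_K(C)\le I_K[\sigma]$, truncation $K_M=\min(K,M)$ with the $M/C$ diagonal correction for the lower bound, weak-$*$ compactness plus monotone convergence to show every subsequential limit $\nu$ satisfies $I_K[\nu]\le W_K$, and strict positive definiteness plus rotation invariance to identify the unique minimizer as $\sigma$ --- is precisely the standard proof of that cited theorem, so the mathematical content is the same; what you buy is a self-contained argument, at the cost of length. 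One point in your favor worth highlighting: you correctly flag that the scheme requires the integrable regime $0<s<d-1$, whereas the paper applies the result with $s=2$ under the stated condition $d-1>s$, i.e.\ $d\ge 4$; for $d=3$ the case $s=2=d-1$ is hypersingular ($W_s(\mathbb{S}^2)=\infty$), the variational argument genuinely fails, and one must instead invoke the Hardin--Saff hypersingular asymptotics, a caveat the paper's one-line specialization does not make explicit.
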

\vspace{-2.8mm}

\begin{proposition}[Minimum Energy Initialization]\label{thm:randint}
    With zero-mean Gaussian initialization (e.g., \cite{glorot2010understanding,he2015delving}), the $C$ last-layer classifiers of neural networks are initialized as a uniform distribution on the hypersphere. The expected initial energy is $\thickmuskip=2mu \medmuskip=2mu C(C-1)\int_{\mathbb{S}^{d-1}}\int_{\mathbb{S}^{d-1}}\|\hat{\bm{\mu}}_c-\hat{\bm{\mu}}_{c'}\|^{-2}d \sigma_{d-1}(\hat{\bm{\mu}}_c)d \sigma_{d-1}(\hat{\bm{\mu}}_{c'})$.
\end{proposition}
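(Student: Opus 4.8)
The plan is to split the statement into two independent claims: (i) that an isotropic (zero-mean, equal-variance) Gaussian initialization induces, after normalization, the uniform distribution on $\mathbb{S}^{d-1}$; and (ii) that the expected energy of such i.i.d.\ directions equals the stated double integral scaled by $C(C-1)$. The first claim supplies the distributional fact; the second is then a one-line consequence of linearity of expectation.

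First I would establish the uniformity claim. Under the standard schemes \cite{glorot2010understanding,he2015delving}, each last-layer classifier $\bm{w}_c\in\R^d$ has i.i.d.\ entries drawn from $\mathcal{N}(0,\sigma^2)$, so its density $p(\bm{w})\propto\exp(-\norm{\bm{w}}^2/(2\sigma^2))$ depends on $\bm{w}$ only through $\norm{\bm{w}}$ and is therefore invariant under every orthogonal transformation $\bm{Q}\in O(d)$. Hence $\bm{Q}\bm{w}_c\stackrel{d}{=}\bm{w}_c$, which forces $\bm{Q}\hat{\bm{w}}_c\stackrel{d}{=}\hat{\bm{w}}_c$ for the normalized direction $\hat{\bm{w}}_c=\norm{\bm{w}_c}^{-1}\bm{w}_c$. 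Since the normalized surface measure $\sigma_{d-1}$ is the unique $O(d)$-invariant probability measure on $\mathbb{S}^{d-1}$, it follows that $\hat{\bm{w}}_c\sim\sigma_{d-1}$. Equivalently, writing Lebesgue measure in polar form $d\bm{w}=r^{d-1}\,dr\,d\sigma_{d-1}(\hat{\bm{w}})$ shows that the radial and angular parts of $\bm{w}_c$ factorize with the angular part uniform. Because the classifiers are initialized independently, the directions $\hat{\bm{w}}_1,\dots,\hat{\bm{w}}_C$ are i.i.d.\ samples from $\sigma_{d-1}$.

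Next I would compute the expectation. Writing the initial energy as $E=\sum_{c\neq c'}\norm{\hat{\bm{w}}_c-\hat{\bm{w}}_{c'}}^{-2}$, i.e.\ the GNC(2) energy with the Riesz $s$-kernel at $s=2$, linearity of expectation gives $\E[E]=\sum_{c\neq c'}\E\bigl[\norm{\hat{\bm{w}}_c-\hat{\bm{w}}_{c'}}^{-2}\bigr]$. For each of the $C(C-1)$ ordered pairs, independence makes the joint law of $(\hat{\bm{w}}_c,\hat{\bm{w}}_{c'})$ equal to the product measure $\sigma_{d-1}\otimes\sigma_{d-1}$, so every summand equals the same double integral $\int_{\mathbb{S}^{d-1}}\int_{\mathbb{S}^{d-1}}\norm{\hat{\bm{\mu}}_c-\hat{\bm{\mu}}_{c'}}^{-2}\,d\sigma_{d-1}(\hat{\bm{\mu}}_c)\,d\sigma_{d-1}(\hat{\bm{\mu}}_{c'})$. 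Summing the $C(C-1)$ identical terms yields precisely the claimed expression.

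The only delicate point, and where I would focus the care, is the well-definedness of this expectation rather than the probabilistic core. The diagonal $\{\hat{\bm{\mu}}_c=\hat{\bm{\mu}}_{c'}\}$, where the kernel diverges, carries product measure zero because two independent continuous directions coincide with probability zero; moreover the Riesz singularity of order $s=2$ remains integrable against the $(d-1)$-dimensional surface measure exactly when $s<d-1$, which can be checked in geodesic normal coordinates where $d\sigma_{d-1}$ behaves like $\rho^{d-2}\,d\rho$ near the pole and the chordal distance is comparable to $\rho$. I expect this integrability bookkeeping to be the main obstacle; the rotational-invariance argument and the linearity-of-expectation computation are otherwise routine.
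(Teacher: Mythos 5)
Your proposal is correct and, for the part of the statement the paper actually proves, takes the same route: the paper's appendix proof establishes uniformity of the normalized classifiers exactly via orthogonal invariance of the isotropic Gaussian (a lemma that $\bm{U}\bm{x}\stackrel{d}{=}\bm{x}$ for orthogonal $\bm{U}$, then the conclusion that $\bm{x}/\|\bm{x}\|$ is uniform on $\mathbb{S}^{d-1}$), which is precisely your rotational-invariance argument. Where you go beyond the paper is in the second half: the paper's proof stops at uniformity and never derives the stated expected-energy formula, whereas you supply the (routine but necessary) linearity-of-expectation step over the $C(C-1)$ ordered pairs, using independence to reduce each summand to the product-measure double integral. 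You also correctly flag the integrability issue that the paper ignores: the Riesz singularity at $s=2$ is integrable against $\sigma_{d-1}\otimes\sigma_{d-1}$ only when $s<d-1$, i.e.\ $d\geq 4$, so for $d=2,3$ the ``expected initial energy'' as written is actually $+\infty$. That caveat is a genuine (if minor) sharpening of the proposition as stated, and your proof is complete modulo that dimension restriction.
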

\vspace{-1mm}

With Proposition~\ref{thm:randint}, one can expect that the hyperspherical energy of the last-layer classifiers will first increase and then decrease to a lower value than the initial energy. To validate the effectiveness of our GNC hypothesis, we conduct a few experiments to show how both class feature means and classifiers converge to hyperspherical uniformity (\ie, minimizing the hyperspherical energy), and how intra-class feature variability collapses to almost zero. We start with an intuitive understanding about GNC from Figure~\ref{fig:mnist_vis}. The results are directly produced by the learned features without any visualization tool (such as t-SNE~\cite{van2008visualizing}), so the feature distribution can reflect the underlying one learned by neural networks. We observe that GNC is attained in both $\thickmuskip=2mu \medmuskip=2mu d<C-1$ and $\thickmuskip=2mu \medmuskip=2mu d\geq C-1$, while NC is violated in $\thickmuskip=2mu \medmuskip=2mu d<C-1$ since the learned feature class-means can no longer form a simplex ETF. To see whether the same conclusion holds for higher feature dimensions, we also train two CNNs on CIFAR-100 with feature dimension as 64 and 128, respectively. The results are given in Figure~\ref{fig:mnist_cifar_dyn}.

\begin{figure}[h]
\vspace{-1mm}
  \centering
  \setlength{\abovecaptionskip}{2.7pt}
  \setlength{\belowcaptionskip}{-5.5pt}
  \vspace{-3.25mm}
\includegraphics[width=5.5in]{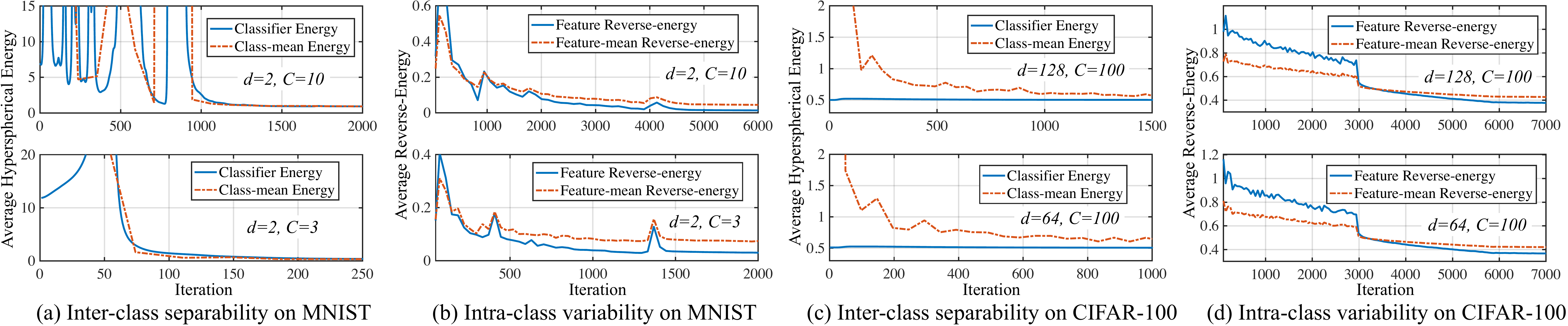}
  \caption{\scriptsize Training dynamics of hyperspherical energy (capturing inter-class separability) and hyperspherical reverse-energy (capturing intra-class variability). (a,b) MNIST with $\thickmuskip=2mu \medmuskip=2mu d=2,C=10$ and $\thickmuskip=2mu \medmuskip=2mu d=2,C=3$. (c,d) CIFAR-100 with $\thickmuskip=2mu \medmuskip=2mu d=64,C=100$ and $\thickmuskip=2mu \medmuskip=2mu d=128,C=100$.}\label{fig:mnist_cifar_dyn}
  \vspace{-1mm}
\end{figure}

Figure~\ref{fig:mnist_cifar_dyn} shows that GNC captures well the underlying convergence of the neural network training. Figure~\ref{fig:mnist_cifar_dyn}(a,c) shows that the hyperspherical energy of feature class-means and classifiers converge to a small value, verifying the correctness of GNC(2) and GNC(3) which indicate both feature class-means and classifiers converge to hyperspherical uniformity. More interestingly, in the MNIST experiment, we can compute the exact minimal energy on $\mathbb{S}^1$: $2$ in the case of $\thickmuskip=2mu \medmuskip=2mu d=2,C=3$ ($1/3$ for average energy) and $\thickmuskip=2mu \medmuskip=2mu \approx 82.5$ in the case of $\thickmuskip=2mu \medmuskip=2mu d=2,C=10$ ($\thickmuskip=2mu \medmuskip=2mu\approx 0.917$ for average energy). The final average energy in Figure~\ref{fig:mnist_cifar_dyn}(a)  matches our theoretical minimum well. From Figure~\ref{fig:mnist_cifar_dyn}(c), we observe that the classifier energy stays close to its minimum at the very beginning, which matches our Proposition~\ref{thm:randint} that vectors initialized with zero-mean Gaussian are uniformly distributed over the hypersphere (this phenomenon becomes more obvious in higher dimensions). To evaluate the intra-class feature variability, we consider a hyperspherical reverse-energy $\thickmuskip=2mu \medmuskip=2mu E_r=\sum_{i\neq j\in A_c}\|\hat{\bm{x}}_i-\hat{\bm{x}}_{j}\|$ where $\hat{\bm{x}}_i=\frac{\bm{x}_i}{\|\bm{x}_i\|}$ and $A_c$ denotes the sample index set of the $c$-th class. 
The smaller this reverse-energy gets, the less intra-class variability it implies. Figure~\ref{fig:mnist_cifar_dyn}(b,d) shows that the intra-class feature variability approaches to zero, as GNC(1) suggests. Details and more empirical results on GNC are in given Appendix~\ref{app:gnc_study}.

\vspace{-0.3mm}

Now we discuss how to decouple the GNC hypothesis and how such a decoupling can enable us to design new objectives to train neural networks. GNC(1) and GNC(2) suggest to minimize intra-class feature variability and maximize inter-class feature separability, respectively. GNC(3) and GNC(4) are natural consequences if GNC(1) and GNC(2) hold. It has long been discovered in \cite{liu2016large,sun2020circle,wen2022sphereface2} that last-layer classifiers serve as proxies to represent the corresponding class of features, and they are also an approximation to the feature class-means. GNC(3) indicates the classifiers converge to hyperspherical uniformity, which, together with GNC(1), implies GNC(4).

\vspace{-0.3mm}

Until now, it has been 
clear that GNC really boils down to two decoupled objectives: \emph{maximize inter-class separability} and \emph{minimize intra-class variability}, which again echos the goal of FDA. The problem reduces to how to effectively characterize these two objectives while being decoupled for flexibility (unlike CE or MSE). In the next section, we propose to address this problem by characterizing both objectives with a unified quantity - hyperspherical uniformity.

\vspace{-2mm}
\section{Hyperspherical Uniformity Gap}
\vspace{-1.7mm}
\subsection{General Framework}
\vspace{-.8mm}

As GNC(2) suggests, the inter-class separability is well captured by hyperspherical uniformity of feature class-means, so it is natural to directly use it as a learning target. On the other hand, GNC(1) does not suggest any easy-to-use quantity to characterize intra-class variability. We note that minimizing intra-class variability is actully equivalent to encouraging features of the same class to concentrate on a single point, which is the opposite of hyperspherical uniformity. Therefore, we can unify both intra-class variability and inter-class separability with a single characterization of hyperspherical uniformity. We propose to maximize the hyperspherical uniformity gap:

\vspace{-6.6mm}
\begin{equation}\label{eq:hugloss}
\footnotesize
\max_{\{\hat{\bm{x}}_j\}_{j=1}^n}\mathcal{L}_{\text{HUG}}:=\alpha\cdot\underbrace{\mathcal{HU}\big(\{\hat{\bm{\mu}}_c\}_{c=1}^C\big)}_{T_b\text{: Inter-class Hyperspherical Uniformity}}-\beta\cdot\sum_{c=1}^C\underbrace{\mathcal{HU}\big(\{\hat{\bm{x}}_i\}_{i\in{A_c}}\big)}_{T_w\text{: Intra-class Hyperspherical Uniformity}}
\end{equation}
\vspace{-3.5mm}

where $\thickmuskip=2mu \medmuskip=2mu \alpha,\beta$ are hyperparameters, $\thickmuskip=2mu \medmuskip=2mu\hat{\bm{\mu}}_c=\frac{\bm{\mu}_c}{\|\bm{\mu}_c\|}$ is the feature class-mean projected on the unit hypersphere, $\thickmuskip=2mu \medmuskip=2mu\bm{\mu}_c=\sum_{c\in A_c} \bm{x}_c$ is the feature class-mean, $\bm{x}_i$ is the last-layer feature of the $i$-th sample and $A_c$ denotes the sample index set of the $c$-th class. $\mathcal{HU}(\{\bm{v}_i\}_{i=1}^m)$ denotes some measure of hyperspherical uniformity for vectors $\thickmuskip=2mu \medmuskip=2mu \{\bm{v}_1,\cdots,\bm{v}_m\}$. Eq.~\ref{eq:hugloss} is the general objective for HUG. Without loss of generality, we assume that the larger it gets, the stronger hyperspherical uniformity we have. We mostly focus on supervised learning with \emph{parameteric class proxies}\footnote{Parametric class proxies are a set of parameters used to represent a group of samples in the same class. Therefore, these proxies store the information about a class. Last-layer classifiers are a typical example.} where the CE loss is widely used as a \emph{de facto} choice, although HUG can be used in much broader settings as discussed later. In the HUG framework, there is no longer a clear notion of classifiers (unlike the CE loss), but we still can utilize class proxies (\ie, a generalized concept of classifiers) to facilitate the optimization.

\vspace{-0.3mm}

We observe that Eq.~\ref{eq:hugloss} directly optimizes the feature class-means for inter-class separability, but they are intractable to compute during training (we need to compute them in every iteration). Therefore it is nontrivial to optimize the original HUG for training neural networks. A naive solution is to approximate feature class-mean with a few mini-batches such that the gradients of $T_b$ can be still back-propagated to the last-layer features. However, it may take many mini-batches in order to obtain a sufficiently accurate class-mean, and the approximation gets much more difficult with large number of classes. To address this, we employ parametric class proxies to act as representatives of intra-class features and optimize them instead of feature class-means. We thus modify the HUG objective as

\vspace{-7.4mm}
\begin{equation}\label{eq:hugloss_proxy}
\footnotesize
\max_{\{\hat{\bm{x}}_j\}_{j=1}^n,\{\hat{\bm{w}}_c\}_{c=1}^C}\mathcal{L}_{\text{P-HUG}}:=\alpha\cdot\underbrace{\mathcal{HU}\big(\{\hat{\bm{w}}_c\}_{c=1}^C\big)}_{\text{Inter-class Hyperspherical Uniformity}}-\beta\cdot\sum_{c=1}^C\underbrace{\mathcal{HU}\big(\{\hat{\bm{x}}_i\}_{i\in{A_c}},\hat{\bm{w}}_c\big)}_{\text{Intra-class Hyperspherical Uniformity}}
\end{equation}
\vspace{-4mm}

where $\thickmuskip=2mu \medmuskip=2mu \hat{\bm{w}}_c\in\mathbb{S}^{d-1}$ is the parametric proxy for the $c$-th class. The intra-class hyperspherical uniformity term connects the class proxies with features by minimizing their joint hyperspherical uniformity, guiding features to move towards their corresponding class proxy. When training a neural network, the objective function in Eq.~\ref{eq:hugloss_proxy} will optimize network weights and proxies together. There are alternative ways to design the HUG loss from Eq.~\ref{eq:hugloss} for different learning scenarios, as discussed in Appendix~\ref{app:HUG_variant}.

\textbf{Learnable proxies}. We can view the class proxy $\hat{\bm{\mu}}_i$ as learnable parameters and update them with stochastic gradients, similarly to the parameters of neural networks. In fact, learnable proxies play a role similar to the last-layer classifiers in the CE loss, improving the optimization by aggregating intra-class features. The major difference between learnable proxies and moving-averaged proxies is the way we update them. As GNC(3) implies, class proxies in HUG can also be used as classifiers.

\textbf{Static proxies}. Eq.~\ref{eq:hugloss_proxy} is decoupled into maximal inter-class separability and minimal intra-class variability. These two objects are independent and do not affect each other. We can thus optimize them independently. This suggests a even simpler way to assign class proxies -- initializing class proxies with prespecified points that have attained hyperspherical uniformity, and fixing them in the training. There are two simple ways to obtain these class proxies: (1) minimizing their hyperspherical energy beforehand; (2) using zero-mean Gaussian to initialize the class proxies (Proposition~\ref{thm:randint}). After initialization, class proxies will stay fixed and the features are optimized towards their class proxies.

\textbf{Partially learnable proxies}. After the class proxies are initialized using the static way above, we can increase its flexibility by learning an orthogonal matrix for the class proxies to find a suitable orientation for them. Specifically, we can learn this orthogonal matrix using methods in \cite{liu2021orthogonal}.




\vspace{-1.2mm}
\subsection{Variational Characterization of Hyperspherical Uniformity}
\vspace{-.9mm}

While there exist many ways to measure hyperspherical uniformity, we seek variational characterization due to simplicity. As examples, we consider minimum hyperspherical energy~\cite{liu2018learning} that is inspired by Thomson problem~\cite{thomson1904xxiv,smale1998mathematical} and minimizes the potential energy, maximum hyperspherical separation~\cite{liu2021learning} that is inspired by Tammes problem~\cite{tammes1930origin} and maximizes the smallest pairwise distance, and maximum gram determinant~\cite{liu2021learning} that is defined by the volume of the formed parallelotope. 

\textbf{Minimum hyperspherical energy}. MHE seeks to find an equilibrium state with minimum potential energy that distributes $n$ electrons on a unit hypersphere as evenly as possible. Hyperspherical uniformity is characterized by minimizing the hyperspherical energy for $n$ vectors $\thickmuskip=2mu \medmuskip=2mu \bm{V}_n=\{\bm{v}_1,\cdots,\bm{v}_n\in\mathbb{R}^{d}\}$:

\vspace{-5.65mm}
\begin{equation}\label{eq:mhe}
\footnotesize
\begin{aligned}
    \min_{\{\hat{\bm{v}}_1,\cdots,\hat{\bm{v}}_n\in\mathbb{S}^{d-1}\}}\bigg{\{} E_s(\hat{\bm{V}_n}):=\sum_{i=1}^{n}\sum_{j=1,j\neq i}^{n}
    K_s(\hat{\bm{v}}_i,\hat{\bm{v}}_j)\bigg{\}},~~~K_s(\hat{\bm{v}}_i,\hat{\bm{v}}_j) =\left\{
{\begin{array}{*{20}{l}}
{\|\hat{\bm{v}}_i-\hat{\bm{v}}_j\|^{-s},\ \ \ s>0}\\
{-\|\hat{\bm{v}}_i-\hat{\bm{v}}_j\|^{-s},\ \ \ s<0}
\end{array}} \right.,
\end{aligned}
\end{equation}
\vspace{-3.85mm}

where $\thickmuskip=2mu \medmuskip=2mu \hat{\bm{v}}_i:=\frac{\bm{v}_i}{\|\bm{v}_i\|}$ is the $i$-th vector projected onto the unit hypersphere. With $\thickmuskip=2mu \medmuskip=2mu \mathcal{HU}(\hat{\bm{V}})=-E_s(\hat{\bm{V}})$, we apply MHE to HUG and formulate the new objective as follows ($\thickmuskip=2mu \medmuskip=2mu s_b=2,s_w=-1$):

\vspace{-4.8mm}
\begin{equation}\label{eq:mhe_hug}
\footnotesize
    \min_{\{\hat{\bm{x}}_j\}_{j=1}^n,\{\hat{\bm{w}}_c\}_{c=1}^C}\mathcal{L}_{\text{MHE-HUG}}:=\alpha\cdot E_{s_b}\big(\{\hat{\bm{w}}_c\}_{c=1}^C\big)-\beta\cdot\sum_{c=1}^C E_{s_w}\big(\{\hat{\bm{x}}_i\}_{i\in{A_c}},\hat{\bm{w}}_c\big)
\end{equation}
\vspace{-3.5mm}

which can already be used as to train neural networks. The intra-class variability term in Eq.~\ref{eq:mhe_hug} can be relaxed to a upper bound such that we can instead minimize a simple upper bound of $\mathcal{L}_{\text{MHE-HUG}}$:

\vspace{-3.8mm}
\begin{equation}\label{eq:mhe_hug2}
\footnotesize
    \mathcal{L}'_{\text{MHE-HUG}}:= \alpha\cdot\sum_{c\neq c'}\|\hat{\bm{w}}_c-\hat{\bm{w}}_{c'}\|^{-2}+
    \beta'\cdot\sum_{c}\sum_{i\in A_c}\|\hat{\bm{x}}_i-\hat{\bm{w}}_c\|\geq \mathcal{L}_{\text{MHE-HUG}}
\end{equation}
\vspace{-3.9mm}

which is much more efficient to compute in practice and thus can serve as a relaxed HUG objective. Moreover, $\mathcal{L}_{\text{MHE-HUG}}$ and $\mathcal{L}'_{\text{MHE-HUG}}$ share the same minimizer. Detailed derivation is in Appendix~\ref{app:der_mhe_mhs}.

\textbf{Maximum hyperspherical separation}. MHS uses a maximum geodesic separation principle by maximizing the \emph{separation distance} $\thickmuskip=2mu \medmuskip=2mu \vartheta(\hat{\bm{V}}_n)$ (\ie, the smallest pairwise distance in $\thickmuskip=2mu \medmuskip=2mu \bm{V}_n=\{\bm{v}_1,\cdots,\bm{v}_n\in\mathbb{R}^{d}\}$): $\max_{\hat{\bm{V}}}{\{}\vartheta(\hat{\bm{V}}_n):=\min_{i\neq j} \|\hat{\bm{v}}_i-\hat{\bm{v}}_j\|{\}}$. Because $\thickmuskip=2mu \medmuskip=2mu \vartheta(\hat{\bm{V}}_n)$ is another variational definition, we cannot naively set $\mathcal{HU}(\cdot)=\vartheta(\cdot)$. We define $\thickmuskip=2mu \medmuskip=2mu \vartheta^{-1}(\hat{\bm{V}}_n):=\max_{i\neq j} \|\hat{\bm{v}}_i-\hat{\bm{v}}_j\|$ and HUG becomes

\vspace{-5.2mm}
\begin{equation}\label{eq:mhs_hug}
    \footnotesize
    \max_{\{\hat{\bm{x}}_j\}_{j=1}^n,\{\hat{\bm{w}}_c\}_{c=1}^C}\mathcal{L}_{\text{MHS-HUG}}:=\alpha\cdot \vartheta\big(\{\hat{\bm{w}}_c\}_{c=1}^C\big)-\beta\cdot\sum_{c=1}^C \vartheta^{-1}\big(\{\hat{\bm{x}}_i\}_{i\in{A_c}},\hat{\bm{w}}_c\big),
\end{equation}
\vspace{-3.7mm}

which, by replacing intra-class variability with its surrogate, results in a more efficient form:

\vspace{-3.7mm}
\begin{equation}\label{eq:mhs_hug2}
    \footnotesize
    \mathcal{L}'_{\text{MHS-HUG}}:=\alpha\cdot  \min_{c\neq c'} \|\hat{\bm{w}}_c-\hat{\bm{w}}_{c'}\|-\beta\cdot \sum_{c}\max_{i\in A_c}\|\hat{\bm{x}}_i-\hat{\bm{w}}_c\|
\end{equation}
\vspace{-4mm}

which is a max-min optimization with a simple nearest neighbor problem inside. We note that $\mathcal{L}_{\text{MHS-HUG}}$ and $\mathcal{L}'_{\text{MHS-HUG}}$ share the same maximizer. Detailed derivation is given in Appendix~\ref{app:der_mhe_mhs}.

\textbf{Maximum gram determinant}. MGD characterizes the uniformity by computing a proxy to the volume of the parallelotope spanned by the vectors. MGD is defined with kernel gram determinant:

\vspace{-3.8mm}
\begin{equation}
\footnotesize
    \max_{\{\hat{\bm{v}}_1,\cdots,\hat{\bm{v}}_n\in\mathbb{S}^{d-1}\}}\log\det\big(\bm{G}:=\big(K(\hat{\bm{v}}_i,\hat{\bm{v}}_j)\big)_{i,j=1}^n\big),~~~~ K(\hat{\bm{v}}_i,\hat{\bm{v}}_j)=\exp\big( -\epsilon^2\|\hat{\bm{v}}_i-\hat{\bm{v}}_j\|^2 \big)
\end{equation}
\vspace{-3.5mm}

where we use a Gaussian kernel with parameter $\epsilon$ and $\thickmuskip=2mu \medmuskip=2mu \bm{G}(\hat{\bm{V}}_n)$ is the kernel gram matrix for $\thickmuskip=2mu \medmuskip=2mu \hat{\bm{V}}_n=\{\hat{\bm{v}}_1,\cdots,\hat{\bm{v}}_n\}$. With $\thickmuskip=2mu \medmuskip=2mu \mathcal{HU}(\hat{\bm{V}}_n)=\det\bm{G}(\hat{\bm{V}}_n)$, minimizing intra-class uniformity cannot be achieved by minimizing $\det\bm{G}(\hat{\bm{V}}_n)$, since $\thickmuskip=2mu \medmuskip=2mu \det\bm{G}(\hat{\bm{V}}_n)=0$ only leads to linear dependence. Then we have

\vspace{-3.8mm}
\begin{equation}\label{eq:mgd_hug}
\footnotesize
    \max_{\{\hat{\bm{x}}_j\}_{j=1}^n,\{\hat{\bm{w}}_c\}_{c=1}^C}\mathcal{L}_{\text{MGD-HUG}}:=\alpha\cdot \log\det\big(\bm{G}(\{\hat{\bm{w}}_c\}_{c=1}^C)\big)+\beta'\cdot\sum_{c}\sum_{i\in A_c}\|\hat{\bm{x}}_i-\hat{\bm{w}}_c\|
\end{equation}
\vspace{-3.5mm}

where we directly use the surrogate loss from Eq.~\ref{eq:mhe_hug2} as the intra-class variability term. With MGD, HUG has interesting geometric interpretation -- it encourages the volume spanned by class proxies to be as large as possible and the volume spanned by intra-class features to be as small as possible.

\vspace{-1mm}
\subsection{Theoretical Insights and Discussions}
\vspace{-0.9mm}

There are many interesting theoretical questions concerning HUG, and this framework is highly related to a few topics in mathematics, such as tight frame theory~\cite{waldron2018introduction}, potential theory~\cite{landkof1972foundations}, sphere packing and covering~\cite{elzinga1972minimum,hales1992sphere,borodachov2019discrete}. The depth and breath of these topics are beyond imagination. In this section, we focus on discussing some highly related yet intuitive theoretical properties of HUG.

\vspace{-1mm}
\begin{theorem}[Order of Minimum Hyperspherical Energy]\label{thm:energy_order}
    If $\thickmuskip=2mu \medmuskip=2mu  d-1>s>0$ or $\thickmuskip=2mu \medmuskip=2mu 
 0>s>-2$ and $\thickmuskip=2mu \medmuskip=2mu  d\in\mathbb{N}$, we have that $\thickmuskip=2mu \medmuskip=2mu \lim_{n\rightarrow\infty}\{ n^{-2}\cdot\min_{\hat{\bm{V}}_n} E_s(\hat{\bm{V}}_n)\}=c(s,d)$ where $c(s,d)$ is a constant involving $s,d$.
\end{theorem}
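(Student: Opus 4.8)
The plan is to identify the limiting constant explicitly as the continuous $s$-energy of the uniform surface measure and then sandwich the normalized discrete minimum between matching asymptotic upper and lower bounds. Write $\sigma_{d-1}$ for the normalized surface measure on $\mathbb{S}^{d-1}$ and set
\[
c(s,d) := I_s[\sigma_{d-1}] = \int_{\mathbb{S}^{d-1}}\int_{\mathbb{S}^{d-1}} K_s(\hat{\bm{x}},\hat{\bm{y}})\, d\sigma_{d-1}(\hat{\bm{x}})\, d\sigma_{d-1}(\hat{\bm{y}}).
\]
The first thing I would check is that this integral is finite in both regimes: for $-2<s<0$ the kernel $K_s=-\|\hat{\bm{x}}-\hat{\bm{y}}\|^{-s}$ is bounded and continuous on the sphere, while for $0<s<d-1$ the singularity $\|\hat{\bm{x}}-\hat{\bm{y}}\|^{-s}$ is integrable against $\sigma_{d-1}\times\sigma_{d-1}$ precisely because $s$ lies below the sphere dimension $d-1$. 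I would also record that $\sigma_{d-1}$ minimizes $I_s[\mu]$ over Borel probability measures: by rotational symmetry the equilibrium measure is rotation invariant, hence equals $\sigma_{d-1}$, and minimality follows from the (conditional) positive definiteness of the kernel, namely strict positive definiteness of $\|\hat{\bm{x}}-\hat{\bm{y}}\|^{-s}$ on $\mathbb{S}^{d-1}$ for $0<s<d-1$ and conditional positive definiteness of $-\|\hat{\bm{x}}-\hat{\bm{y}}\|^{|s|}$ for $0<|s|<2$ (Schoenberg). Equivalently, in the Gegenbauer expansion $K_s=\sum_{k\ge0}a_k G_k(\langle\hat{\bm{x}},\hat{\bm{y}}\rangle)$ one has $a_k\ge0$ for $k\ge1$ and $a_0=c(s,d)$.

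For the \textbf{upper bound} I would use a first-moment argument. Drawing $\hat{\bm{v}}_1,\dots,\hat{\bm{v}}_n$ independently from $\sigma_{d-1}$, the off-diagonal pairs are i.i.d.\ with finite mean, so $\mathbb{E}[E_s]=n(n-1)\,c(s,d)$; since some configuration attains at most the average, $\min_{\hat{\bm{V}}_n}E_s(\hat{\bm{V}}_n)\le n(n-1)\,c(s,d)$ and hence $\limsup_{n\to\infty} n^{-2}\min E_s \le c(s,d)$.

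The \textbf{lower bound} is the crux. In the easy case $-2<s<0$ the kernel is bounded with vanishing diagonal, so the expansion applies directly: $\sum_{i,j}K_s(\hat{\bm{v}}_i,\hat{\bm{v}}_j)\ge a_0 n^2=c(s,d)\,n^2$ (using that $\sum_{i,j}G_k\ge0$ by the addition theorem), and since the diagonal terms vanish this is exactly $E_s\ge c(s,d)\,n^2$, giving $\liminf n^{-2}\min E_s\ge c(s,d)$. The genuine difficulty is the range $0<s<d-1$, where the diagonal singularity obstructs a direct application of the same inequality. I would handle it by mollification: replace each $\delta_{\hat{\bm{v}}_i}$ by a measure $\nu_i^\delta$ spread uniformly over a spherical cap of radius $\delta$, form $\tilde\mu=\frac{1}{n}\sum_i\nu_i^\delta$, and invoke minimality $I_s[\tilde\mu]\ge c(s,d)$. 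Expanding $I_s[\tilde\mu]$ into off-diagonal and self-energy contributions, the self-energy of one mollified point is of order $\delta^{-s}$ and contributes $O(\delta^{-s}/n)$ after the $n^{-2}$ normalization, while monotonicity/superharmonicity of the Riesz kernel lets me compare the smoothed pairwise interactions $\int\!\int K_s\,d\nu_i^\delta\,d\nu_j^\delta$ with the point values $K_s(\hat{\bm{v}}_i,\hat{\bm{v}}_j)$ up to a controllable bias. Choosing $\delta=\delta(n)\to0$ slowly enough that $\delta^{-s}/n\to0$ then yields $n^{-2}E_s\ge c(s,d)-o(1)$ uniformly over configurations.

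Combining the two bounds gives $\lim_{n\to\infty} n^{-2}\min E_s = c(s,d)$. The main obstacle is the lower bound in the singular regime $0<s<d-1$: all the care goes into the mollification bookkeeping, that is, simultaneously controlling the self-energy blow-up ($\delta$ not too small) and the smoothing bias in the pairwise terms ($\delta$ not too large), which is exactly where the hypothesis $s<d-1$ enters through integrability of the limiting continuous energy. This statement is in fact a classical result of discrete potential theory, and the argument above follows the standard treatment in \cite{borodachov2019discrete}.
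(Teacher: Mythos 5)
Your strategy is sound and is, in substance, the classical potential-theoretic proof of this statement. Note that the paper does not actually reprove it: it simply quotes the known result (Theorem~\ref{async_sphere} in the appendix, taken from \cite{borodachov2019discrete} and the references therein) in the proof of Theorem~\ref{thm:asyc}, and that quoted statement only covers $0<s<\dim A=d-1$. So your write-up supplies two things the paper leaves implicit: an actual argument, and an explicit treatment of the regime $-2<s<0$, where your combination of the averaging upper bound $\min E_s\le n(n-1)\,c(s,d)$ with the Gegenbauer/Schoenberg lower bound $E_s(\hat{\bm{V}}_n)\ge c(s,d)\,n^2$ (valid because the diagonal of the kernel vanishes and $\sum_{i,j}G_k(\langle\hat{\bm{v}}_i,\hat{\bm{v}}_j\rangle)\ge0$ by the addition theorem) is complete and clean. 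The one soft spot is the lower bound in the singular regime $0<s<d-1$: superharmonicity of $\|\cdot\|^{-s}$ in $\mathbb{R}^d$ holds only for $s\le d-2$, so the mean-value comparison you invoke does not literally cover $d-2<s<d-1$, and in any case spherical caps are not Euclidean balls centered at the points, so the ``controllable bias'' you defer to is exactly where your sketch is thinnest. The standard way to close this -- and the one used in the reference you cite -- is kernel truncation rather than mollification: set $K^M:=\min(K_s,M)$, observe $E_s(\hat{\bm{V}}_n)\ge\sum_{i,j}K^M(\hat{\bm{v}}_i,\hat{\bm{v}}_j)-nM\ge n^2\min_{\mu}I_{K^M}[\mu]-nM$, let $n\to\infty$ and then $M\to\infty$, using monotone convergence of the truncated minimal energies to $W_s(\mathbb{S}^{d-1})=c(s,d)$. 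With that substitution (or with the bookkeeping you outline carried out in full for the whole range of $s$), your proof is correct and is a genuine expansion of what the paper merely cites.
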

\vspace{-1mm}

The result above shows that the leading term of the minimum energy grows of order $\mathcal{O}(n^2)$ as $\thickmuskip=2mu \medmuskip=2mu n\rightarrow\infty$. Theorem~\ref{thm:energy_order} generally holds with a wide range of $s$ for the Riesz kernel in hyperspherical energy. Moreover, the following result shows that MHS is in fact a limiting case of MHE as $\thickmuskip=2mu \medmuskip=2mu s\rightarrow\infty$.

\vspace{-1mm}
\begin{proposition}[MHS is a Limiting Case of MHE]\label{thm:mhe_mhs}
Let $n\in\mathbb{N},n\geq2$ be fixed and $(\mathbb{S}^{d-1},L_2)$ be a compact metric space. We have that $\lim_{s\rightarrow\infty}(\min_{\hat{\bm{V}}_n\subset\mathbb{S}^{d-1}}E_s(\hat{\bm{V}}_n))^{\frac{1}{s}}=(\max_{\hat{\bm{V}}_n\subset\mathbb{S}^{d-1}}\vartheta(\hat{\bm{V}}_n))^{-1}$.
\end{proposition}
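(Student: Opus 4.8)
The plan is to exploit an $\ell^s$-to-$\ell^\infty$ collapse: once raised to the power $1/s$ and sent to $s\to\infty$, a sum of positive terms is governed entirely by its largest summand. Since the largest summand of $E_s(\hat{\bm{V}}_n)=\sum_{i\neq j}\|\hat{\bm{v}}_i-\hat{\bm{v}}_j\|^{-s}$ comes from the closest pair of points, the limit should be controlled precisely by the separation distance $\vartheta(\hat{\bm{V}}_n)=\min_{i\neq j}\|\hat{\bm{v}}_i-\hat{\bm{v}}_j\|$, and minimizing energy over configurations should correspond to maximizing separation.

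First I would fix an arbitrary configuration $\hat{\bm{V}}_n$ and sandwich its energy. Because the smallest pairwise distance contributes the largest reciprocal power, and there are exactly $n(n-1)$ ordered pairs,
\[
\vartheta(\hat{\bm{V}}_n)^{-s}\ \leq\ E_s(\hat{\bm{V}}_n)\ \leq\ n(n-1)\,\vartheta(\hat{\bm{V}}_n)^{-s}.
\]
Raising to the power $1/s$ and using $(n(n-1))^{1/s}\to1$ already gives $\lim_{s\to\infty}E_s(\hat{\bm{V}}_n)^{1/s}=\vartheta(\hat{\bm{V}}_n)^{-1}$ for each fixed $\hat{\bm{V}}_n$, confirming the intuition at the level of single configurations.

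The real work is to interchange the limit with the minimization, which I would avoid doing directly. Let $M:=\max_{\hat{\bm{V}}_n\subset\mathbb{S}^{d-1}}\vartheta(\hat{\bm{V}}_n)$; this maximum is attained because $\vartheta$ is continuous on the compact space $(\mathbb{S}^{d-1})^n$, and the energy infimum is attained because $E_s$ is lower semicontinuous there (equal to $+\infty$ on collisions). For the lower bound, every configuration satisfies $\vartheta(\hat{\bm{V}}_n)\leq M$, so $E_s(\hat{\bm{V}}_n)\geq\vartheta(\hat{\bm{V}}_n)^{-s}\geq M^{-s}$, whence $\min_{\hat{\bm{V}}_n}E_s\geq M^{-s}$. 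For the upper bound, evaluating $E_s$ at a maximal-separation configuration and applying the sandwich above gives $\min_{\hat{\bm{V}}_n}E_s\leq n(n-1)\,M^{-s}$.

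Combining the two bounds and taking $1/s$ powers,
\[
M^{-1}\ \leq\ \Big(\min_{\hat{\bm{V}}_n}E_s(\hat{\bm{V}}_n)\Big)^{1/s}\ \leq\ (n(n-1))^{1/s}\,M^{-1},
\]
and since $n$ is fixed, letting $s\to\infty$ squeezes the middle term to $M^{-1}=(\max_{\hat{\bm{V}}_n}\vartheta(\hat{\bm{V}}_n))^{-1}$, which is the claim. The only genuinely delicate point is the existence of the extremizers together with the use of the $s$-independent quantity $M^{-s}$ as a common bound; this is exactly what lets me sidestep a direct interchange of $\lim_{s\to\infty}$ and $\min_{\hat{\bm{V}}_n}$, which would otherwise be the main obstacle.
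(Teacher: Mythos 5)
Your proof is correct and follows essentially the same route as the paper's: both arguments rest on the sandwich $\vartheta(\hat{\bm{V}}_n)^{-s}\leq E_s(\hat{\bm{V}}_n)\leq n(n-1)\,\vartheta(\hat{\bm{V}}_n)^{-s}$, obtain the lower bound on the minimal energy from $\vartheta\leq\max\vartheta$ and the upper bound by evaluating $E_s$ at a best-packing configuration, and conclude by the squeeze after taking $1/s$ powers. Your version is slightly more explicit about the existence of extremizers and about avoiding the limit--minimum interchange, but the substance is identical.
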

\vspace{-2.7mm}

\begin{proposition}\label{thm:hug_optimum}
    The HUG objectives in both Eq.~\ref{eq:mhe_hug} and Eq.~\ref{eq:mhe_hug2} converge to simplex ETF when $2\leq C\leq d+1$, converge to cross-polytope when $C=2d$ and asymptotically converge to GNC as $C\rightarrow\infty$.
\end{proposition}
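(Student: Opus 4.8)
The plan is to exploit the explicit decoupling built into the HUG objective. Writing $\mathcal{L}_{\text{MHE-HUG}}=\alpha E_{s_b}(\{\hat{\bm{w}}_c\})-\beta\sum_c E_{s_w}(\{\hat{\bm{x}}_i\}_{i\in A_c},\hat{\bm{w}}_c)$ from Eq.~\ref{eq:mhe_hug}, I observe that the first (inter-class) term depends only on the proxies $\{\hat{\bm{w}}_c\}$, whereas the second (intra-class) term couples features and proxies. First I would minimize over the features $\{\hat{\bm{x}}_i\}$ for fixed proxies: with $s_w=-1$ the intra-class term equals $\beta\sum_c\sum_{\text{pairs}}\|\cdot\|$ (and in the relaxed form Eq.~\ref{eq:mhe_hug2} it is $\beta'\sum_c\sum_{i\in A_c}\|\hat{\bm{x}}_i-\hat{\bm{w}}_c\|$), which attains its minimal value $0$ exactly when every $\hat{\bm{x}}_i$ in class $c$ coincides with $\hat{\bm{w}}_c$. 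The crucial point is that this minimal value is $0$ regardless of where the proxies sit, so the inner minimization contributes a constant and the outer problem reduces to $\min_{\{\hat{\bm{w}}_c\}}\alpha E_{s_b}(\{\hat{\bm{w}}_c\})$, a pure hyperspherical-energy minimization. Since the paper already records that $\mathcal{L}_{\text{MHE-HUG}}$ and $\mathcal{L}'_{\text{MHE-HUG}}$ share the same minimizer, it suffices to analyze either one.

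The intra-class conclusion is immediate from this decoupling: at the minimizer every feature collapses onto its class proxy, so the within-class covariance $\bm{\Sigma}_W\to\bm{0}$ and hence $\bm{\Sigma}_B^{\dagger}\bm{\Sigma}_W\to\bm{0}$, giving GNC(1); moreover the feature class-mean $\bm{\mu}_c$ coincides with $\hat{\bm{w}}_c$, so the proxies are exactly the class-means. This identifies the classifiers (proxies) with the class-means, yielding self-duality GNC(3) for free, and GNC(4) then follows from GNC(1) and GNC(3) as argued in the main text.

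For the inter-class part with $s_b=2$, the energy kernel is $\|\hat{\bm{w}}_c-\hat{\bm{w}}_{c'}\|^{-2}=f(\|\hat{\bm{w}}_c-\hat{\bm{w}}_{c'}\|^2)$ with $f(v)=v^{-1}$, which is strictly convex and strictly decreasing on $(0,4]$, so the hypotheses of Theorem~\ref{thm:rso} are met. I would then invoke the three geometric results in turn: Theorem~\ref{thm:rso} gives the regular simplex / simplex ETF as the unique minimizer when $2\leq C\leq d+1$; Theorem~\ref{thm:polytope} gives the cross-polytope when $C=2d$; and, noting that the HUG minimizer is an exact energy minimizer for each finite $C$ and hence asymptotically minimizing, Theorem~\ref{thm:asyc} gives convergence to the uniform distribution on $\mathbb{S}^{d-1}$ as $C\to\infty$. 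To complete GNC(2) I still need the centering/equal-norm condition $\|\bm{\mu}_c-\bm{\mu}_G\|-\|\bm{\mu}_{c'}-\bm{\mu}_G\|\to0$; this I would obtain by checking that each optimal configuration is balanced, \ie, its centroid is the origin (the regular simplex and cross-polytope sum to zero by symmetry, and the uniform-configuration mean tends to $\bm{0}$), so that $\bm{\mu}_G\to\bm{0}$ and all $\|\bm{\mu}_c-\bm{\mu}_G\|$ equalize to the common radius.

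The hard part will not be the decoupling itself, which is clean, but making the inter-class step airtight across all three regimes: one must confirm that the Riesz kernel with $s_b=2$ simultaneously satisfies the structural hypotheses of Theorems~\ref{thm:rso},~\ref{thm:polytope} and~\ref{thm:asyc} (in particular lying in the admissible range of $s$ for the asymptotic statement, cf.\ Theorem~\ref{thm:energy_order}), and that the symmetry-based centering argument survives in the limiting regime where only asymptotic uniformity, rather than an exact symmetric polytope, is available. Handling the $C\to\infty$ case, where GNC(2) must be read as asymptotic uniformity and the equal-norm condition holds only in the limit, is the most delicate piece.
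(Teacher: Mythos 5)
Your proposal is correct and follows essentially the same route as the paper's proof: observe that the intra-class term attains its minimum of $0$ when all features collapse onto their class proxies (independently of where the proxies sit), so the problem reduces to minimizing the inter-class hyperspherical energy over the proxies, which is then handled by invoking Theorems~\ref{thm:rso}, \ref{thm:polytope} and \ref{thm:asyc} in the three regimes. Your additional care about the centering/equal-norm condition in GNC(2) and the admissible range of $s$ for the asymptotic statement goes beyond the paper's terse argument, which simply asserts that the two minimizers do not conflict and can be achieved simultaneously.
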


\vspace{-1mm}

Proposition~\ref{thm:hug_optimum} shows that HUG not only decouples GNC but also provably converges to GNC. Since GNC indicates that the CE loss eventually approaches to the maximizer of HUG, we now look into how the CE loss implicitly maximizes the HUG objective in a coupled way.

\vspace{-1mm}
\begin{proposition}\label{thm:lossce}
The CE loss is $\thickmuskip=2mu \medmuskip=2mu \mathcal{L}_{\text{CE}}=\sum_{i=1}^n\log(1+\sum_{j\neq y_i}^C\exp(\langle \bm{w}_j,\bm{x}_i\rangle-\langle\bm{w}_{y_i},\bm{x}_i\rangle))$ where $n$ is the number of samples, $\bm{x}_i$ is the $i$-th sample with label $y_i$ and $\bm{w}_j$ is the last-layer linear classifier for the $j$-th class. Bias is omitted for simplicity. $\mathcal{L}_{\text{CE}}$ is bounded by ($\thickmuskip=2mu \medmuskip=2mu \rho=C-1$)

\vspace{-7.2mm}
    \begin{equation}
    \footnotesize
    \begin{aligned}
        \underbrace{\sum_{i=1}^n\sum_{j\neq y_i}^C\langle\bm{w}_j,\bm{x}_i\rangle}_{Q_1\textnormal{:~Coupled IS and IV}}-\underbrace{\rho\sum_{i=1}^n\langle\bm{w}_{y_i},\bm{x}_i\rangle}_{Q_2\textnormal{:~Inter-class Variability}}\leq\mathcal{L}_{\text{CE}}\leq\log\big(1+\underbrace{\sum_{i=1}^n\sum_{j\neq y_i}^C\exp(\langle\bm{w}_j,\bm{x}_i\rangle)}_{Q_3\textnormal{:~Coupled IS and IV}}+\underbrace{\rho\sum_{i=1}^n\exp(-\langle\bm{w}_{y_i},\bm{x}_i\rangle)}_{Q_4\textnormal{:~Inter-class Variability}}\big).\nonumber
    \end{aligned}
    \end{equation}
\end{proposition}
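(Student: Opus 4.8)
The plan is to work throughout with the per-sample logit gaps $z_{ij} := \langle \bm{w}_j, \bm{x}_i\rangle - \langle \bm{w}_{y_i}, \bm{x}_i\rangle$, so that $\mathcal{L}_{\text{CE}} = \sum_{i=1}^n \log\big(1 + \sum_{j\neq y_i} e^{z_{ij}}\big)$ becomes a sum of log-sum-exp (softplus-type) terms. The first step is purely combinatorial: since there are exactly $\rho = C-1$ indices $j\neq y_i$, we have $\sum_i\sum_{j\neq y_i} z_{ij} = Q_1 - Q_2$, because $\sum_i\sum_{j\neq y_i}\langle \bm{w}_{y_i}, \bm{x}_i\rangle = (C-1)\sum_i\langle \bm{w}_{y_i}, \bm{x}_i\rangle = Q_2$. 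The very same count of off-target classes produces the factor $\rho$ multiplying $\exp(-\langle \bm{w}_{y_i}, \bm{x}_i\rangle)$ in $Q_4$. Thus both bounds reduce to controlling a single log-sum-exp against a linear surrogate (for the lower bound) and an exponential surrogate (for the upper bound), and it is the bookkeeping of this $C-1$ multiplicity that separates the ``coupled'' cross terms $Q_1, Q_3$ from the target-alignment terms $Q_2, Q_4$.

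For the lower bound I would exploit nonnegativity and monotonicity of log-sum-exp. Each summand satisfies $1 + \sum_{j\neq y_i} e^{z_{ij}} \geq 1$, hence $\log\big(1 + \sum_{j\neq y_i} e^{z_{ij}}\big) \geq 0$; in the operative regime where the correct class dominates and $\sum_{j\neq y_i} z_{ij} \leq 0$, this gives termwise $\log\big(1 + \sum_{j\neq y_i} e^{z_{ij}}\big) \geq \sum_{j\neq y_i} z_{ij}$. Summing over $i$ and substituting the combinatorial identity yields $\mathcal{L}_{\text{CE}} \geq Q_1 - Q_2$. The purpose of this bound is to surface $Q_1 - Q_2$, in which the off-diagonal inner products $\langle \bm{w}_j, \bm{x}_i\rangle$ bundle inter-class separation together with intra-class alignment, which is precisely the ``coupled IS and IV'' reading.

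For the upper bound the key lemma is the elementary factorization $1 + uv \leq (1+u)(1+v)$ for $u,v\geq 0$, equivalently $e^{a-b} \leq e^{a} + e^{-b}$ whenever the target logit $b = \langle \bm{w}_{y_i}, \bm{x}_i\rangle \geq 0$. Applying it to $e^{z_{ij}} = e^{\langle \bm{w}_j, \bm{x}_i\rangle}\, e^{-\langle \bm{w}_{y_i}, \bm{x}_i\rangle}$ and summing over the $C-1$ off-target classes gives $\sum_{j\neq y_i} e^{z_{ij}} \leq \sum_{j\neq y_i} e^{\langle \bm{w}_j, \bm{x}_i\rangle} + \rho\, e^{-\langle \bm{w}_{y_i}, \bm{x}_i\rangle} =: S_i$, so that summing over $i$ inside the logarithm reproduces exactly $Q_3 + Q_4$. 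It remains only to collect the per-sample logarithms $\sum_i \log(1 + S_i)$ into the single logarithm $\log(1 + \sum_i S_i) = \log(1 + Q_3 + Q_4)$.

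This collection step is where I expect the main obstacle. Subadditivity of $x\mapsto\log(1+x)$ runs the wrong way, $\sum_i \log(1 + S_i) \geq \log(1 + \sum_i S_i)$, so the single-logarithm form is not implied by naive chaining. I would therefore close it either in the small-value regime, where $\log(1 + S_i)\approx S_i$ and $\sum_i S_i \approx \log(1 + \sum_i S_i)$ to first order, or under the normalization used elsewhere in the paper (features and proxies on $\mathbb{S}^{d-1}$, so all inner products lie in $[-1,1]$), which uniformly controls the summands. Accordingly I would state the proposition as a bound valid in this regime, and the delicate part of the write-up is making the regime conditions explicit and checking that the constant $\rho = C-1$ survives the log-sum-exp manipulations, rather than pursuing a universal inequality, which the large-logit limit shows is unavailable.
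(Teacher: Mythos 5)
Your derivation follows the same skeleton as the paper's proof in Appendix~\ref{app:ineq}: the $\rho=C-1$ multiplicity count, the termwise softplus bound $\log(1+e^{z})\ge z$ for the lower bound, and the split $e^{a-b}\le e^{a}+e^{-b}$ followed by a collection of per-sample logarithms for the upper bound. The difference is that you flag two steps as obstacles, and both are genuine: they are exactly the points where the paper's own chain writes the inequalities in the invalid direction. For nonnegative $u_j$ one has $1+\sum_j u_j\le\prod_j(1+u_j)$, hence
\begin{equation*}
\log\Big(1+\sum_j u_j\Big)\;\le\;\sum_j\log(1+u_j),
\end{equation*}
whereas the appendix asserts ``$\ge$'' in the first line of the lower-bound chain and the reverse ``$\le$'' (collapsing $\sum_i\log(1+\cdot)$ into a single logarithm) in the first line of the upper-bound chain; neither holds in general. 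Likewise $e^{a-b}\le e^{a}+e^{-b}$ is equivalent to $e^{-a}+e^{b}\ge1$ and needs a sign condition such as the $\langle\bm{w}_{y_i},\bm{x}_i\rangle\ge0$ you impose (it fails for $a=2$, $b=-2$). Your refusal to claim a universal inequality is also vindicated by direct counterexamples: with $\langle\bm{w}_j,\bm{x}_i\rangle=0$ for $j\ne y_i$ and $\langle\bm{w}_{y_i},\bm{x}_i\rangle=-1$, the left inequality reads $9\le\log(1+9e)\approx3.24$ for $C=10$, $n=1$; with all inner products zero and $n$ large, the right inequality reads $n\log C\le\log(1+2n(C-1))$. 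Both are false, even on the unit hypersphere.

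So the regime-restricted statement you outline is the defensible one, and you should make the conditions explicit rather than trying to remove them. Concretely: state the lower bound under the per-sample dominance condition $\sum_{j\ne y_i}\big(\langle\bm{w}_j,\bm{x}_i\rangle-\langle\bm{w}_{y_i},\bm{x}_i\rangle\big)\le0$, where it follows at once from $\mathcal{L}_{\text{CE}}\ge0$ as you observe; and for the upper bound stop at $\sum_{i}\log(1+S_i)$ with $S_i:=\sum_{j\ne y_i}\exp(\langle\bm{w}_j,\bm{x}_i\rangle)+\rho\exp(-\langle\bm{w}_{y_i},\bm{x}_i\rangle)$, valid whenever $\langle\bm{w}_{y_i},\bm{x}_i\rangle\ge0$, instead of forcing the single logarithm $\log(1+Q_3+Q_4)$, which subadditivity cannot deliver. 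The qualitative reading of the proposition --- that CE couples inter-class separability and intra-class variability through the $Q_1,Q_3$ cross terms --- survives intact in this corrected form.
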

\vspace{-1mm}

We show in Proposition~\ref{thm:lossce} that CE inherently optimizes two independent criterion: intra-class variability (IV) and inter-class separability (IS). With normalized classifiers and features, we can see that $Q_1$ and $Q_3$ have similar minimum where $\thickmuskip=2mu \medmuskip=2mu\bm{x}_i=\bm{w}_{y_i}$ and $\thickmuskip=2mu \medmuskip=2mu \bm{w}_i,\forall i$ attain hyperspherical uniformity.

We show that CE is lower bounded by the gap of inter-class and intra-class hyperspherical uniformity:

    \vspace{-5.7mm}
    \begin{equation}
    \footnotesize
    \begin{aligned}
        \mathcal{L}_{\text{CE}} \geq \underbrace{\sum_{i=1}^n\log\sum_{c=1}^C \exp(\rho_2\sum_{j=1}^n l_{jc}\langle \bm{x}_i,\bm{x}_j \rangle)-\rho_3\sum_{i=1}^n\norm{\frac{1}{n}\sum_{i=1}^n l_{ic} \bm{x}_i}^2}_{\text{Inter-class Hyperspherical Uniformity}}-\underbrace{\rho_1\sum_{i=1}^n\sum_{j\in A_{y_i}}\langle \bm{x}_i,\bm{x}_j \rangle}_{\text{Intra-class Hyperspherical Uniformity}}
    \end{aligned}
    \end{equation}
    \vspace{-3.5mm}

where $\rho_1,\rho_2,\rho_3$ are constants and $l_{ic}$ is the softmax confidence of $\bm{x}_i$ for the $c$-th class (Appendix~\ref{app:ce_lower_bound}). This result~\cite{boudiaf2020unifying} implies that minimizing CE effectively minimizes HUG. \cite{lu2022neural} proves that the minimizer of the normalized CE loss converges to hyperspherical uniformity. We rewrite their results below:

\vspace{-1mm}
\begin{theorem}[CE Asymptotically Converges to HUG's Maximizer]\label{thm:ce_converge}
    Considering unconstrained features of $C$ classes (each class has the same number of samples), with features and classifiers normalized on some hypersphere, we have that, for the minimizer of the CE loss, classifiers converge weakly to the uniform measure on $\mathbb{S}^{d-1}$ as $\thickmuskip=2mu \medmuskip=2mu  C\rightarrow\infty$ and features collapse to their corresponding classifiers. The minimizer of CE also asymptotically converges to the maximizer of HUG.
\end{theorem}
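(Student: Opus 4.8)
The plan is to treat the statement as a two-part claim: (A) a characterization of the CE minimizer in the normalized unconstrained-features model --- feature collapse together with weak convergence of the classifier configuration to the uniform measure $\sigma_{d-1}$ on $\mathbb{S}^{d-1}$ --- and (B) the identification of this limiting configuration with the asymptotic maximizer of HUG. Part (A) is precisely the content of \cite{lu2022neural}, which I would invoke directly rather than reprove; the genuinely new work is Part (B), reconciling the two limits with the GNC machinery established earlier.

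First I would set up the normalized unconstrained-features model, writing $\mathcal{L}_{\text{CE}}$ as in Proposition~\ref{thm:lossce} with features $\hat{\bm{x}}_{i,c}$ and classifiers $\hat{\bm{w}}_c$ constrained to a common sphere and class-balanced data. For fixed classifiers, convexity of LogSumExp together with Jensen's inequality shows that replacing the within-class features by their mean does not increase the loss, so at the optimum all features of class $c$ coincide and align with $\hat{\bm{w}}_c$; this is the feature-collapse half of (A). Substituting $\hat{\bm{x}}_{i,c}=\hat{\bm{w}}_c$ back, and using $\langle\hat{\bm{w}}_{c'},\hat{\bm{w}}_c\rangle=1-\tfrac12\|\hat{\bm{w}}_{c'}-\hat{\bm{w}}_c\|^2$ on the sphere, reduces $\mathcal{L}_{\text{CE}}$ to a rotationally invariant energy that is a symmetric, decreasing function of the pairwise distances $\|\hat{\bm{w}}_{c'}-\hat{\bm{w}}_c\|$ (a LogSumExp analogue of the kernel energies in Theorem~\ref{thm:rso}). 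The equidistribution half of (A) then follows from the potential-theoretic argument of \cite{lu2022neural}: the empirical measure $\tfrac{1}{C}\sum_c\delta_{\hat{\bm{w}}_c}$ of any minimizer converges weakly to $\sigma_{d-1}$ as $C\to\infty$, since $\sigma_{d-1}$ uniquely minimizes the associated continuous energy and the discrete minimizers $\Gamma$-converge to it.

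For Part (B), I would show the HUG maximizer converges to the same limiting object. By Proposition~\ref{thm:hug_optimum}, maximizing HUG drives the inter-class proxies $\{\hat{\bm{w}}_c\}$ to minimize the hyperspherical Riesz energy, while the intra-class term forces every feature to collapse onto its proxy; hence the collapse halves of the two limits already agree. For the proxies, Theorem~\ref{thm:asyc} guarantees that any sequence of configurations asymptotically minimizing the hyperspherical energy is uniformly distributed on $\mathbb{S}^{d-1}$, so $\tfrac{1}{C}\sum_c\delta_{\hat{\bm{w}}_c}$ again converges weakly to $\sigma_{d-1}$. Because both the CE minimizer and the HUG maximizer have features collapsing onto their proxies and proxy empirical measures converging to the \emph{same} limit $\sigma_{d-1}$, their limiting configurations coincide; this is the precise sense in which the CE minimizer asymptotically converges to the HUG maximizer.

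The main obstacle is that at finite $C$ the two objectives are genuinely different --- CE reduces to a LogSumExp-type energy whereas HUG uses a Riesz-$s$ energy --- so they need not share exact optimizers and can only be matched in the limit. The delicate step is therefore to argue that both discrete energies $\Gamma$-converge, as $C\to\infty$, to continuous energies on $\mathbb{S}^{d-1}$ for which the uniform measure $\sigma_{d-1}$ is the \emph{unique} minimizer, so that the two weak limits are forced to agree. This reduces to checking that the relevant kernels are admissible --- positive-definite, or within the Riesz range covered by Theorem~\ref{thm:energy_order} --- and then invoking uniqueness of the equilibrium measure on the sphere; once the proxy limits are matched, the feature-collapse halves add no further difficulty.
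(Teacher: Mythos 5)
Your proposal follows essentially the same route as the paper's proof: the paper also invokes \cite{lu2022neural} for the weak convergence of the CE minimizer's classifier configuration to the uniform measure on $\mathbb{S}^{d-1}$, invokes Theorem~\ref{thm:asyc} for the analogous convergence of the HUG maximizer, and concludes by identifying the two limits. Your added detail (Jensen's inequality for feature collapse, the $\Gamma$-convergence/uniqueness-of-equilibrium-measure framing for why the two weak limits must agree) is more explicit than the paper's very brief argument, but it is an elaboration of the same strategy rather than a different one.
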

\vspace{-.85mm}

Theorem~\ref{thm:ce_converge} shows that the minimizer of the CE loss with unconstrained features~\cite{mixon2022neural} asymptotically converges to the maximizer of HUG (\ie, GNC). Till now, we show that HUG shares the same optimum with CE (with hyperspherical normalization), while being more flexible for decoupling inter-class feature separability and intra-class feature variability. Therefore, we argue that HUG can be an excellent alternative for the widely used CE loss in classification problems.

\vspace{-0.35mm}

\textbf{HUG maximizes mutual information}. We can view HUG as a way to maximize mutual information $\thickmuskip=2mu \medmuskip=2mu \mathcal{I}(\bm{X};\bm{Y})=\mathcal{H}(\bm{X})-\mathcal{H}(\bm{X}|\bm{Y})$, where $\bm{X}$ denotes the feature space and $\bm{Y}$ is the label space. Maximizing $\mathcal{H}(\bm{X})$ implies that the feature should be uniform over the space. Minimizing $\mathcal{H}(\bm{X}|\bm{Y})$ means that the feature from the same class should be concentrated. This is nicely connected to HUG.

\vspace{-0.35mm}

\textbf{The role of feature and class proxy norm}. Both NC and GNC do not take the norm of feature and class proxy into consideration. HUG also assume both feature and class proxy norm are projected onto some hypersphere. Although dropping these norms usually improves generalizability~\cite{liu2017sphereface,wang2018cosface,deng2019arcface,chen2019closer,chen2020simple}, training neural networks with standard CE loss still yields different class proxy norms and feature norms. We hypothesize that this is due to the underlying difference among training data distribution of different classes. One empirical evidence to support this is that average feature norm of different classes is consistent across training under different random seeds (\eg, average feature norm for digit 1 on MNIST stays the smallest in different run). \cite{liu2018decoupled,meng2021magface,kim2022adaface} empirically show that feature norm corresponds to the quality of the sample, which can also viewed as a proxy to sample uncertainty. \cite{neyshabur2015norm} theoretically shows that the norm of neuron weights (\eg, classifier) matters for its Rademacher complexity. As a trivial solution to minimize the CE loss, increasing the classifier norm (if the feature is correctly classified) can easily decrease the CE loss to zero for this sample, which is mostly caused by the softmax function. Taking both feature and class proxy norm into account greatly complicates the analysis (\eg, it results in weighted hyperspherical energy where the potentials between vectors are weighted) and seem to yield little benefit for now. We defer this issue to future investigation.

\vspace{-0.35mm}

\textbf{HUG as a general framework for designing loss functions}. HUG can be viewed as an inherently decoupled way of designing new loss functions. As long as we design a measure of hyperspherical uniformity, then HUG enables us to effortlessly turn it into a loss function for neural networks.

\vspace{-2.1mm}
\section{Experiments and Results}
\vspace{-1.8mm}

Our experiments aims to demonstrate the empirical effectiveness of HUG, so we focus on the fair comparison to the popular CE loss under the same setting. Experimental details are in Appendix~\ref{app:exp_detail}.

\vspace{-1.5mm}
\subsection{Exploratory Experiments and Ablation Study}
\vspace{-1mm}

\setlength{\columnsep}{11pt}
\begin{wraptable}{r}[0cm]{0pt}
    \scriptsize
	\centering
        \hspace{-2.4mm}
	\setlength{\tabcolsep}{2.6pt}
	\renewcommand{\arraystretch}{1.22}
	\renewcommand{\captionlabelfont}{\footnotesize}
	\begin{tabular}{lccc}
	\specialrule{0em}{0pt}{-14.25pt}
		  Method & CIFAR-10 & CIFAR-100 \\
		  \shline
            CE Loss   & 5.45 & 24.90 \\\rowcolor{Gray}
            MHE-HUG   & \textbf{5.03} & \textbf{23.50}\\\rowcolor{Gray}
            MHS-HUG   & 5.09 & 24.38\\\rowcolor{Gray}
            MGD-HUG   & 5.38 & 24.59\\
		  \specialrule{0em}{-7.5pt}{0pt}
	\end{tabular}
	\caption{\scriptsize Testing error (\%) of HUG variants on CIFAR-10 and CIFAR-100.} \label{table:hug_variants}
\vspace{-4mm}
\end{wraptable}

\textbf{Different HUG variants}. We compare different HUG variants and the CE loss on CIFAR-10 and CIFAR-100 with ResNet-18~\cite{he2016deep}. Specifically, we use Eq.~\ref{eq:mhe_hug2}, Eq.~\ref{eq:mhe_hug2} and Eq.~\ref{eq:mgd_hug} for MHE-HUG, MHS-HUG and MGD-HUG, respectively. The results are given in Table~\ref{table:hug_variants}. We can observe that all HUG variants outperform the CE loss. Among all, MHE-HUG achieves the best testing accuracy with considerable improvement over the CE loss. We note that all HUG variants are used without the CE loss. The performance gain of HUG are actually quite significant, since the CE loss is currently a default choice for classification problems and serves as a very strong baseline.

\setlength{\columnsep}{11pt}
\begin{wraptable}{r}[0cm]{0pt}
    \scriptsize
	\centering
        \hspace{-2.4mm}
	\setlength{\tabcolsep}{2.6pt}
	\renewcommand{\arraystretch}{1.22}
	\renewcommand{\captionlabelfont}{\footnotesize}
	\begin{tabular}{lccc}
	\specialrule{0em}{0pt}{-2pt}
		  Method & CIFAR-10 & CIFAR-100 \\
		  \shline
            CE Loss   & 5.45 & 24.90 \\\rowcolor{Gray}
            Fully learnable   & \textbf{5.03} & \textbf{23.50}\\\rowcolor{Gray}
            Static (random)  & 5.19 & 24.23\\\rowcolor{Gray}
            Static (optimized)  & 5.12 & 24.02\\\rowcolor{Gray}
            Partially learnable  & 5.08 & 23.89\\
		  \specialrule{0em}{-7.5pt}{0pt}
	\end{tabular}
	\caption{\scriptsize Testing error (\%) of different proxy update on CIFAR-10 and CIFAR-100.} \label{table:proxy_update}
\vspace{-4mm}
\end{wraptable}

\textbf{Different methods to update proxies}. We also evaluate how different proxy update methods will affect the classification performance. We use the same setting as Table~\ref{table:hug_variants}. For all the proxy update methods, we apply them to MHE-HUG (Eq.~\ref{eq:mhe_hug2}) under the same setting. The results are given Table~\ref{table:proxy_update}. We can observe that all the propose proxy update methods work reasonably well. More interestingly, static proxies work surprisingly well and outperform the CE loss even when all the class proxies are randomly initialized and then fixed throughout the training. The reason the static proxies work for MHE-HUG is due to Proposition~\ref{thm:randint}. This result is significant since we no longer have to train class proxies in HUG (unlike CE). When trained with large number of classes, it is GPU-memory costly for learning class proxies, which is known as one of the bottlenecks for face recognition~\cite{an2021partial}. HUG could be a promising solution to this problem.

\begin{wrapfigure}{r}{0.62\linewidth}
\vspace{-2.7em}
\centering
\includegraphics[width=0.99\linewidth]{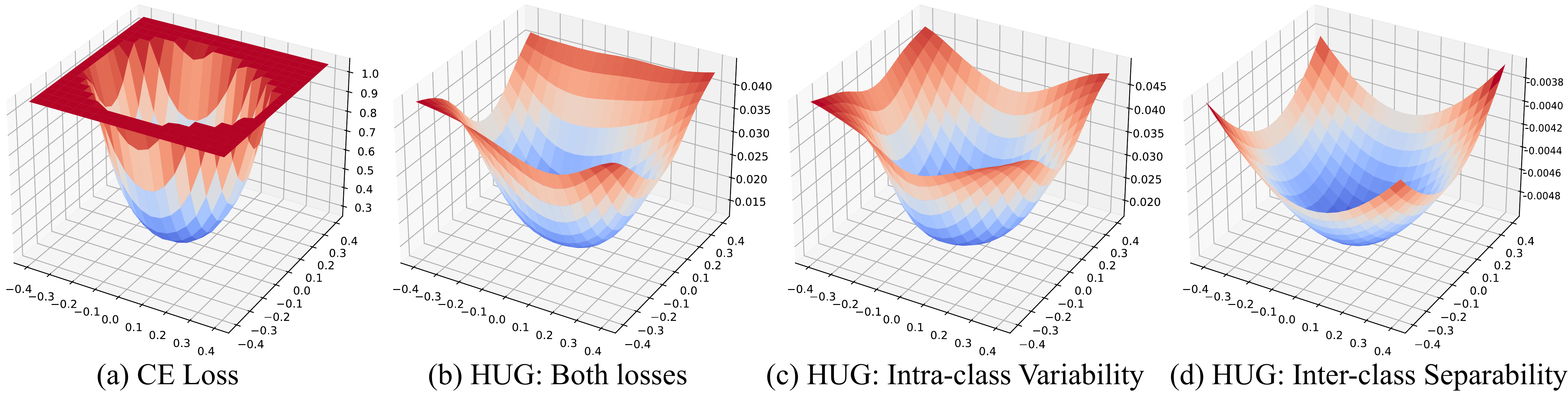}
  \vspace{-0.95em}
	\caption{\scriptsize Loss landscape. (b,c,d) show $\mathcal{L}'_{\text{MHE-HUG}}$, $T_b$ and $T_w$, respectively.}
	\label{fig:loss_landscape}
\vspace{-1em}
\end{wrapfigure}

\textbf{Loss landscape and convergence}. We perturb neuron weights (refer to \cite{li2018visualizing}) to visualize the loss landscape of HUG and CE in Figure~\ref{fig:loss_landscape}. We use MHE in HUG here. The results show that HUG yields much flatter local minima than the CE loss in general, implying that HUG has potentially stronger generalization~\cite{keskar2016large,neyshabur2017exploring}. We show more visualizations and convergence dynamics in Appendix~\ref{app:add_results}.

\setlength{\columnsep}{11pt}
\begin{wraptable}{r}[0cm]{0pt}
    \scriptsize
	\centering
        \hspace{-2.4mm}
	\setlength{\tabcolsep}{2.6pt}
	\renewcommand{\arraystretch}{1.22}
	\renewcommand{\captionlabelfont}{\footnotesize}
	\begin{tabular}{lccc}
	\specialrule{0em}{0pt}{-16pt}
		  Method & ResNet-18 & VGG-16 & DenseNet-121 \\
		  \shline
            CE Loss  & 5.45 / 24.90 & 5.28 / 22.99 & 5.04 / 21.47 \\\rowcolor{Gray}
            HUG & \textbf{5.03} / \textbf{23.50} & \textbf{5.19} / \textbf{22.77} & \textbf{4.85} / \textbf{21.30}\\
		  \specialrule{0em}{-8.5pt}{0pt}
	\end{tabular}
	\caption{\scriptsize Testing error (\%) with different architectures.} \label{table:diff_net}
\vspace{-4mm}
\end{wraptable}

\textbf{Learning with different architectures}. We evaluate HUG with different network architectures such as VGG-16~\cite{simonyan2014very}, ResNet-18~\cite{he2016deep} and DenseNet-121~\cite{huang2017densely}. Results in Table~\ref{table:diff_net} (Left number: CIFAR-10, right number: CIFAR-100) show that HUG is agnostic to different network architectures and outperforms the CE loss in every case. Although HUG works well on its own, any other methods that improve CE can also work with HUG.

\vspace{-1.5mm}
\subsection{Generalization and Robustness under Different Learning Scenarios}
\vspace{-1mm}

\setlength{\columnsep}{11pt}
\begin{wraptable}{r}[0cm]{0pt}
    \scriptsize
	\centering
        \hspace{-2.4mm}
	\setlength{\tabcolsep}{2.6pt}
	\renewcommand{\arraystretch}{1.22}
	\renewcommand{\captionlabelfont}{\footnotesize}
\begin{tabular}{lcccccccc}
\specialrule{0em}{0pt}{-16pt}
 & \multicolumn{4}{c}{CIFAR-100} & \multicolumn{4}{c}{CIFAR-10}\\[-0.5mm]
 IR & 0.2 & 0.1 & 0.02 & 0.01 & 0.2 & 0.1 & 0.02 & 0.01\\
\shline
CE   & 66.74 &62.31	&48.79 &43.82  &90.29  &87.85  &79.17  &74.11  \\\rowcolor{Gray}
HUG & \textbf{67.83} &\textbf{63.33} &\textbf{50.48} &\textbf{45.63}  &\textbf{90.41}  &\textbf{88.20}  &\textbf{79.88}  &\bf{75.14}\\
\specialrule{0em}{0pt}{-8pt}
\end{tabular}
\caption{\scriptsize Testing accuracy (\%) of long-tailed recognition.}
\vspace{-4mm}
\label{tab:lt-cifar}
\end{wraptable}

\textbf{Long-tailed recognition}. We consider the task of long-tailed recognition, where the data from different classes are imbalanced. The settings generally follow \cite{cao2019learning}, and the dataset gets more imbalanced if the imbalance ratio (IR) gets smaller. The potential of HUG in imbalanced classification is evident, as the inter-class separability in the HUG is explicitly modeled and can be easily controlled. Experimental results in Table~\ref{tab:lt-cifar} show that HUG can consistently outperform the CE loss in the challenging long-tailed setting under different imbalanced ratio.

\setlength{\columnsep}{11pt}
\begin{wraptable}{r}[0cm]{0pt}
    \scriptsize
	\centering
        \hspace{-2.4mm}
	\setlength{\tabcolsep}{2.6pt}
	\renewcommand{\arraystretch}{1.22}
	\renewcommand{\captionlabelfont}{\footnotesize}
\begin{tabular}{lcccccc}
\specialrule{0em}{0pt}{-16pt}
 & \multicolumn{3}{c}{CIFAR-100} & \multicolumn{3}{c}{CIFAR-10}\\[-0.5mm]
Memory size & 200 & 500 & 2000 & 200 & 500 & 2000 \\
  \shline
ER + CE   & 22.14 &31.02	&43.54 &49.07  &61.58  &76.89    \\\rowcolor{Gray}
ER + HUG & \textbf{23.52} & \textbf{31.92} & \textbf{43.92} & \textbf{53.74}  & \textbf{62.67}  & \textbf{77.21}  \\
\specialrule{0em}{0pt}{-8pt}
\end{tabular}
\caption{\scriptsize Final testing accuracy (\%) of continual learning.}
\vspace{-4mm}
\label{tab:continual}
\end{wraptable}

\textbf{Continual learning}. We demonstrate the potential of HUG in the class-continual learning setting, where the training data is not sampled \emph{i.i.d.} but comes in class by class. Since training data is highly biased, hyperspherical uniformity among class proxies is crucial. Due to the decoupled nature of HUG, we can easily increase the importance of inter-class separability, unlike CE. We use a simple continual learning method -- ER~\cite{riemer2018learning} where the CE loss with memory is used. We replace it with HUG. Table~\ref{tab:continual} shows HUG consistently improves ER under different memory size.

\setlength{\columnsep}{11pt}
\begin{wraptable}{r}[0cm]{0pt}
    \scriptsize
	\centering
        \hspace{-2.4mm}
	\setlength{\tabcolsep}{2.6pt}
	\renewcommand{\arraystretch}{1.35}
	\renewcommand{\captionlabelfont}{\footnotesize}
	\begin{tabular}{lcccc}
        \specialrule{0em}{0pt}{-14pt}
		  Method & Clean & $l_{\infty}$=2/255 & $l_{\infty}$=4/255 & $l_{\infty}$=8/255\\
		  \shline
            CE Loss  & 5.45 / 24.90 & 7.94 / 2.12 & 0.61 / 0 & 0 / 0\\\rowcolor{Gray}
            HUG & \textbf{5.03} / \textbf{23.50} & \textbf{15.24} / \textbf{5.26} & \textbf{3.45} / \textbf{1.24} & \textbf{1.76} / \textbf{0.44}\\
		  \specialrule{0em}{-7.5pt}{0pt}
	\end{tabular}
	\caption{\scriptsize Testing accuracy (\%) under adversarial attacks.} \label{table:adv_rob}
\vspace{-2.5mm}
\end{wraptable}

\textbf{Adversarial robustness}. We further test HUG's adversarial robustness. In our experiments, we consider the classical white-box PGD attack~\citep{madry2017towards} on ResNet-18. The PGD attack iteration is set as 100 and the attack strength level is set as  2/255, 4/255, 8/255 in $l_{\infty}$ norm. All networks are naturally training with either HUG or CE loss. Results in Table~\ref{table:adv_rob} demonstrate that HUG yields consistently stronger adversarial robustness than the CE loss.

\setlength{\columnsep}{9pt}
\begin{wraptable}{r}[0cm]{0pt}
    \scriptsize
	\centering
        \hspace{-2.4mm}
	\setlength{\tabcolsep}{2.3pt}
	\renewcommand{\arraystretch}{1.25}
	\renewcommand{\captionlabelfont}{\footnotesize}
	\begin{tabular}{lccc}
        \specialrule{0em}{0pt}{-14.2pt}
		  Task & MRPC & SST-2 & WNLI\\
		  \shline
            CE Loss  & 84.8 & 91.6 & 33.8 \\\rowcolor{Gray}
            HUG & \textbf{85.8} & \textbf{91.8} & \textbf{34.0}\\
		  \specialrule{0em}{-7.5pt}{0pt}
	\end{tabular}
	\caption{\scriptsize NLP testing accuracy (\%)} \label{table:nlp}
\vspace{-5mm}
\end{wraptable}

\textbf{NLP tasks}.  As an exploration, we evaluate HUG on some simple NLP classification tasks. Our experiments follow the same settings as \cite{hui2021evaluation} and finetune the BERT model~\cite{devlin2018bert} in these tasks. Table~\ref{table:nlp} shows that HUG yields better generalizability than CE, demonstrating its potential for NLP.

\vspace{-1.7mm}
\section{Related Work and Concluding Remarks}
\vspace{-1.5mm}

We start by generalizing and decoupling the NC phenomenon, obtaining two basic principles for loss functions. Based on these principles, we identify a quantity hyperspherical uniformity gap, which not only decouples NC but also provides a general framework for designing loss functions. We demonstrate a few simple HUG variants that outperform the CE loss in terms of generalization and adversarial robustness. There is a large body of excellent work in NC that is related to HUG, such as \cite{zhu2021geometric,thrampoulidis2022imbalance,han2021neural,hui2022limitations}. \cite{zhou2022all} extends the study of NC to more practical loss functions (\eg, focal loss and losses with label smoothing). Different from existing work in hyperspherical uniformity~\cite{liu2018learning,lin2020regularizing,liu2021learning} and generic diversity (decorrelation)~\cite{cogswell2015reducing,xie2017uncorrelation,bansal2018can,wang2020orthogonal,miyato2018spectral,chen2022principle}, HUG works as a new learning target (used without CE) rather than acting as a regularizer for the CE loss (used together with CE). Following the spirit of \cite{hui2021evaluation}, we demonstrate the effectiveness and potential of HUG as a valid substitute for CE.

\vspace{-0.35mm}

\begin{wrapfigure}{r}{0.27\linewidth}
\vspace{-1.7em}
\centering
\includegraphics[width=0.99\linewidth]{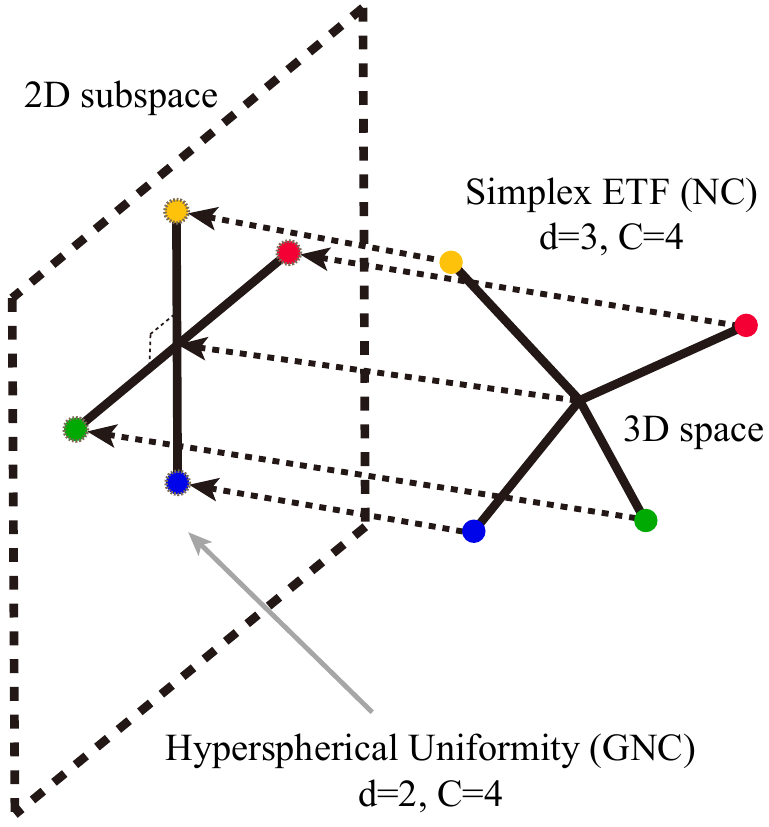}
  \vspace{-2em}
	\caption{\scriptsize Geometric connection between GNC and \cite{zhou2022optimization}.}
	\label{fig:specialcase}
\vspace{-0.9em}
\end{wrapfigure}

\textbf{Relevant theoretical results}. \cite{zhou2022optimization} has discussed NC under the case of $\thickmuskip=2mu \medmuskip=2mu d<C-1$, and shown that the global solution in this case yields the best rand-$d$ approximation of the simplex ETF. Along with \cite{zhou2022optimization}, GNC gives a more profound characterization of the convergence of class-means. We show a special case of $\thickmuskip=2mu \medmuskip=2mu 
d=2, C=4$. It is easy to see that hyperspherical uniformity in this case forms four vectors with adjacency ones being perpendicular. This is also the case captured by the best rank-$2$ approximation (\ie, a 2-dimensional hyperplane with simplex ETF projected onto it). Figure~\ref{fig:specialcase} gives a geometric interpretation for the connection between \cite{zhou2022optimization} and GNC. \cite{borodachov2019discrete} provides an in-depth introduction and comprehensive theoretical analysis for the energy minimization problem, which significantly benefits this work.

\vspace{-0.35mm}

\textbf{Connection to contrastive learning}. The goal of contrastive learning~\cite{hadsell2006dimensionality,chen2020simple,he2020momentum,tian2020makes,wang2020understanding,chen2021exploring,zbontar2021barlow} is to learn discriminative features through instance-wise discrimination and contrast. Despite the lack of class labels, \cite{wang2020understanding} discovers that contrastive learning performs sample-wise alignment and sample-wise uniformity, sharing a similar high-level spirit to intra-class variability and inter-class separability. \cite{khosla2020supervised} adapts contrastive learning to the supervised settings where labeled samples are available, which also shares conceptual similarity to our framework and settings.

\vspace{-0.35mm}

\textbf{Related work on (deep) metric learning}. Metric learning also adopts similar idea where similar samples are pulled together and dissimilar ones are pushed away. HUG has intrinsic connections to a number of loss functions in metric learning~\cite{hadsell2006dimensionality,weinberger2009distance,oh2016deep,sohn2016improved,wang2019multi,qian2019softtriple,yu2019deep,sun2020circle,boudiaf2020unifying,ge2018deep,wang2019ranked,wang2019ranked,elezi2020group,oh2017deep}.

\vspace{-1.7mm}
\section{Broader Impact and Future Work}
\vspace{-1.5mm}

Our work reveals the underlying principle -- hyperspherical uniformity gap, for classification loss function, especially in the context of deep learning. We provide a simple yet effective framework for designing decoupled classification loss functions. Rather than previous objective functions that are coupled and treated as a black-box, our loss function has clear physical interpretation and is fully decoupled for different functionalities. These characteristics may help neural networks to identify intrinsic structures hidden in data and true causes for classifying images. HUG may have broader applications in interpretable machine learning and fairness / bias problems.

\vspace{-0.35mm}

Our work is by no means perfect, and there are many aspects that require future investigation. For example, the implicit data mining in CE~\cite{liu2022sphereface} is missing in the current HUG design, current HUG losses are more sensitive to hyperparameters than CE (the flexibility of decoupling also comes at a price), current HUG losses could be more unstable to train (more difficult to converge) than CE, and it requires more large-scale experiments to fully validate the superiority of current HUG losses. We hope that our work can serve as a good starting point to rethink classification losses in deep learning.

\vspace{-1.7mm}
\section*{Acknowledgement}
\vspace{-1.5mm}

The authors would like to sincerely thank the anonymous reviewers for all the detailed and valuable suggestions that have significantly improved the paper. This work is supported by the German Federal Ministry of Education and Research (BMBF): T\"ubingen AI Center, FKZ: 01IS18039A, 01IS18039B; and by the Machine Learning Cluster of Excellence, EXC number 2064/1 -- Project number 390727645. AW acknowledges support from a Turing AI Fellowship under EPSRC grant EP/V025279/1, and the Leverhulme Trust via CFI.

\newpage

{\small
\bibliographystyle{iclr2023_conference}
\bibliography{bib}

\begin{thebibliography}{89}
\providecommand{\natexlab}[1]{#1}
\providecommand{\url}[1]{\texttt{#1}}
\expandafter\ifx\csname urlstyle\endcsname\relax
  \providecommand{\doi}[1]{doi: #1}\else
  \providecommand{\doi}{doi: \begingroup \urlstyle{rm}\Url}\fi

\bibitem[An et~al.(2021)An, Zhu, Gao, Xiao, Zhao, Feng, Wu, Qin, Zhang, Zhang,
  et~al.]{an2021partial}
Xiang An, Xuhan Zhu, Yuan Gao, Yang Xiao, Yongle Zhao, Ziyong Feng, Lan Wu, Bin
  Qin, Ming Zhang, Debing Zhang, et~al.
\newblock Partial fc: Training 10 million identities on a single machine.
\newblock In \emph{ICCV}, 2021.

\bibitem[Bansal et~al.(2018)Bansal, Chen, and Wang]{bansal2018can}
Nitin Bansal, Xiaohan Chen, and Zhangyang Wang.
\newblock Can we gain more from orthogonality regularizations in training deep
  networks?
\newblock In \emph{NeurIPS}, 2018.

\bibitem[Borodachov et~al.(2019)Borodachov, Hardin, and
  Saff]{borodachov2019discrete}
Sergiy~V Borodachov, Douglas~P Hardin, and Edward~B Saff.
\newblock \emph{Discrete energy on rectifiable sets}.
\newblock Springer, 2019.

\bibitem[Boudiaf et~al.(2020)Boudiaf, Rony, Ziko, Granger, Pedersoli,
  Piantanida, and Ayed]{boudiaf2020unifying}
Malik Boudiaf, J{\'e}r{\^o}me Rony, Imtiaz~Masud Ziko, Eric Granger, Marco
  Pedersoli, Pablo Piantanida, and Ismail~Ben Ayed.
\newblock A unifying mutual information view of metric learning: cross-entropy
  vs. pairwise losses.
\newblock In \emph{ECCV}, 2020.

\bibitem[Buzzega et~al.(2020)Buzzega, Boschini, Porrello, Abati, and
  Calderara]{buzzega2020dark}
Pietro Buzzega, Matteo Boschini, Angelo Porrello, Davide Abati, and Simone
  Calderara.
\newblock Dark experience for general continual learning: a strong, simple
  baseline.
\newblock \emph{Advances in neural information processing systems},
  33:\penalty0 15920--15930, 2020.

\bibitem[Cao et~al.(2019)Cao, Wei, Gaidon, Arechiga, and Ma]{cao2019learning}
Kaidi Cao, Colin Wei, Adrien Gaidon, Nikos Arechiga, and Tengyu Ma.
\newblock Learning imbalanced datasets with label-distribution-aware margin
  loss.
\newblock In \emph{NeurIPS}, volume~32, 2019.

\bibitem[Chen et~al.(2022)Chen, Zhang, Cheng, Awadallah, and
  Wang]{chen2022principle}
Tianlong Chen, Zhenyu Zhang, Yu~Cheng, Ahmed Awadallah, and Zhangyang Wang.
\newblock The principle of diversity: Training stronger vision transformers
  calls for reducing all levels of redundancy.
\newblock In \emph{CVPR}, 2022.

\bibitem[Chen et~al.(2020)Chen, Kornblith, Norouzi, and Hinton]{chen2020simple}
Ting Chen, Simon Kornblith, Mohammad Norouzi, and Geoffrey Hinton.
\newblock A simple framework for contrastive learning of visual
  representations.
\newblock In \emph{ICML}, 2020.

\bibitem[Chen et~al.(2019)Chen, Liu, Kira, Wang, and Huang]{chen2019closer}
Wei-Yu Chen, Yen-Cheng Liu, Zsolt Kira, Yu-Chiang~Frank Wang, and Jia-Bin
  Huang.
\newblock A closer look at few-shot classification.
\newblock \emph{arXiv preprint arXiv:1904.04232}, 2019.

\bibitem[Chen \& He(2021)Chen and He]{chen2021exploring}
Xinlei Chen and Kaiming He.
\newblock Exploring simple siamese representation learning.
\newblock In \emph{CVPR}, 2021.

\bibitem[Cogswell et~al.(2015)Cogswell, Ahmed, Girshick, Zitnick, and
  Batra]{cogswell2015reducing}
Michael Cogswell, Faruk Ahmed, Ross Girshick, Larry Zitnick, and Dhruv Batra.
\newblock Reducing overfitting in deep networks by decorrelating
  representations.
\newblock In \emph{arXiv preprint arXiv:1511.06068}, 2015.

\bibitem[Cohn \& Kumar(2007)Cohn and Kumar]{cohn2007universally}
Henry Cohn and Abhinav Kumar.
\newblock Universally optimal distribution of points on spheres.
\newblock \emph{Journal of the American Mathematical Society}, 20\penalty0
  (1):\penalty0 99--148, 2007.

\bibitem[Deng et~al.(2009)Deng, Dong, Socher, Li, Li, and
  Fei-Fei]{deng2009imagenet}
Jia Deng, Wei Dong, Richard Socher, Li-Jia Li, Kai Li, and Li~Fei-Fei.
\newblock Imagenet: A large-scale hierarchical image database.
\newblock In \emph{CVPR}, 2009.

\bibitem[Deng et~al.(2019)Deng, Guo, Xue, and Zafeiriou]{deng2019arcface}
Jiankang Deng, Jia Guo, Niannan Xue, and Stefanos Zafeiriou.
\newblock Arcface: Additive angular margin loss for deep face recognition.
\newblock In \emph{CVPR}, 2019.

\bibitem[Devlin et~al.(2018)Devlin, Chang, Lee, and Toutanova]{devlin2018bert}
Jacob Devlin, Ming-Wei Chang, Kenton Lee, and Kristina Toutanova.
\newblock Bert: Pre-training of deep bidirectional transformers for language
  understanding.
\newblock \emph{arXiv preprint arXiv:1810.04805}, 2018.

\bibitem[Devlin et~al.(2019)Devlin, Chang, Lee, and Toutanova]{kenton2019bert}
Jacob Devlin, Ming-Wei Chang, Kenton Lee, and Kristina Toutanova.
\newblock Bert: Pre-training of deep bidirectional transformers for language
  understanding.
\newblock In \emph{NAACL-HLT}, 2019.

\bibitem[Elezi et~al.(2020)Elezi, Vascon, Torcinovich, Pelillo, and
  Leal-Taix{\'e}]{elezi2020group}
Ismail Elezi, Sebastiano Vascon, Alessandro Torcinovich, Marcello Pelillo, and
  Laura Leal-Taix{\'e}.
\newblock The group loss for deep metric learning.
\newblock In \emph{ECCV}, 2020.

\bibitem[Elzinga \& Hearn(1972)Elzinga and Hearn]{elzinga1972minimum}
D~Jack Elzinga and Donald~W Hearn.
\newblock The minimum covering sphere problem.
\newblock \emph{Management science}, 19\penalty0 (1):\penalty0 96--104, 1972.

\bibitem[Fisher(1936)]{fisher1936use}
Ronald~A Fisher.
\newblock The use of multiple measurements in taxonomic problems.
\newblock \emph{Annals of eugenics}, 7\penalty0 (2):\penalty0 179--188, 1936.

\bibitem[Garc{\'\i}a-Portugu{\'e}s \& Verdebout(2018)Garc{\'\i}a-Portugu{\'e}s
  and Verdebout]{garcia2018overview}
Eduardo Garc{\'\i}a-Portugu{\'e}s and Thomas Verdebout.
\newblock An overview of uniformity tests on the hypersphere.
\newblock \emph{arXiv preprint arXiv:1804.00286}, 2018.

\bibitem[Ge(2018)]{ge2018deep}
Weifeng Ge.
\newblock Deep metric learning with hierarchical triplet loss.
\newblock In \emph{ECCV}, 2018.

\bibitem[Glorot \& Bengio(2010)Glorot and Bengio]{glorot2010understanding}
Xavier Glorot and Yoshua Bengio.
\newblock Understanding the difficulty of training deep feedforward neural
  networks.
\newblock In \emph{AISTATS}, 2010.

\bibitem[Guo et~al.(2016)Guo, Zhang, Hu, He, and Gao]{guo2016ms}
Yandong Guo, Lei Zhang, Yuxiao Hu, Xiaodong He, and Jianfeng Gao.
\newblock Ms-celeb-1m: A dataset and benchmark for large-scale face
  recognition.
\newblock In \emph{ECCV}, 2016.

\bibitem[Hadsell et~al.(2006)Hadsell, Chopra, and
  LeCun]{hadsell2006dimensionality}
Raia Hadsell, Sumit Chopra, and Yann LeCun.
\newblock Dimensionality reduction by learning an invariant mapping.
\newblock In \emph{CVPR}, 2006.

\bibitem[Hales(1992)]{hales1992sphere}
Thomas~C Hales.
\newblock The sphere packing problem.
\newblock \emph{Journal of Computational and Applied Mathematics}, 44\penalty0
  (1):\penalty0 41--76, 1992.

\bibitem[Han et~al.(2021)Han, Papyan, and Donoho]{han2021neural}
XY~Han, Vardan Papyan, and David~L Donoho.
\newblock Neural collapse under mse loss: Proximity to and dynamics on the
  central path.
\newblock \emph{arXiv preprint arXiv:2106.02073}, 2021.

\bibitem[Hardin et~al.(2004)Hardin, Saff, et~al.]{hardin2004discretizing}
Doug~P Hardin, Edward~B Saff, et~al.
\newblock Discretizing manifolds via minimum energy points.
\newblock \emph{Notices of the AMS}, 51\penalty0 (10):\penalty0 1186--1194,
  2004.

\bibitem[He et~al.(2015)He, Zhang, Ren, and Sun]{he2015delving}
Kaiming He, Xiangyu Zhang, Shaoqing Ren, and Jian Sun.
\newblock Delving deep into rectifiers: Surpassing human-level performance on
  imagenet classification.
\newblock In \emph{ICCV}, 2015.

\bibitem[He et~al.(2016)He, Zhang, Ren, and Sun]{he2016deep}
Kaiming He, Xiangyu Zhang, Shaoqing Ren, and Jian Sun.
\newblock Deep residual learning for image recognition.
\newblock In \emph{CVPR}, 2016.

\bibitem[He et~al.(2020)He, Fan, Wu, Xie, and Girshick]{he2020momentum}
Kaiming He, Haoqi Fan, Yuxin Wu, Saining Xie, and Ross Girshick.
\newblock Momentum contrast for unsupervised visual representation learning.
\newblock In \emph{CVPR}, 2020.

\bibitem[Huang et~al.(2017)Huang, Liu, Van Der~Maaten, and
  Weinberger]{huang2017densely}
Gao Huang, Zhuang Liu, Laurens Van Der~Maaten, and Kilian~Q Weinberger.
\newblock Densely connected convolutional networks.
\newblock In \emph{CVPR}, 2017.

\bibitem[Hui \& Belkin(2021)Hui and Belkin]{hui2021evaluation}
Like Hui and Mikhail Belkin.
\newblock Evaluation of neural architectures trained with square loss vs
  cross-entropy in classification tasks.
\newblock In \emph{ICLR}, 2021.

\bibitem[Hui et~al.(2022)Hui, Belkin, and Nakkiran]{hui2022limitations}
Like Hui, Mikhail Belkin, and Preetum Nakkiran.
\newblock Limitations of neural collapse for understanding generalization in
  deep learning.
\newblock \emph{arXiv preprint arXiv:2202.08384}, 2022.

\bibitem[Keskar et~al.(2016)Keskar, Mudigere, Nocedal, Smelyanskiy, and
  Tang]{keskar2016large}
Nitish~Shirish Keskar, Dheevatsa Mudigere, Jorge Nocedal, Mikhail Smelyanskiy,
  and Ping Tak~Peter Tang.
\newblock On large-batch training for deep learning: Generalization gap and
  sharp minima.
\newblock \emph{arXiv preprint arXiv:1609.04836}, 2016.

\bibitem[Khosla et~al.(2020)Khosla, Teterwak, Wang, Sarna, Tian, Isola,
  Maschinot, Liu, and Krishnan]{khosla2020supervised}
Prannay Khosla, Piotr Teterwak, Chen Wang, Aaron Sarna, Yonglong Tian, Phillip
  Isola, Aaron Maschinot, Ce~Liu, and Dilip Krishnan.
\newblock Supervised contrastive learning.
\newblock In \emph{NeurIPS}, 2020.

\bibitem[Kim et~al.(2022)Kim, Jain, and Liu]{kim2022adaface}
Minchul Kim, Anil~K Jain, and Xiaoming Liu.
\newblock Adaface: Quality adaptive margin for face recognition.
\newblock In \emph{CVPR}, 2022.

\bibitem[Krizhevsky et~al.(2017)Krizhevsky, Sutskever, and
  Hinton]{krizhevsky2017imagenet}
Alex Krizhevsky, Ilya Sutskever, and Geoffrey~E Hinton.
\newblock Imagenet classification with deep convolutional neural networks.
\newblock \emph{Communications of the ACM}, 60\penalty0 (6):\penalty0 84--90,
  2017.

\bibitem[Kuijlaars \& Saff(1998)Kuijlaars and Saff]{kuijlaars1998asymptotics}
Arno Kuijlaars and E~Saff.
\newblock Asymptotics for minimal discrete energy on the sphere.
\newblock \emph{Transactions of the American Mathematical Society},
  350\penalty0 (2):\penalty0 523--538, 1998.

\bibitem[Landkof(1972)]{landkof1972foundations}
NS~Landkof.
\newblock \emph{Foundations of modern potential theory}, volume 180.
\newblock Springer, 1972.

\bibitem[Li et~al.(2018)Li, Xu, Taylor, Studer, and
  Goldstein]{li2018visualizing}
Hao Li, Zheng Xu, Gavin Taylor, Christoph Studer, and Tom Goldstein.
\newblock Visualizing the loss landscape of neural nets.
\newblock In \emph{NeurIPS}, 2018.

\bibitem[Lin et~al.(2020)Lin, Liu, Liu, Feng, Yu, Rehg, Xiong, and
  Song]{lin2020regularizing}
Rongmei Lin, Weiyang Liu, Zhen Liu, Chen Feng, Zhiding Yu, James~M Rehg,
  Li~Xiong, and Le~Song.
\newblock Regularizing neural networks via minimizing hyperspherical energy.
\newblock In \emph{CVPR}, 2020.

\bibitem[Liu et~al.(2016)Liu, Wen, Yu, and Yang]{liu2016large}
Weiyang Liu, Yandong Wen, Zhiding Yu, and Meng Yang.
\newblock Large-margin softmax loss for convolutional neural networks.
\newblock In \emph{ICML}, 2016.

\bibitem[Liu et~al.(2017{\natexlab{a}})Liu, Wen, Yu, Li, Raj, and
  Song]{liu2017sphereface}
Weiyang Liu, Yandong Wen, Zhiding Yu, Ming Li, Bhiksha Raj, and Le~Song.
\newblock Sphereface: Deep hypersphere embedding for face recognition.
\newblock In \emph{CVPR}, 2017{\natexlab{a}}.

\bibitem[Liu et~al.(2017{\natexlab{b}})Liu, Zhang, Li, Yu, Dai, Zhao, and
  Song]{liu2017deep}
Weiyang Liu, Yan-Ming Zhang, Xingguo Li, Zhiding Yu, Bo~Dai, Tuo Zhao, and
  Le~Song.
\newblock Deep hyperspherical learning.
\newblock In \emph{NIPS}, 2017{\natexlab{b}}.

\bibitem[Liu et~al.(2018{\natexlab{a}})Liu, Lin, Liu, Liu, Yu, Dai, and
  Song]{liu2018learning}
Weiyang Liu, Rongmei Lin, Zhen Liu, Lixin Liu, Zhiding Yu, Bo~Dai, and Le~Song.
\newblock Learning towards minimum hyperspherical energy.
\newblock In \emph{NeurIPS}, 2018{\natexlab{a}}.

\bibitem[Liu et~al.(2018{\natexlab{b}})Liu, Liu, Yu, Dai, Lin, Wang, Rehg, and
  Song]{liu2018decoupled}
Weiyang Liu, Zhen Liu, Zhiding Yu, Bo~Dai, Rongmei Lin, Yisen Wang, James~M
  Rehg, and Le~Song.
\newblock Decoupled networks.
\newblock In \emph{CVPR}, 2018{\natexlab{b}}.

\bibitem[Liu et~al.(2021{\natexlab{a}})Liu, Lin, Liu, Rehg, Paull, Xiong, Song,
  and Weller]{liu2021orthogonal}
Weiyang Liu, Rongmei Lin, Zhen Liu, James~M Rehg, Liam Paull, Li~Xiong,
  Le~Song, and Adrian Weller.
\newblock Orthogonal over-parameterized training.
\newblock In \emph{CVPR}, 2021{\natexlab{a}}.

\bibitem[Liu et~al.(2021{\natexlab{b}})Liu, Lin, Liu, Xiong, Sch{\"o}lkopf, and
  Weller]{liu2021learning}
Weiyang Liu, Rongmei Lin, Zhen Liu, Li~Xiong, Bernhard Sch{\"o}lkopf, and
  Adrian Weller.
\newblock Learning with hyperspherical uniformity.
\newblock In \emph{AISTATS}, 2021{\natexlab{b}}.

\bibitem[Liu et~al.(2023)Liu, Wen, Raj, Singh, and Weller]{liu2022sphereface}
Weiyang Liu, Yandong Wen, Bhiksha Raj, Rita Singh, and Adrian Weller.
\newblock Sphereface revived: Unifying hyperspherical face recognition.
\newblock \emph{IEEE Transactions on Pattern Analysis and Machine
  Intelligence}, 45\penalty0 (02):\penalty0 2458--2474, 2023.

\bibitem[Lu \& Steinerberger(2022)Lu and Steinerberger]{lu2022neural}
Jianfeng Lu and Stefan Steinerberger.
\newblock Neural collapse under cross-entropy loss.
\newblock \emph{Applied and Computational Harmonic Analysis}, 59:\penalty0
  224--241, 2022.

\bibitem[Madry et~al.(2017)Madry, Makelov, Schmidt, Tsipras, and
  Vladu]{madry2017towards}
Aleksander Madry, Aleksandar Makelov, Ludwig Schmidt, Dimitris Tsipras, and
  Adrian Vladu.
\newblock Towards deep learning models resistant to adversarial attacks.
\newblock \emph{arXiv preprint arXiv:1706.06083}, 2017.

\bibitem[Meng et~al.(2021)Meng, Zhao, Huang, and Zhou]{meng2021magface}
Qiang Meng, Shichao Zhao, Zhida Huang, and Feng Zhou.
\newblock Magface: A universal representation for face recognition and quality
  assessment.
\newblock In \emph{CVPR}, 2021.

\bibitem[Mixon et~al.(2022)Mixon, Parshall, and Pi]{mixon2022neural}
Dustin~G Mixon, Hans Parshall, and Jianzong Pi.
\newblock Neural collapse with unconstrained features.
\newblock \emph{Sampling Theory, Signal Processing, and Data Analysis},
  20\penalty0 (2):\penalty0 1--13, 2022.

\bibitem[Miyato et~al.(2018)Miyato, Kataoka, Koyama, and
  Yoshida]{miyato2018spectral}
Takeru Miyato, Toshiki Kataoka, Masanori Koyama, and Yuichi Yoshida.
\newblock Spectral normalization for generative adversarial networks.
\newblock \emph{arXiv preprint arXiv:1802.05957}, 2018.

\bibitem[Mnih et~al.(2015)Mnih, Kavukcuoglu, Silver, Rusu, Veness, Bellemare,
  Graves, Riedmiller, Fidjeland, Ostrovski, et~al.]{mnih2015human}
Volodymyr Mnih, Koray Kavukcuoglu, David Silver, Andrei~A Rusu, Joel Veness,
  Marc~G Bellemare, Alex Graves, Martin Riedmiller, Andreas~K Fidjeland, Georg
  Ostrovski, et~al.
\newblock Human-level control through deep reinforcement learning.
\newblock \emph{nature}, 518\penalty0 (7540):\penalty0 529--533, 2015.

\bibitem[Neyshabur et~al.(2015)Neyshabur, Tomioka, and
  Srebro]{neyshabur2015norm}
Behnam Neyshabur, Ryota Tomioka, and Nathan Srebro.
\newblock Norm-based capacity control in neural networks.
\newblock In \emph{COLT}, 2015.

\bibitem[Neyshabur et~al.(2017)Neyshabur, Bhojanapalli, McAllester, and
  Srebro]{neyshabur2017exploring}
Behnam Neyshabur, Srinadh Bhojanapalli, David McAllester, and Nati Srebro.
\newblock Exploring generalization in deep learning.
\newblock In \emph{NIPS}, 2017.

\bibitem[Oh~Song et~al.(2016)Oh~Song, Xiang, Jegelka, and Savarese]{oh2016deep}
Hyun Oh~Song, Yu~Xiang, Stefanie Jegelka, and Silvio Savarese.
\newblock Deep metric learning via lifted structured feature embedding.
\newblock In \emph{CVPR}, 2016.

\bibitem[Oh~Song et~al.(2017)Oh~Song, Jegelka, Rathod, and Murphy]{oh2017deep}
Hyun Oh~Song, Stefanie Jegelka, Vivek Rathod, and Kevin Murphy.
\newblock Deep metric learning via facility location.
\newblock In \emph{CVPR}, 2017.

\bibitem[Papyan et~al.(2020)Papyan, Han, and Donoho]{papyan2020prevalence}
Vardan Papyan, XY~Han, and David~L Donoho.
\newblock Prevalence of neural collapse during the terminal phase of deep
  learning training.
\newblock \emph{Proceedings of the National Academy of Sciences}, 117\penalty0
  (40):\penalty0 24652--24663, 2020.

\bibitem[Qian et~al.(2019)Qian, Shang, Sun, Hu, Li, and
  Jin]{qian2019softtriple}
Qi~Qian, Lei Shang, Baigui Sun, Juhua Hu, Hao Li, and Rong Jin.
\newblock Softtriple loss: Deep metric learning without triplet sampling.
\newblock In \emph{ICCV}, 2019.

\bibitem[Riemer et~al.(2018)Riemer, Cases, Ajemian, Liu, Rish, Tu, and
  Tesauro]{riemer2018learning}
Matthew Riemer, Ignacio Cases, Robert Ajemian, Miao Liu, Irina Rish, Yuhai Tu,
  and Gerald Tesauro.
\newblock Learning to learn without forgetting by maximizing transfer and
  minimizing interference.
\newblock \emph{arXiv preprint arXiv:1810.11910}, 2018.

\bibitem[Saff \& Kuijlaars(1997)Saff and Kuijlaars]{saff1997distributing}
Edward~B Saff and Amo~BJ Kuijlaars.
\newblock Distributing many points on a sphere.
\newblock \emph{The mathematical intelligencer}, 19\penalty0 (1):\penalty0
  5--11, 1997.

\bibitem[Silver et~al.(2016)Silver, Huang, Maddison, Guez, Sifre, Van
  Den~Driessche, Schrittwieser, Antonoglou, Panneershelvam, Lanctot,
  et~al.]{silver2016mastering}
David Silver, Aja Huang, Chris~J Maddison, Arthur Guez, Laurent Sifre, George
  Van Den~Driessche, Julian Schrittwieser, Ioannis Antonoglou, Veda
  Panneershelvam, Marc Lanctot, et~al.
\newblock Mastering the game of go with deep neural networks and tree search.
\newblock \emph{nature}, 529\penalty0 (7587):\penalty0 484--489, 2016.

\bibitem[Simonyan \& Zisserman(2014)Simonyan and Zisserman]{simonyan2014very}
Karen Simonyan and Andrew Zisserman.
\newblock Very deep convolutional networks for large-scale image recognition.
\newblock \emph{arXiv preprint arXiv:1409.1556}, 2014.

\bibitem[Smale(1998)]{smale1998mathematical}
Steve Smale.
\newblock Mathematical problems for the next century.
\newblock \emph{The mathematical intelligencer}, 20\penalty0 (2):\penalty0
  7--15, 1998.

\bibitem[Sohn(2016)]{sohn2016improved}
Kihyuk Sohn.
\newblock Improved deep metric learning with multi-class n-pair loss objective.
\newblock In \emph{NIPS}, 2016.

\bibitem[Sun et~al.(2020)Sun, Cheng, Zhang, Zhang, Zheng, Wang, and
  Wei]{sun2020circle}
Yifan Sun, Changmao Cheng, Yuhan Zhang, Chi Zhang, Liang Zheng, Zhongdao Wang,
  and Yichen Wei.
\newblock Circle loss: A unified perspective of pair similarity optimization.
\newblock In \emph{CVPR}, 2020.

\bibitem[Tammes(1930)]{tammes1930origin}
Pieter Merkus~Lambertus Tammes.
\newblock On the origin of number and arrangement of the places of exit on the
  surface of pollen-grains.
\newblock \emph{Recueil des travaux botaniques n{\'e}erlandais}, 27\penalty0
  (1):\penalty0 1--84, 1930.

\bibitem[Thomson(1904)]{thomson1904xxiv}
Joseph~John Thomson.
\newblock On the structure of the atom: an investigation of the stability and
  periods of oscillation of a number of corpuscles arranged at equal intervals
  around the circumference of a circle; with application of the results to the
  theory of atomic structure.
\newblock \emph{The London, Edinburgh, and Dublin Philosophical Magazine and
  Journal of Science}, 7\penalty0 (39):\penalty0 237--265, 1904.

\bibitem[Thrampoulidis et~al.(2022)Thrampoulidis, Kini, Vakilian, and
  Behnia]{thrampoulidis2022imbalance}
Christos Thrampoulidis, Ganesh~R Kini, Vala Vakilian, and Tina Behnia.
\newblock Imbalance trouble: Revisiting neural-collapse geometry.
\newblock \emph{arXiv preprint arXiv:2208.05512}, 2022.

\bibitem[Tian et~al.(2020)Tian, Sun, Poole, Krishnan, Schmid, and
  Isola]{tian2020makes}
Yonglong Tian, Chen Sun, Ben Poole, Dilip Krishnan, Cordelia Schmid, and
  Phillip Isola.
\newblock What makes for good views for contrastive learning?
\newblock In \emph{NeurIPS}, 2020.

\bibitem[Van~der Maaten \& Hinton(2008)Van~der Maaten and
  Hinton]{van2008visualizing}
Laurens Van~der Maaten and Geoffrey Hinton.
\newblock Visualizing data using t-sne.
\newblock \emph{Journal of machine learning research}, 9\penalty0 (11), 2008.

\bibitem[Waldron(2018)]{waldron2018introduction}
Shayne~FD Waldron.
\newblock \emph{An introduction to finite tight frames}.
\newblock Springer, 2018.

\bibitem[Wang et~al.(2018{\natexlab{a}})Wang, Liu, Liu, and
  Cheng]{wang2018additive}
Feng Wang, Weiyang Liu, Haijun Liu, and Jian Cheng.
\newblock Additive margin softmax for face verification.
\newblock \emph{arXiv preprint arXiv:1801.05599}, 2018{\natexlab{a}}.

\bibitem[Wang et~al.(2018{\natexlab{b}})Wang, Wang, Zhou, Ji, Gong, Zhou, Li,
  and Liu]{wang2018cosface}
Hao Wang, Yitong Wang, Zheng Zhou, Xing Ji, Dihong Gong, Jingchao Zhou, Zhifeng
  Li, and Wei Liu.
\newblock Cosface: Large margin cosine loss for deep face recognition.
\newblock In \emph{CVPR}, 2018{\natexlab{b}}.

\bibitem[Wang et~al.(2020)Wang, Chen, Chakraborty, and Yu]{wang2020orthogonal}
Jiayun Wang, Yubei Chen, Rudrasis Chakraborty, and Stella~X Yu.
\newblock Orthogonal convolutional neural networks.
\newblock In \emph{CVPR}, 2020.

\bibitem[Wang \& Isola(2020)Wang and Isola]{wang2020understanding}
Tongzhou Wang and Phillip Isola.
\newblock Understanding contrastive representation learning through alignment
  and uniformity on the hypersphere.
\newblock In \emph{ICML}, 2020.

\bibitem[Wang et~al.(2019{\natexlab{a}})Wang, Hua, Kodirov, Hu, Garnier, and
  Robertson]{wang2019ranked}
Xinshao Wang, Yang Hua, Elyor Kodirov, Guosheng Hu, Romain Garnier, and Neil~M
  Robertson.
\newblock Ranked list loss for deep metric learning.
\newblock In \emph{CVPR}, 2019{\natexlab{a}}.

\bibitem[Wang et~al.(2019{\natexlab{b}})Wang, Han, Huang, Dong, and
  Scott]{wang2019multi}
Xun Wang, Xintong Han, Weilin Huang, Dengke Dong, and Matthew~R Scott.
\newblock Multi-similarity loss with general pair weighting for deep metric
  learning.
\newblock In \emph{CVPR}, 2019{\natexlab{b}}.

\bibitem[Weinberger \& Saul(2009)Weinberger and Saul]{weinberger2009distance}
Kilian~Q Weinberger and Lawrence~K Saul.
\newblock Distance metric learning for large margin nearest neighbor
  classification.
\newblock \emph{Journal of machine learning research}, 10\penalty0 (2), 2009.

\bibitem[Wen et~al.(2022)Wen, Liu, Weller, Raj, and Singh]{wen2022sphereface2}
Yandong Wen, Weiyang Liu, Adrian Weller, Bhiksha Raj, and Rita Singh.
\newblock Sphereface2: Binary classification is all you need for deep face
  recognition.
\newblock In \emph{ICLR}, 2022.

\bibitem[Xie et~al.(2017)Xie, Singh, and Xing]{xie2017uncorrelation}
Pengtao Xie, Aarti Singh, and Eric~P Xing.
\newblock Uncorrelation and evenness: a new diversity-promoting regularizer.
\newblock In \emph{ICML}, 2017.

\bibitem[Yu \& Tao(2019)Yu and Tao]{yu2019deep}
Baosheng Yu and Dacheng Tao.
\newblock Deep metric learning with tuplet margin loss.
\newblock In \emph{ICCV}, 2019.

\bibitem[Zbontar et~al.(2021)Zbontar, Jing, Misra, LeCun, and
  Deny]{zbontar2021barlow}
Jure Zbontar, Li~Jing, Ishan Misra, Yann LeCun, and St{\'e}phane Deny.
\newblock Barlow twins: Self-supervised learning via redundancy reduction.
\newblock In \emph{ICML}, 2021.

\bibitem[Zhang et~al.(2021)Zhang, Bengio, Hardt, Recht, and
  Vinyals]{zhang2021understanding}
Chiyuan Zhang, Samy Bengio, Moritz Hardt, Benjamin Recht, and Oriol Vinyals.
\newblock Understanding deep learning (still) requires rethinking
  generalization.
\newblock \emph{Communications of the ACM}, 64\penalty0 (3):\penalty0 107--115,
  2021.

\bibitem[Zhou et~al.(2022{\natexlab{a}})Zhou, Li, Ding, You, Qu, and
  Zhu]{zhou2022optimization}
Jinxin Zhou, Xiao Li, Tianyu Ding, Chong You, Qing Qu, and Zhihui Zhu.
\newblock On the optimization landscape of neural collapse under mse loss:
  Global optimality with unconstrained features.
\newblock In \emph{ICML}, 2022{\natexlab{a}}.

\bibitem[Zhou et~al.(2022{\natexlab{b}})Zhou, You, Li, Liu, Liu, Qu, and
  Zhu]{zhou2022all}
Jinxin Zhou, Chong You, Xiao Li, Kangning Liu, Sheng Liu, Qing Qu, and Zhihui
  Zhu.
\newblock Are all losses created equal: A neural collapse perspective.
\newblock \emph{arXiv preprint arXiv:2210.02192}, 2022{\natexlab{b}}.

\bibitem[Zhu et~al.(2021)Zhu, Ding, Zhou, Li, You, Sulam, and
  Qu]{zhu2021geometric}
Zhihui Zhu, Tianyu Ding, Jinxin Zhou, Xiao Li, Chong You, Jeremias Sulam, and
  Qing Qu.
\newblock A geometric analysis of neural collapse with unconstrained features.
\newblock In \emph{NeurIPS}, 2021.

\end{thebibliography}
}

\newpage
\appendix
\addcontentsline{toc}{section}{Appendix} 
\renewcommand \thepart{} 
\renewcommand \partname{}
\part{\Large\centerline{Appendix}}
\parttoc 

\newpage
\section{Empirical Results on Generalized Neural Collapse}\label{app:gnc_study}
\subsection{Detailed Metric Definition}

We consider four metrics: average classifier energy (ACE), average class-mean energy (ACME), average feature reverse-energy (AFRE) and average feature-mean reverse-energy (AFMRE) in the paper. Their definitions are given below:
\begin{equation}
E_{\text{ACE}} = \frac{1}{C(C-1)}\sum_{i\neq j}\norm{\hat{\bm{w}}_i-\hat{\bm{w}}_j}^{-2}
\end{equation}
\begin{equation}
E_{\text{ACME}} = \frac{1}{C(C-1)}\sum_{i\neq j}\norm{\hat{\bm{\mu
}}_i-\hat{\bm{\mu}}_j}^{-2}
\end{equation}
\begin{equation}
E_{\text{AFRE}} = \frac{1}{C}\sum_{c=1}^C\frac{1}{|A_c|\cdot(|A_c|-1)}\sum_{i\neq j\in A_c}\norm{\hat{\bm{x
}}_i-\hat{\bm{x}}_j}
\end{equation}
\begin{equation}
E_{\text{AFMRE}} = \frac{1}{C}\sum_{c=1}^C\frac{1}{|A_c|}\sum_{i\in A_c}\norm{\hat{\bm{x
}}_i-\hat{\bm{\mu}}_c}
\end{equation}
where $|A_c|$ denotes the cardinality of the set $A_c$, $\hat{\bm{\mu}}_c$ is the normalized feature mean of the $c$-th class and $\hat{\bm{w}}_c$ denotes the normalized class proxy of the $c$-th class.

\subsection{Empirical Results of GNC on ImageNet}

We find that the GNC hypothesis remains valid and informative even under the scenario of large number of classes (we use the 1000-class ImageNet-2012 dataset~\cite{deng2009imagenet} here). Experimental results with ResNet-18~\cite{he2016deep} (feature dimension as 512) are given in Figure~\ref{fig:imagenet_res18}. Experimental results with ResNet-50~\cite{he2016deep} (feature dimension as 2048) are given in Figure~\ref{fig:imagenet_res50}.

\begin{figure}[h]
  \centering
  \setlength{\abovecaptionskip}{2.7pt}
  \setlength{\belowcaptionskip}{-5.5pt}
\includegraphics[width=4.7in]{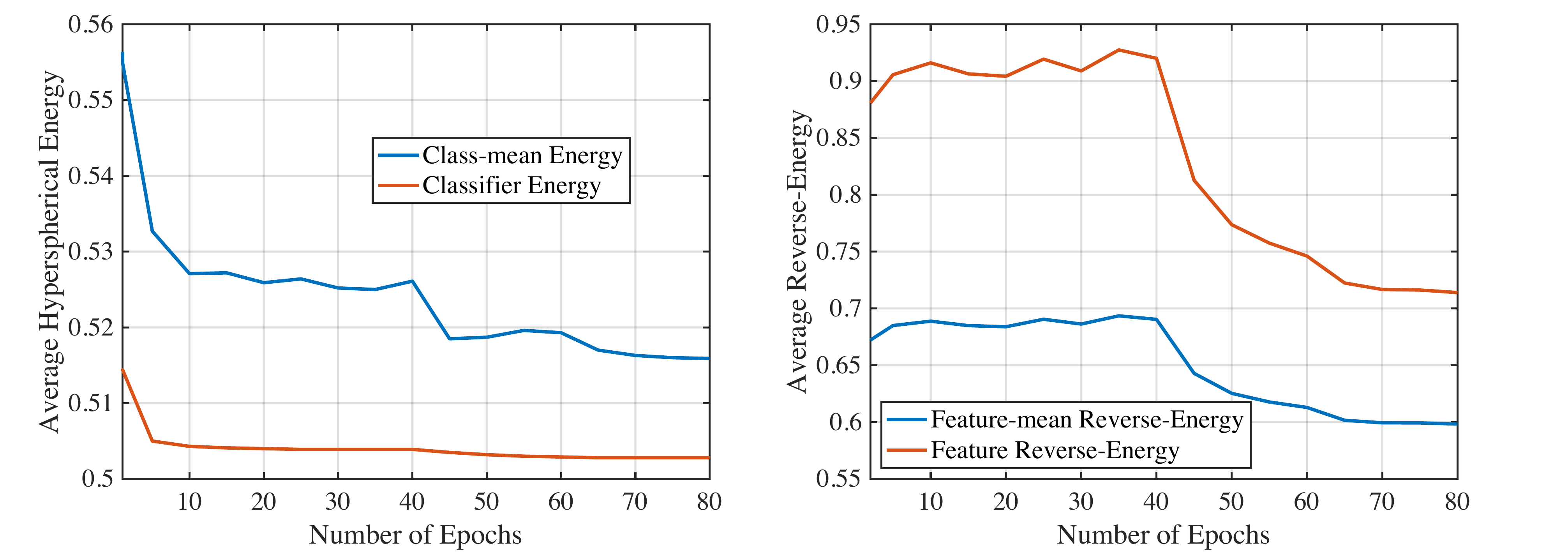}
  \caption{\scriptsize Training dynamics of hyperspherical energy (which captures inter-class separability) and hyperspherical reverse-energy (which captures intra-class variability). ImageNet-2012~\cite{deng2009imagenet} with ResNet-18~\cite{he2016deep} ($\thickmuskip=2mu \medmuskip=2mu d=512,C=1000$).}\label{fig:imagenet_res18}
\end{figure}

\begin{figure}[h]
  \centering
  \setlength{\abovecaptionskip}{2.7pt}
  \setlength{\belowcaptionskip}{-5.5pt}
\includegraphics[width=4.7in]{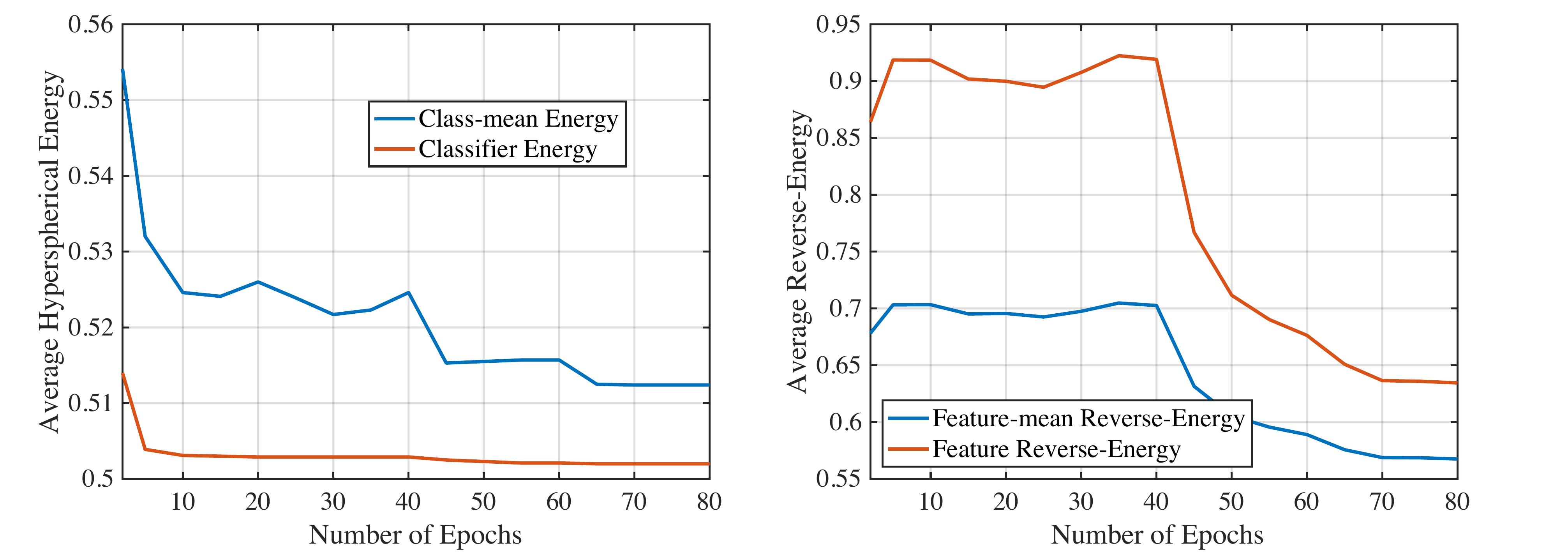}
  \caption{\scriptsize Training dynamics of hyperspherical energy (which captures inter-class separability) and hyperspherical reverse-energy (which captures intra-class variability). ImageNet-2012~\cite{deng2009imagenet} with ResNet-50~\cite{he2016deep} ($\thickmuskip=2mu \medmuskip=2mu d=2048,C=1000$).}\label{fig:imagenet_res50}
\end{figure}

\newpage
\section{2D MNIST Feature Visualization}

We also visualize the 2D MNIST feature in Figure~\ref{fig:2d_ce}, Figure~\ref{fig:hug-detached} and Figure~\ref{fig:hug-nodetached}, which is done by directly setting the output feature dimension as 2. Different color denotes different class and black arrow denotes the class proxy. We compare the difference between the CE loss and the HUG-MHE loss (with either independently optimized proxies or fully learnable proxies). Specifically, for the HUG-MHE loss with independently optimized proxies, we use the following form:
\begin{equation}\label{eq:detached_hug}
\footnotesize
\max_{\{\hat{\bm{x}}_j\}_{j=1}^n,\{\hat{\bm{w}}_c\}_{c=1}^C}\mathcal{L}_{\text{P-HUG}}:=\alpha\cdot\underbrace{\mathcal{HU}\big(\{\hat{\bm{w}}_c\}_{c=1}^C\big)}_{\text{Inter-class Hyperspherical Uniformity}}-\beta\cdot\sum_{c=1}^C\underbrace{\mathcal{HU}\big(\{\hat{\bm{x}}_i\}_{i\in{A_c}},\text{SG}({\hat{\bm{w}}_c})\big)}_{\text{Intra-class Hyperspherical Uniformity}}
\end{equation}
where we stop the gradient for the class proxies in the intra-class hyperspherical uniformity term. Form the results, we observe that the our HUG losses generally learns better representations than the CE loss, and moreover, HUG learns more aligned class proxy and class feature-mean than CE.

\begin{figure}[h!]
    \centering
    \begin{subfigure}{.48\linewidth}
      \centering
      \includegraphics[width=.99\linewidth]{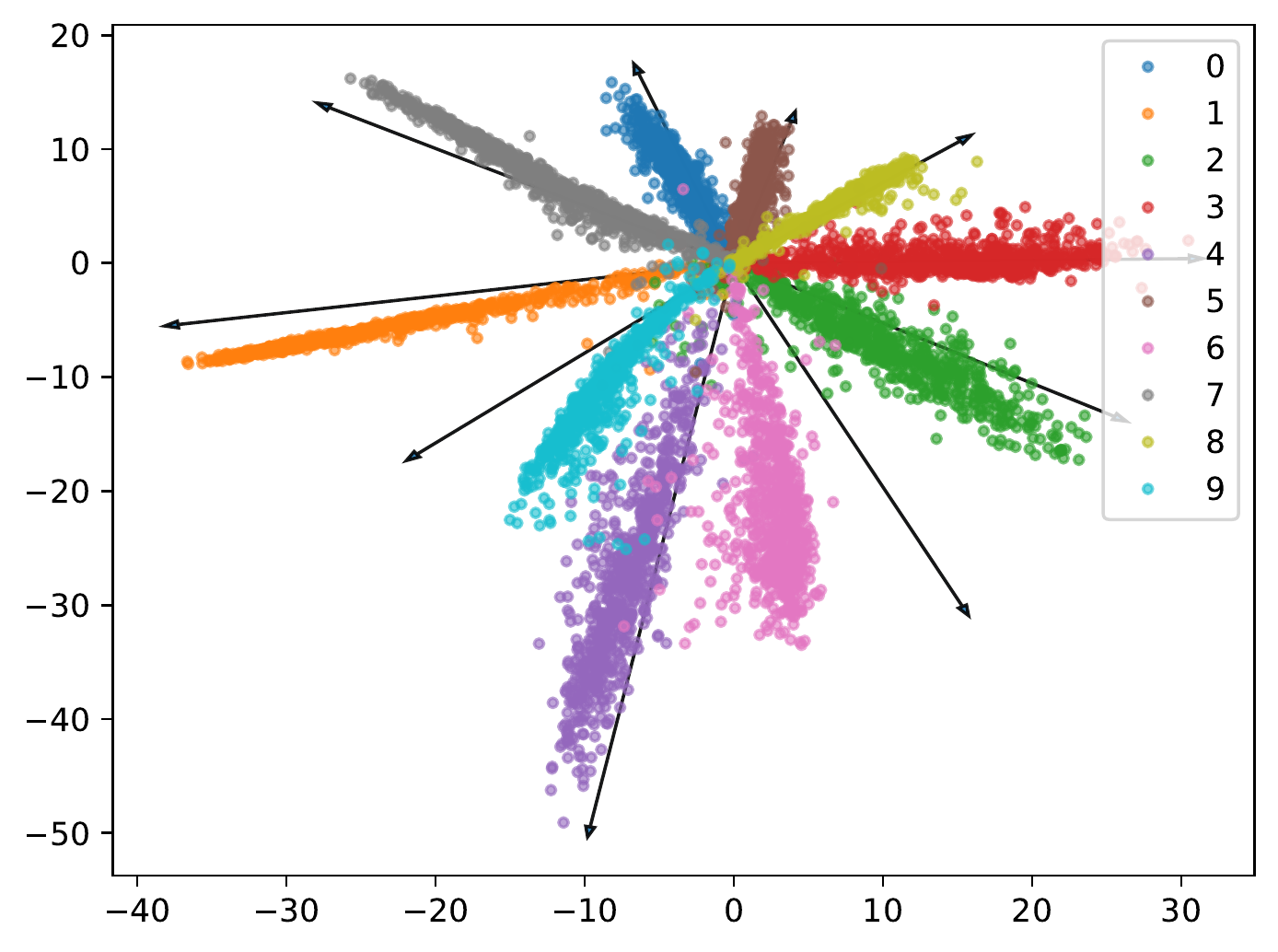}
    \end{subfigure}%
    \begin{subfigure}{.48\linewidth}
      \centering
      \includegraphics[width=.99\linewidth]{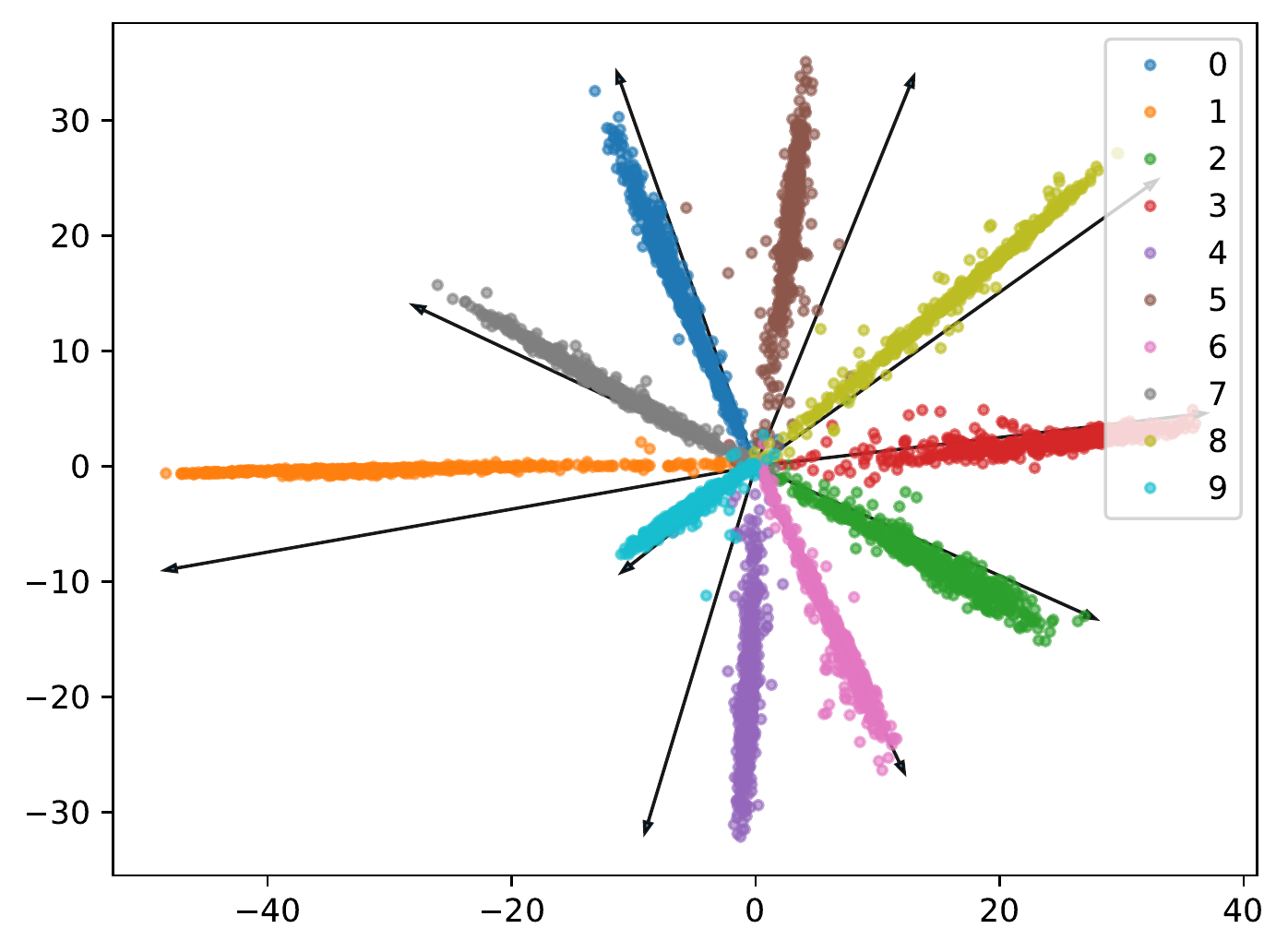}
    \end{subfigure}
    \begin{subfigure}{.48\linewidth}
      \centering
      \includegraphics[width=.99\linewidth]{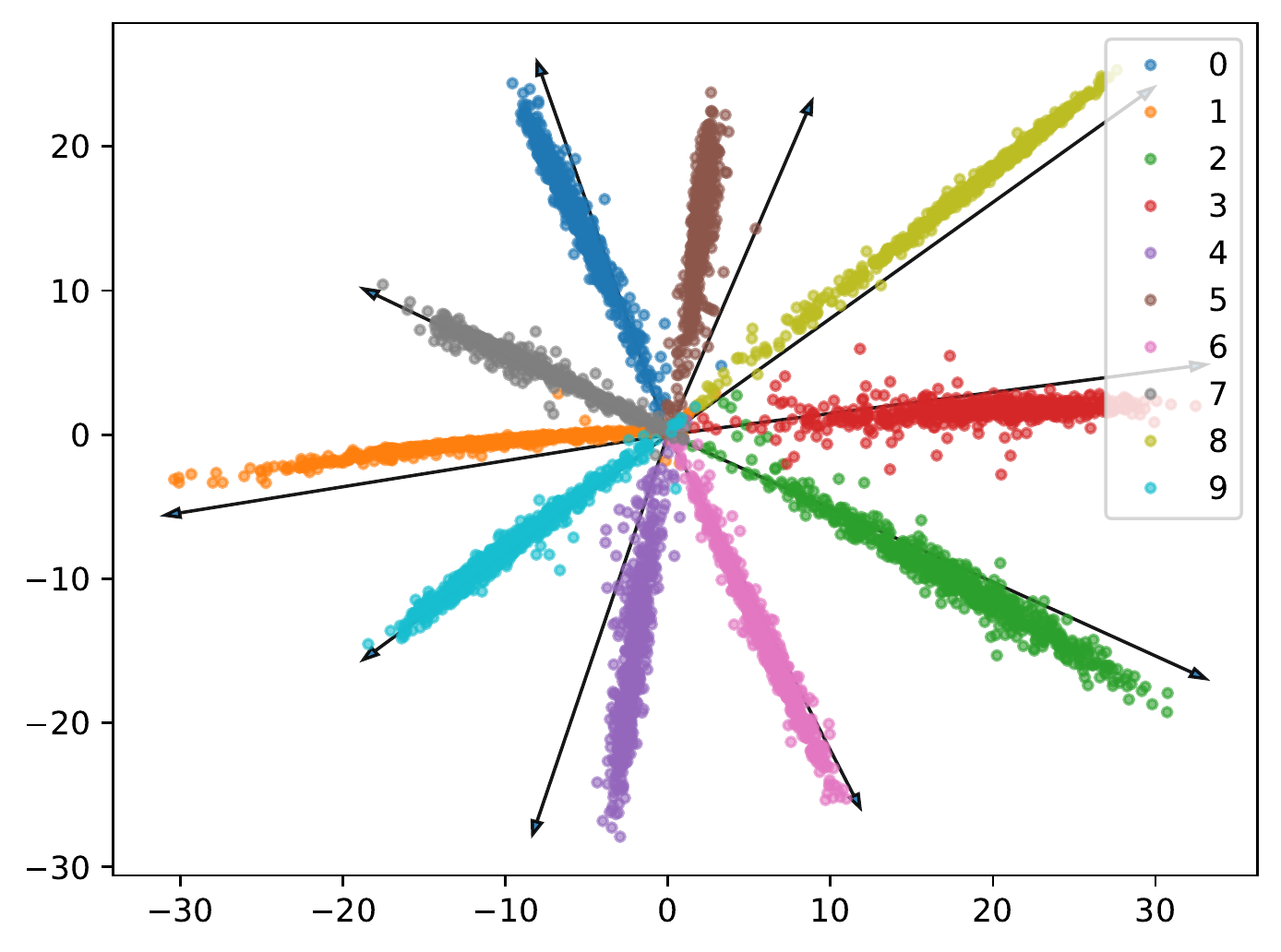}
    \end{subfigure}%
    \begin{subfigure}{.48\linewidth}
      \centering
      \includegraphics[width=.99\linewidth]{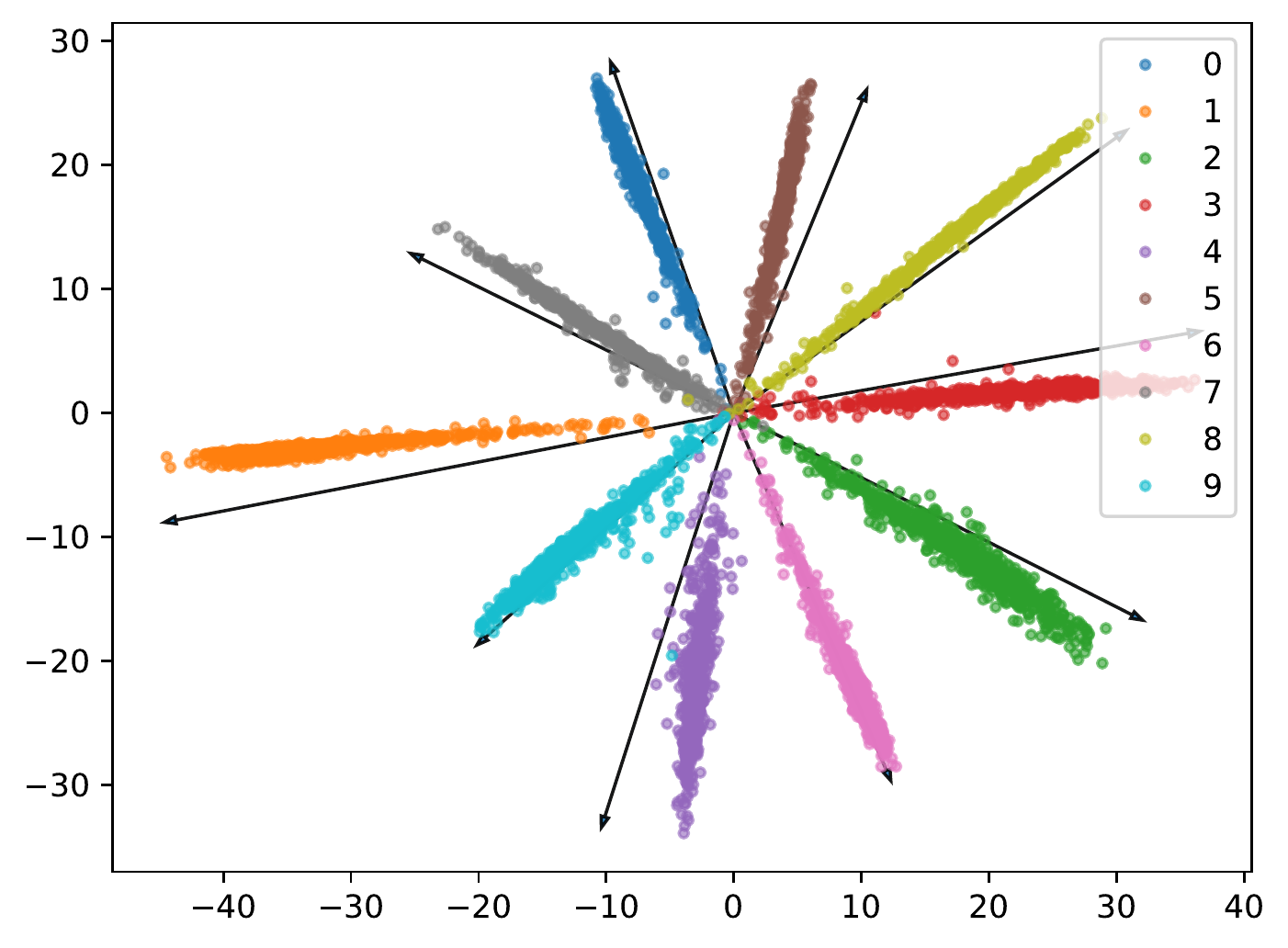}
    \end{subfigure}
    \begin{subfigure}{.48\linewidth}
      \centering
      \includegraphics[width=.99\linewidth]{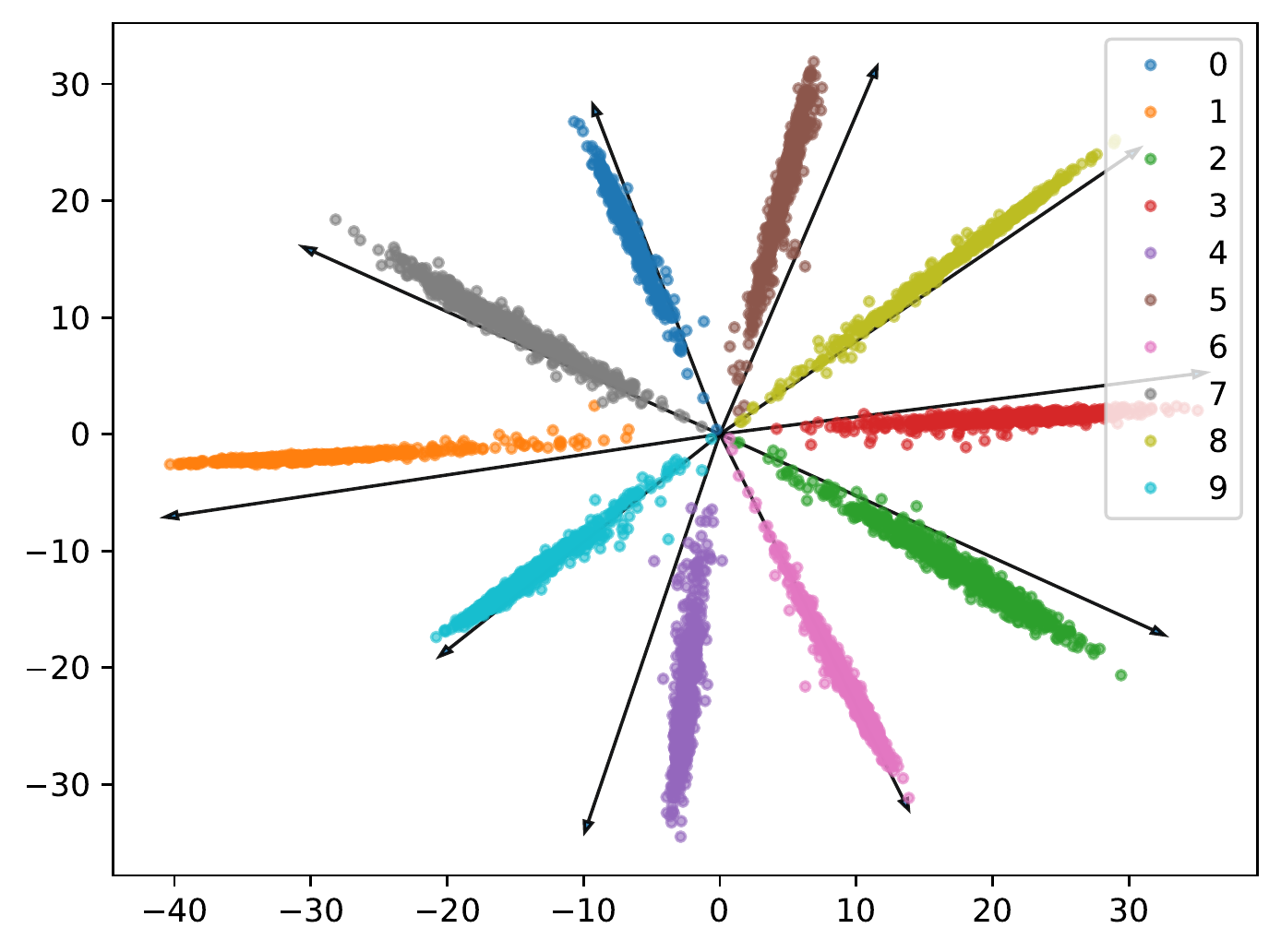}
    \end{subfigure}
    \caption{\scriptsize 2D MNIST feature visualization for the CE loss at 1,5,10,15,20 epochs (top left - top right - middle left - middle right -bottom).}
    \label{fig:2d_ce}
    \vspace{-3mm}
\end{figure}

\begin{figure}[h!]
    \centering
    \begin{subfigure}{.48\linewidth}
      \centering
      \includegraphics[width=.99\linewidth]{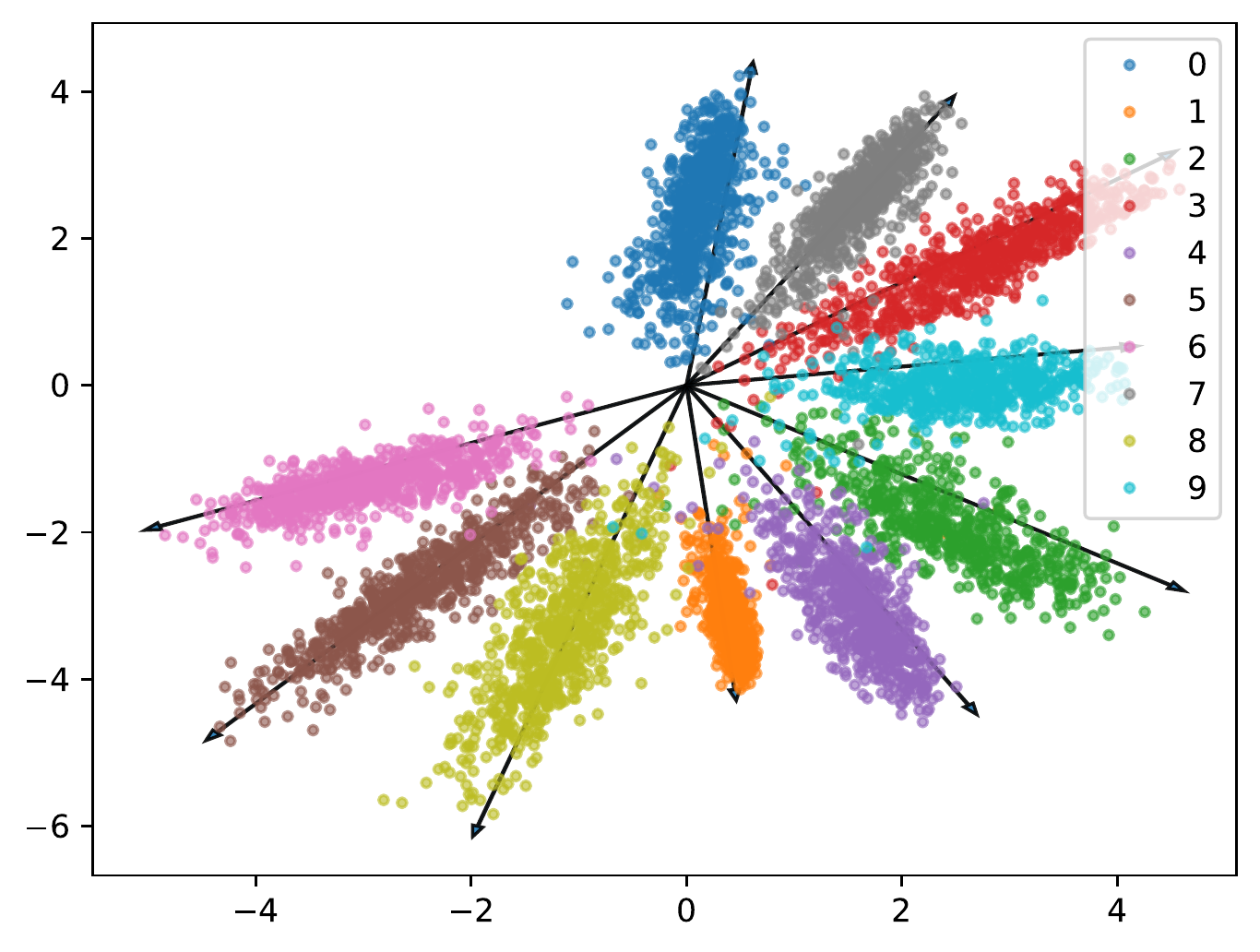}
    \end{subfigure}%
    \begin{subfigure}{.48\linewidth}
      \centering
      \includegraphics[width=.99\linewidth]{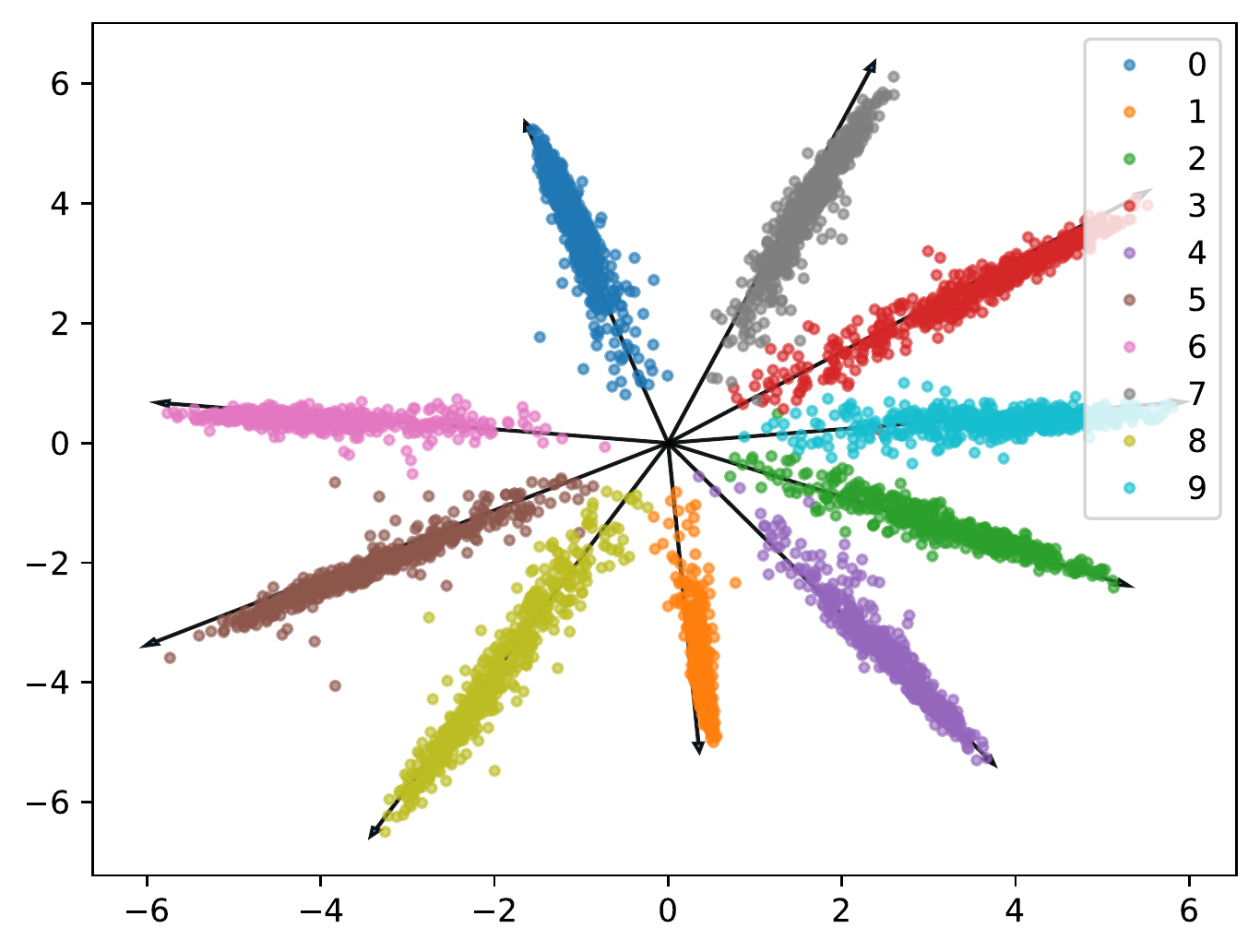}
    \end{subfigure}
    \begin{subfigure}{.48\linewidth}
      \centering
      \includegraphics[width=.99\linewidth]{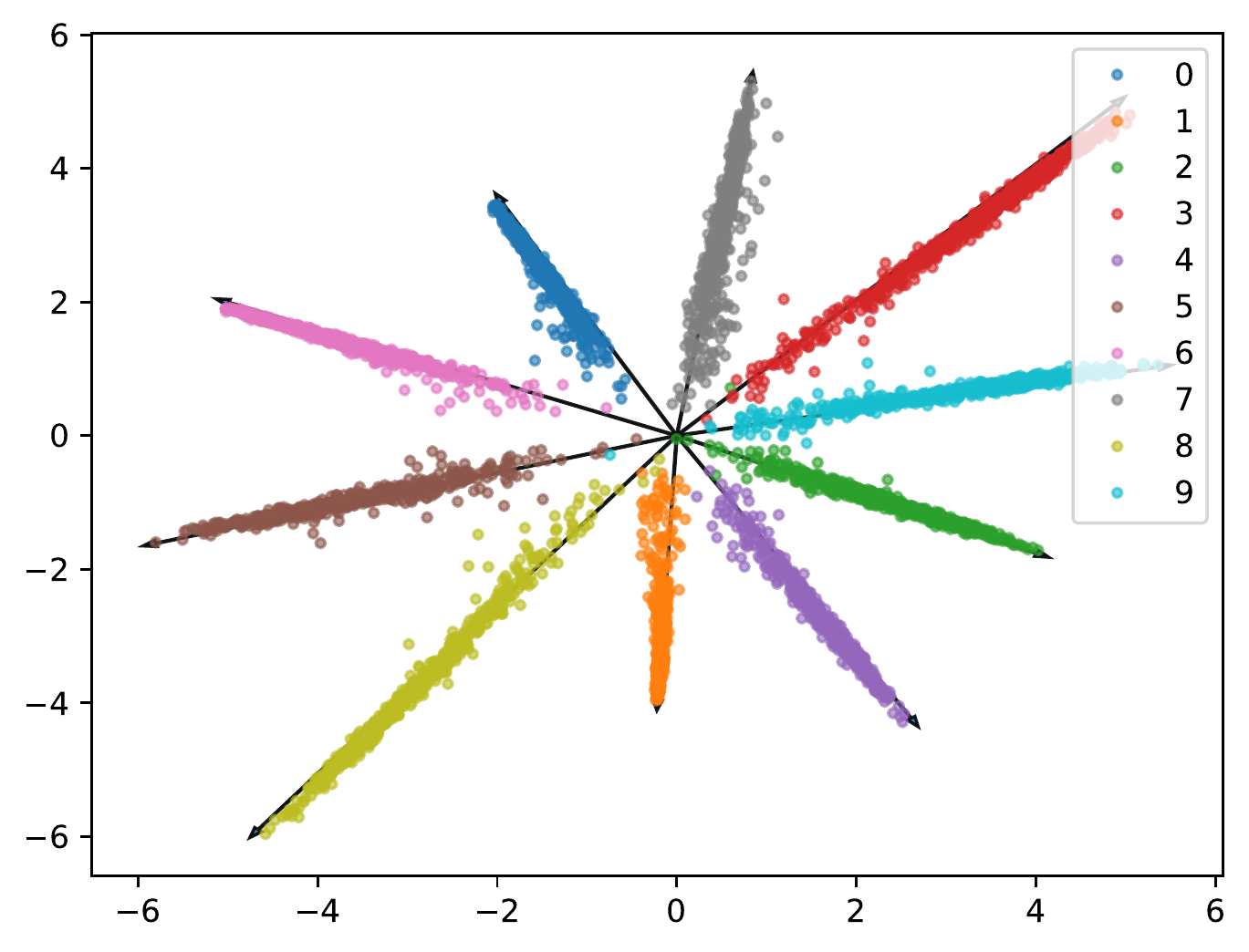}
    \end{subfigure}%
    \begin{subfigure}{.48\linewidth}
      \centering
      \includegraphics[width=.99\linewidth]{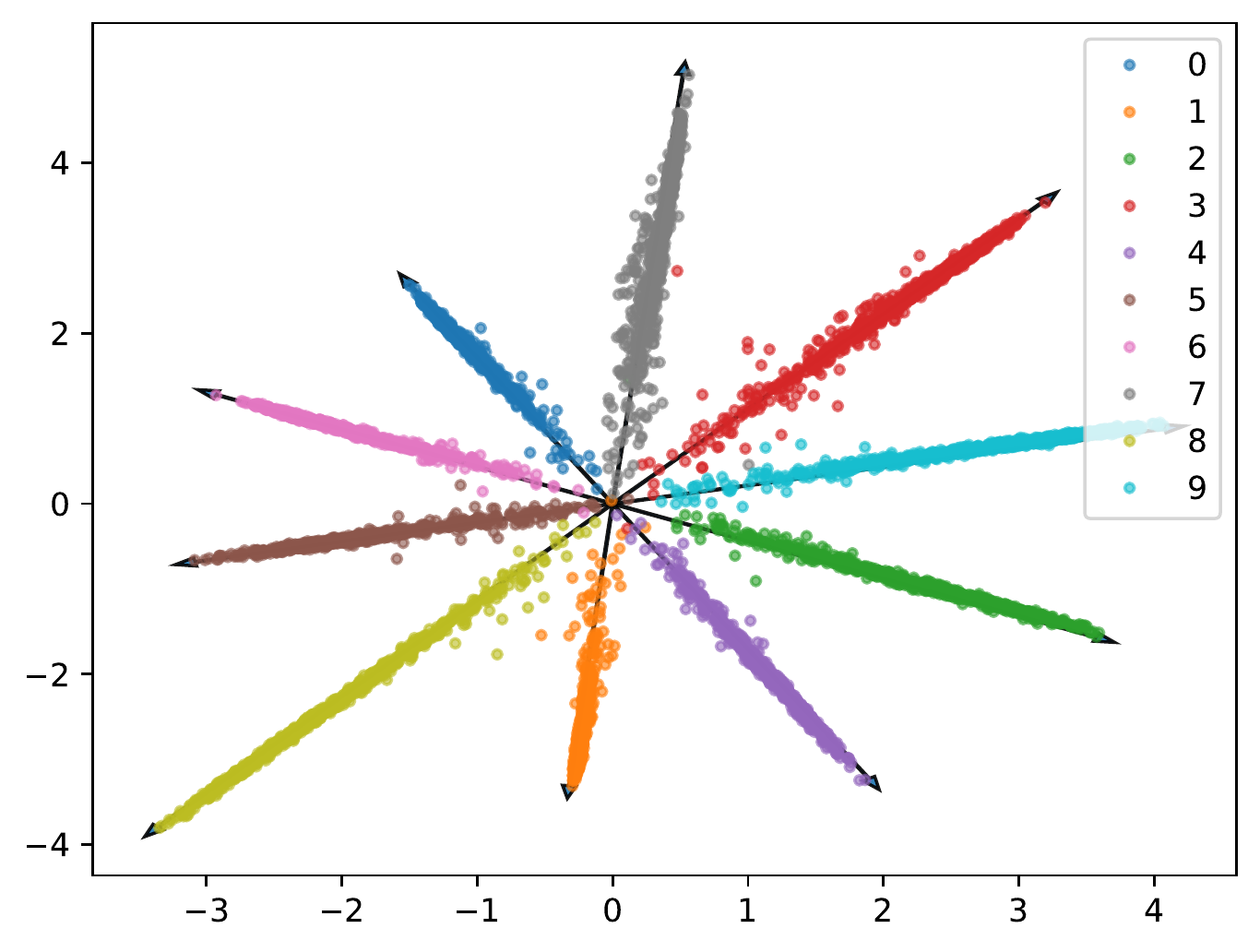}
    \end{subfigure}
    \begin{subfigure}{.48\linewidth}
      \centering
      \includegraphics[width=.99\linewidth]{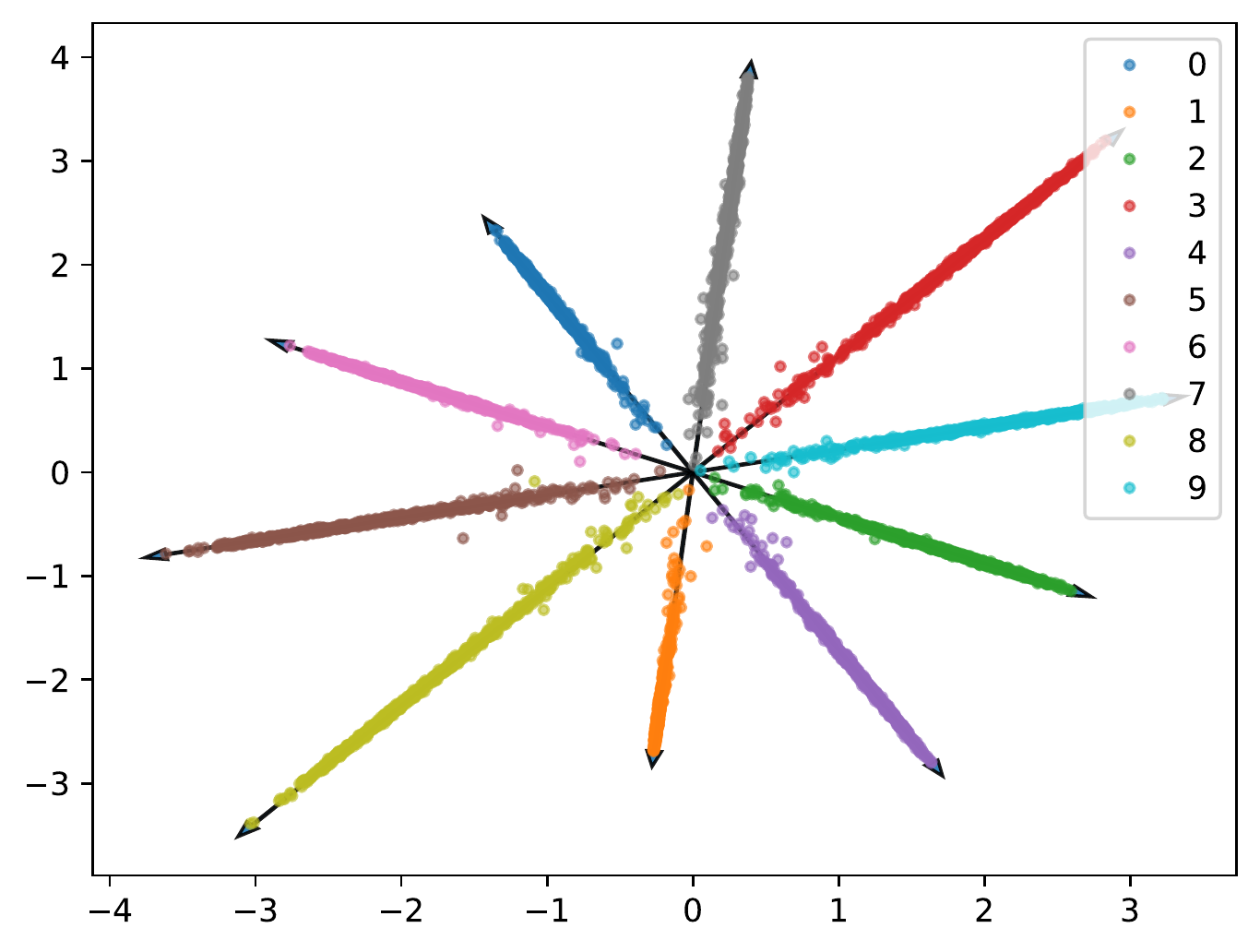}
    \end{subfigure}
    \caption{\scriptsize 2D MNIST feature visualization for the HUG loss (randomly initialized and then optimized proxies) at 1,5,10,15,20 epochs (top left - top right - middle left - middle right -bottom).}
    \label{fig:hug-detached}
    \vspace{-3mm}
\end{figure}

\begin{figure}[h!]
    \centering
    \begin{subfigure}{.48\linewidth}
      \centering
      \includegraphics[width=.99\linewidth]{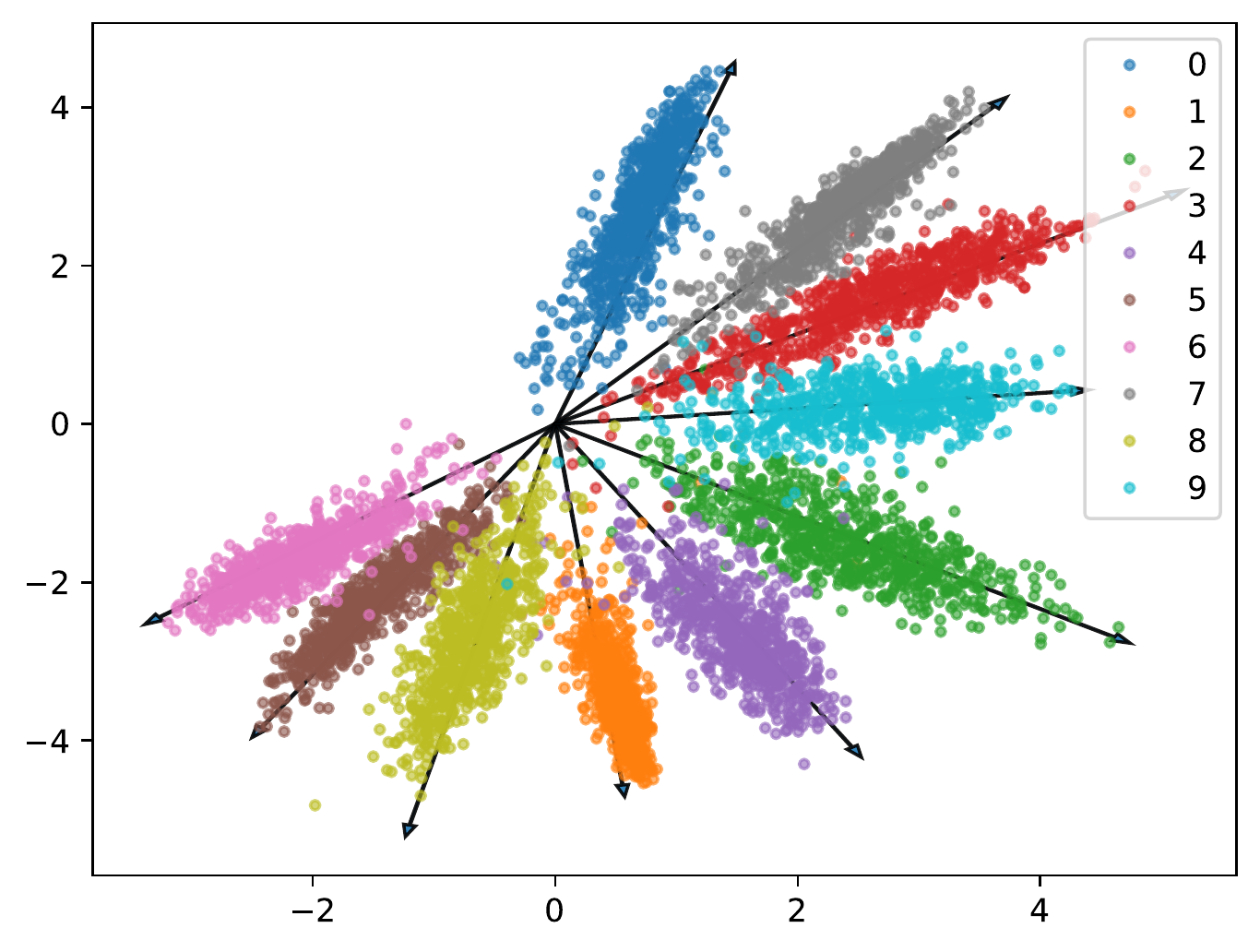}
    \end{subfigure}%
    \begin{subfigure}{.48\linewidth}
      \centering
      \includegraphics[width=.99\linewidth]{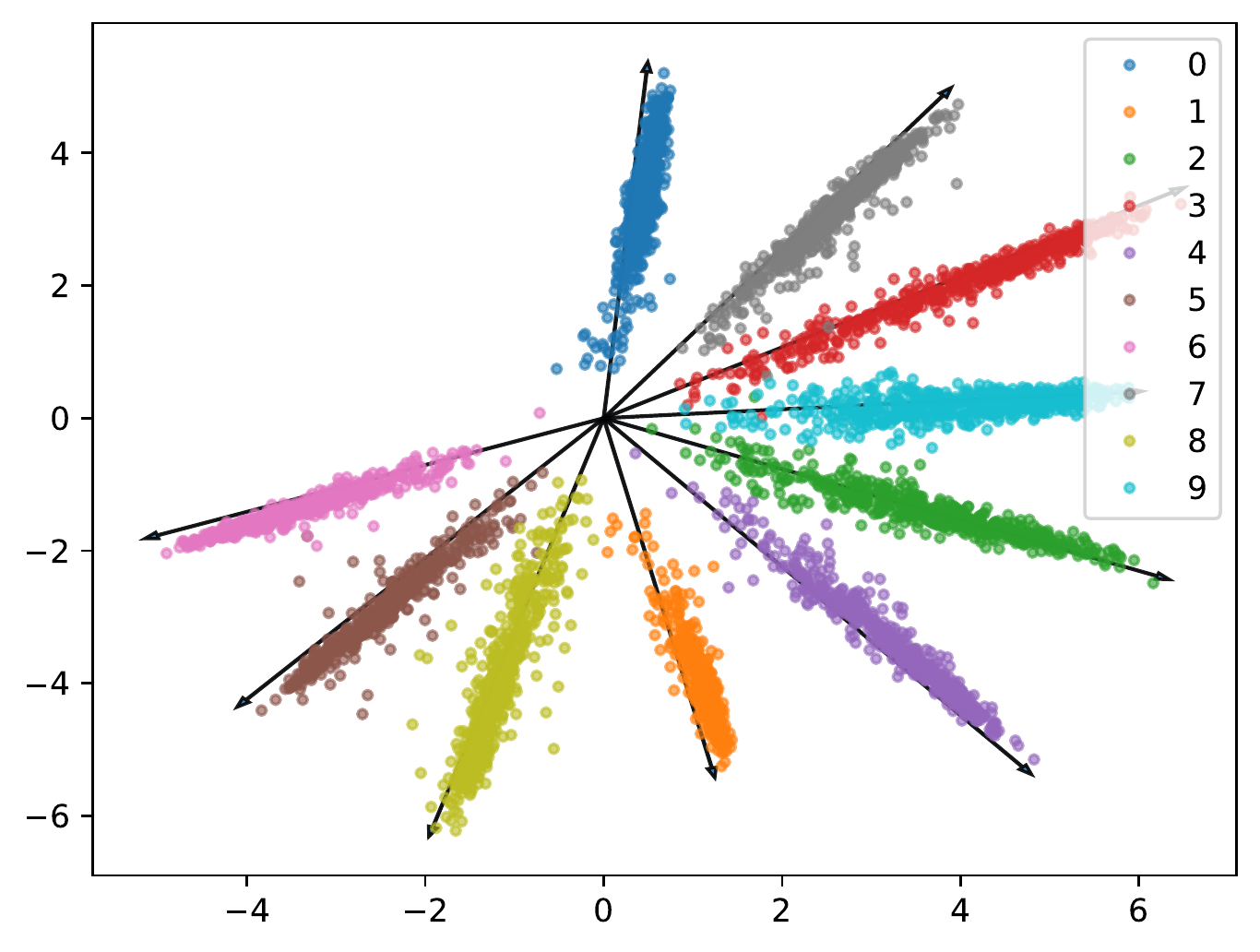}
    \end{subfigure}
    \begin{subfigure}{.48\linewidth}
      \centering
      \includegraphics[width=.99\linewidth]{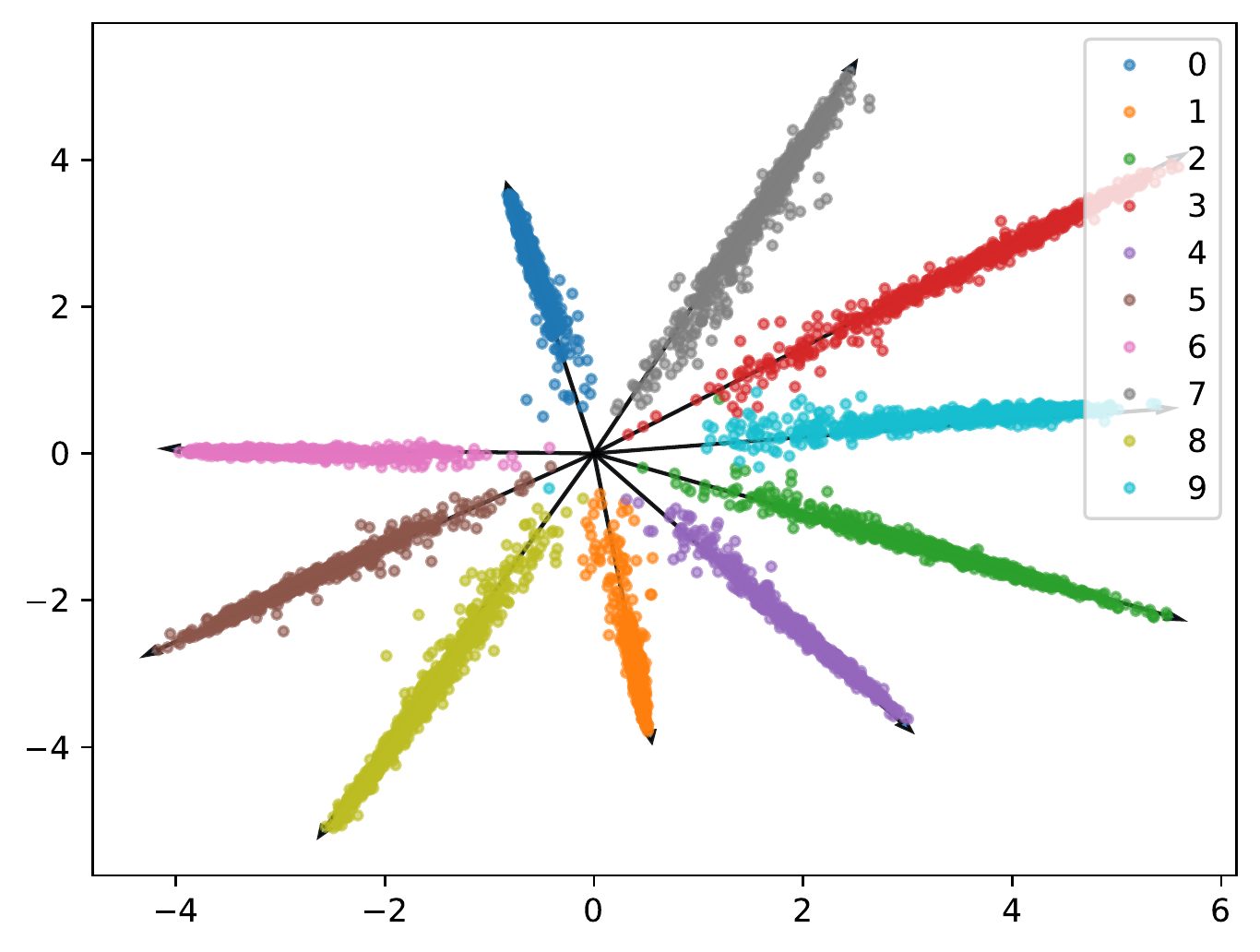}
    \end{subfigure}%
    \begin{subfigure}{.48\linewidth}
      \centering
      \includegraphics[width=.99\linewidth]{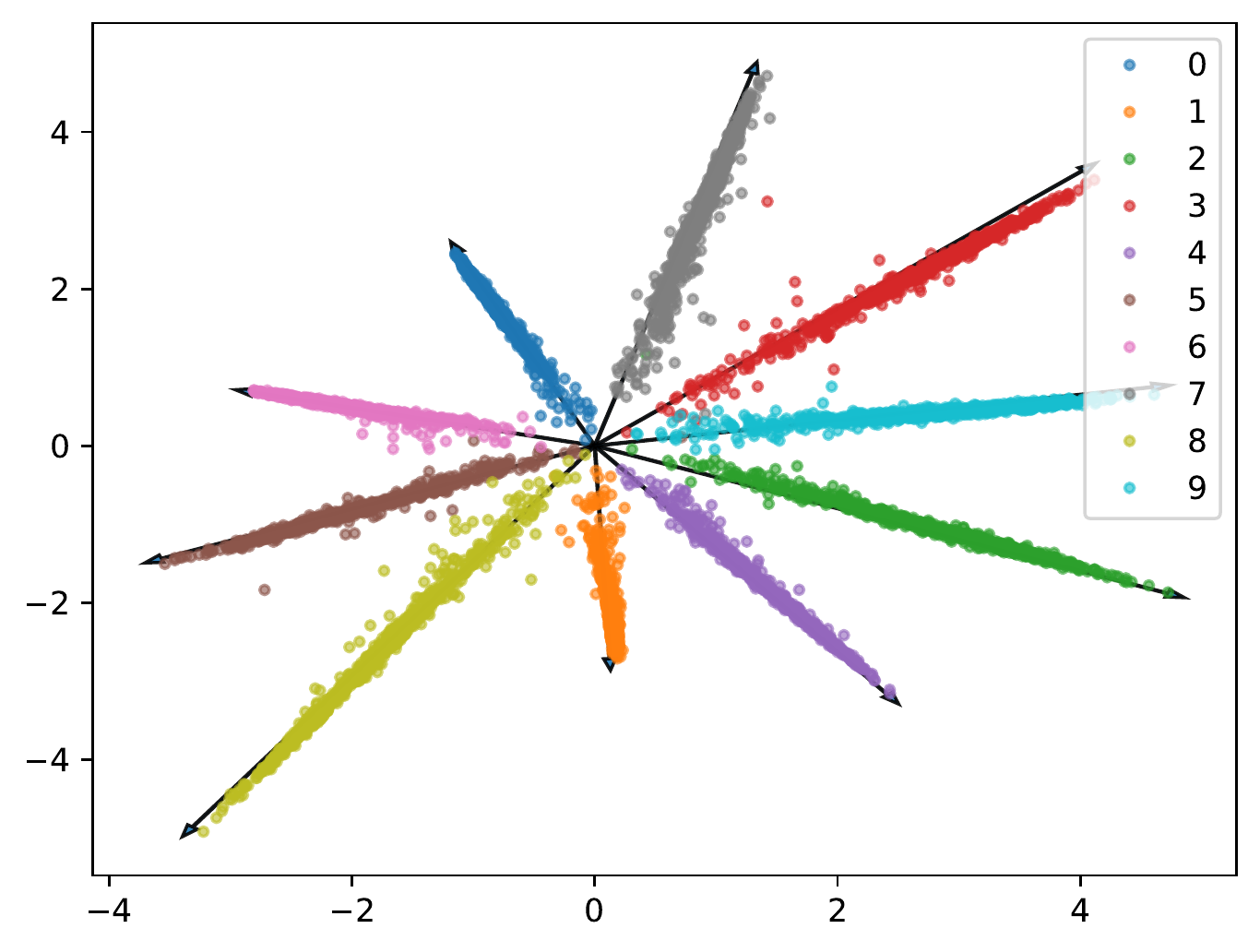}
    \end{subfigure}
    \begin{subfigure}{.48\linewidth}
      \centering
      \includegraphics[width=.99\linewidth]{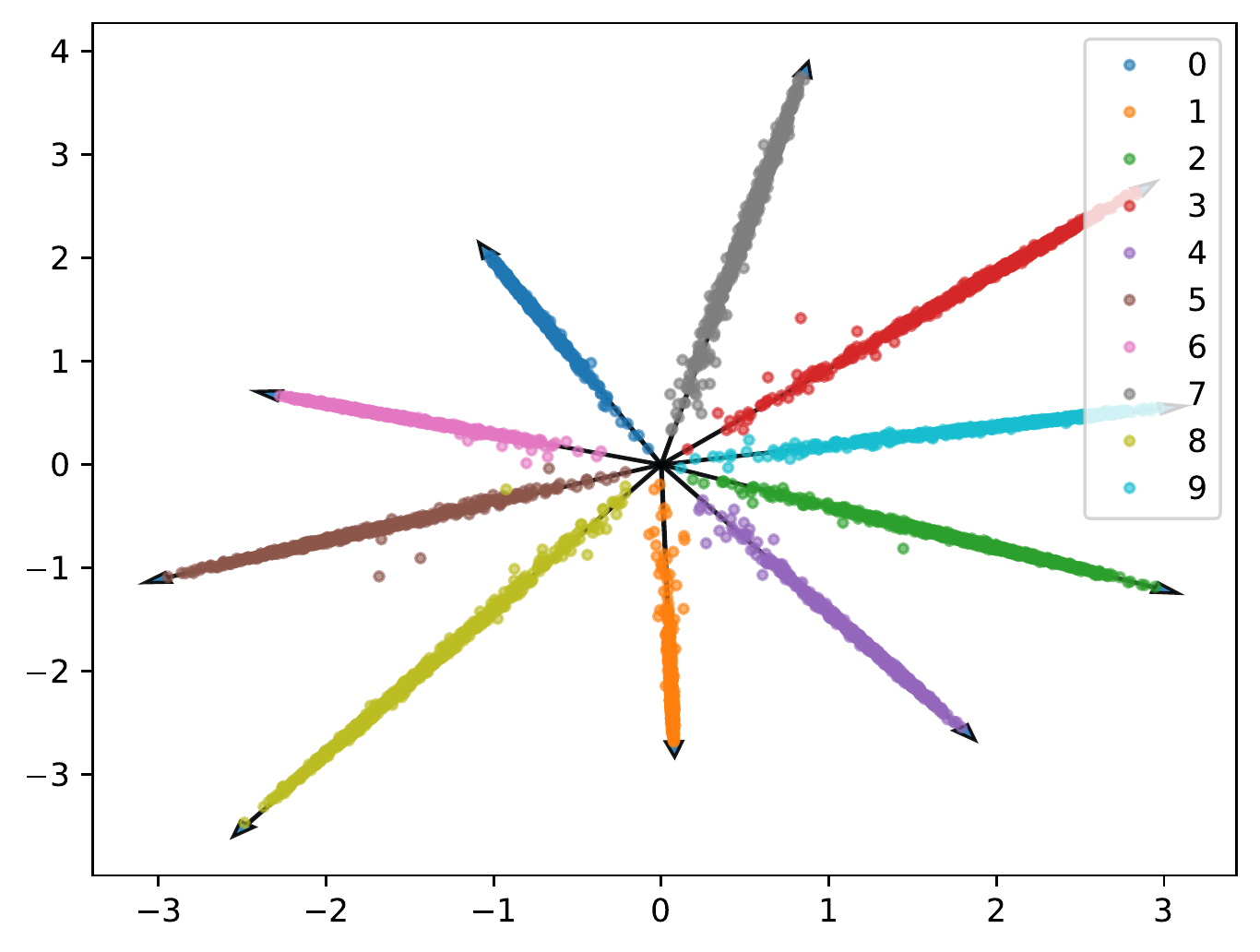}
    \end{subfigure}
    \caption{\scriptsize 2D MNIST feature visualization for the HUG loss (fully learnable proxies) at 1,5,10,15,20 epochs (top left - top right - middle left - middle right -bottom).}
    \label{fig:hug-nodetached}
    \vspace{-3mm}
\end{figure}

\par
\clearpage
\newpage
\section{Other Variants in the HUG Framework}\label{app:HUG_variant}

There are plenty of interesting and useful instantiations for the loss function under the HUG framework. In this section, we discuss a few highly relevant and natural ones.

\subsection{Proxy-free HUG}

We have the following general HUG objective function:
\begin{equation}
\max_{\{\hat{\bm{x}}_j\}_{j=1}^n}\mathcal{L}_{\text{HUG}}:=\alpha\cdot\underbrace{\mathcal{HU}\big(\{\hat{\bm{\mu}}_c\}_{c=1}^C\big)}_{T_b\text{: Inter-class Hyperspherical Uniformity}}-\beta\cdot\sum_{c=1}^C\underbrace{\mathcal{HU}\big(\{\hat{\bm{x}}_i\}_{i\in{A_c}}\big)}_{T_w\text{: Intra-class Hyperspherical Uniformity}}
\end{equation}
where we can have many possible instantiations. Other than the proxy-based form proposed in the main paper, we can also have a proxy-free version:
\begin{equation}\label{eq:pf-hug}
    \max_{\{\hat{\bm{x}}_j\}_{j=1}^n}\mathcal{L}_{\text{PF-HUG}}:=\alpha\cdot\underbrace{\mathcal{HU}\big(\{\hat{\bm{x}}_{i\in A_c}\}_{c=1}^C\big)}_{\text{Inter-class Hyperspherical Uniformity}}-\beta\cdot\sum_{c=1}^C\underbrace{\mathcal{HU}\big(\{\hat{\bm{x}}_i\}_{i\in{A_c}}\big)}_{\text{Intra-class Hyperspherical Uniformity}}
\end{equation}
where $\{\hat{\bm{x}}_{i\in A_c}\}_{c=1}^C$ denotes a set of vectors that consist of one random sample per class. This is essentially to replace the class proxy with a random sample from this class. The proxy-free HUG loss can be used in the scenario where extremely large amount of classes exist and storing class proxies can be very expensive, or in the scenario of self-supervised contrastive learning where each instance and its augmentations are viewed as one class. A MHE-based instantiation of Eq.~\ref{eq:pf-hug} is given by
\begin{equation}
    \min_{\{\hat{\bm{x}}_j\}_{j=1}^n}\mathcal{L}_{\text{MHE-PF-HUG}}:=\alpha\cdot E_{s_b}\big(\{\hat{\bm{x}}_{i\in A_c}\}_{c=1}^C\big)-\beta\cdot\sum_{c=1}^C E_{s_w}\big(\{\hat{\bm{x}}_i\}_{i\in{A_c}}\big)
\end{equation}
which can be similarly relaxed to
\begin{equation}\label{eq:pf-hug-relax}
    \mathcal{L}'_{\text{MHE-PF-HUG}}= \alpha\cdot\sum_{c\neq c'}\|\hat{\bm{x}}_{i\in A_c}-\hat{\bm{x}}_{j\in A_{c'}}\|^{-2}+
    \beta'\cdot\sum_{c}\sum_{i\in A_c,j\in A_c,i\neq j}\|\hat{\bm{x}}_i-\hat{\bm{x}}_j\|
\end{equation}
where $\hat{\bm{x}}_{i\in A_c}$ denotes a randomly selected sample from the $c$-th class. The first term in Eq.~\ref{eq:pf-hug-relax} can also be viewed as a scalable stochastic approximation to the first term in the following loss function:
\begin{equation}
    \mathcal{L}''_{\text{MHE-PF-HUG}}= \alpha\cdot\sum_{i\in A_c,j\in A_{c'},c\neq c'}\|\hat{\bm{x}}_{i}-\hat{\bm{x}}_{j}\|^{-2}+
    \beta'\cdot\sum_{c}\sum_{i\in A_c,j\in A_c,i\neq j}\|\hat{\bm{x}}_i-\hat{\bm{x}}_j\|
\end{equation}
which is typically optimized by stochastic gradients (samples come as a mini batch) in practice.

\subsection{Coupled HUG}

One advantage of HUG is that it decouples intra-class variability and inter-class separability. However, coupling may also bring some benefits (\eg, robustness on hyperparameters, stability in training). To this end, we also propose a coupled loss function using the HUG framework:
\begin{equation}\label{eq:coupled-hug}
\footnotesize
    \max_{\{\hat{\bm{x}}_j\}_{j=1}^n}\mathcal{L}_{\text{PF-HUG}}:=\alpha\cdot\underbrace{\sum_{i=1}^n\mathcal{HU}\big(\{\hat{\bm{w}}_c\}_{c=1,c\neq y_i}^C,\hat{\bm{x}}_i\big)}_{\text{Coupled Intra-class and Inter-class Hyperspherical Uniformity}}-\beta\cdot\sum_{c=1}^C\underbrace{\mathcal{HU}\big(\{\hat{\bm{x}}_i\}_{i\in{A_c}}\big)}_{\text{Intra-class Hyperspherical Uniformity}}
\end{equation}
which can be turned into a MHE-based instantiation:
\begin{equation}
    \mathcal{L}''_{\text{MHE-C-HUG}}= \alpha\cdot\sum_{i=1}^n\sum_{c\neq y_i}\|\hat{\bm{x}}_{i}-\hat{\bm{w}}_{c}\|^{-2}+
    \beta'\cdot\sum_{c}\sum_{i\in A_c,j\in A_c,i\neq j}\|\hat{\bm{x}}_i-\hat{\bm{x}}_j\|
\end{equation}
where the first term itself couples intra-class and inter-class hyperspherical uniformity. Although the coupled HUG drops the flexibility that the original HUG framework brings, it may introduce extra advantages (\eg, training stability).

\subsection{HUG without Hyperspherical Normalization}

While the CE loss does not necessarily require hyperspherical normalization for the proxies and features (but hyperspherical normalization does improve CE's generalizability~\cite{liu2017deep,wang2018additive}), we also consider the HUG framework without hyperspherical normalization here. We note that this issue remains an open challenge and we only aim to provide some simple yet natural designs.

The obvious problem to remove hyperspherical normalization is that HUG has a trivial way to decrease its loss -- simply increasing the magnitude of features and proxies. A naive way to address this is to introduce magnitude penalty terms for the features and proxies. This results in
\begin{equation*}
\begin{aligned}
\max_{\{{\bm{x}}_j\}_{j=1}^n,\{{\bm{w}}_c\}_{c=1}^C}\mathcal{L}_{\text{UN-P-HUG}}:=&\alpha\cdot\underbrace{\mathcal{HU}\big(\{{\bm{w}}_c\}_{c=1}^C\big)}_{\text{Inter-class Hyperspherical Uniformity}}-\beta\cdot\sum_{c=1}^C\underbrace{\mathcal{HU}\big(\{{\bm{x}}_i\}_{i\in{A_c}},{\bm{w}}_c\big)}_{\text{Intra-class Hyperspherical Uniformity}}\\
& - \lambda_1\cdot\underbrace{\sum_{c=1}^C\norm{\bm{w}_c-s}^2}_{\text{Soft Magnitude Constraint on Proxies}} -\lambda_2\cdot\underbrace{\sum_{i=1}^n\norm{\bm{x}_i-s}^2}_{\text{Soft Magnitude Constraint on Features}}
\end{aligned}
\end{equation*}
where $s$ denotes the magnitude hyperparameter.

\newpage
\section{Proof of Theorem~\ref{thm:rso}}

We first let $\hat{\bm{V}}_C=\{\hat{\bm{v}}_1,\cdots,\hat{\bm{v}}_C\}$ be an arbitrary vector configuration in $\mathbb{S}^{d-1}$. Then we will have that
\begin{equation}
\begin{aligned}
    \Lambda(\hat{\bm{V}}_C):=&\sum_{i=1}^C\sum_{j=1}^C \norm{\hat{\bm{v}}_i-\hat{\bm{v}}_j}^2\\
    =& \sum_{i=1}^C\sum_{j=1}^C (2-2\hat{\bm{v}}_i\cdot\hat{\bm{v}}_j)\\
    =& 2C^2-2\norm{\sum_{i=1}^C\hat{\bm{v}}_i}^2\\
    \leq & 2C^2
\end{aligned}
\end{equation}
which holds if and only if $\sum_{i=1}^C\hat{\bm{v}}_i=0$. The vertices of a regular $(n-1)$-simplex at the origin well satisfy this condition. With the properties of the potential function $f$, we have that
\begin{equation}
\begin{aligned}
    E_f(\hat{\bm{v}}_C):=&\sum_{i=1}^C\sum_{j:j\neq i} f\big( \norm{\hat{\bm{v}}_i-\hat{\bm{v}}_j}^2\big)\\
    \geq & C(C-1) f\bigg( \frac{\Lambda(\hat{\bm{v}}_C)}{C(C-1)} \bigg)\\
    \geq & C(C-1)f\bigg( \frac{2C}{C-1} \bigg)
\end{aligned}
\end{equation}
which holds true if all pairwise distance $\norm{\hat{\bm{v}}_i-\hat{\bm{v}}_j}$ are equal for $i\neq j$ and the center of mass is at the origin (\ie, $\sum_{i=1}^C\hat{\bm{v}}_i=\bm{0}$). Therefore, for the vector configuration $\hat{\bm{V}}_C^*$ which contains the vertices of a regular $(C-1)$-simplex inscribed in $\mathbb{S}^d$ and centered at the origin, we have that for $2\leq C \leq d+1$
\begin{equation}
\begin{aligned}
    E_f(\hat{\bm{V}}_n^*)&=C(C-1)f\bigg( \frac{2C}{C-1} \bigg)\\
    &\leq E_f(\hat{\bm{V}}_C).
\end{aligned}
\end{equation}
If $f$ is strictly convex and strictly decreasing, then $E_f(\hat{\bm{V}}_C)\geq C(C-1)f( \frac{2C}{C-1} )$ holds only when $\hat{\bm{V}}_C^*$ is a regular $(C-1)$-simplex inscribed in $\mathbb{S}^{d-1}$ and centered at the origin. \QEDA

\newpage
\section{Proof of Theorem~\ref{thm:polytope}}

This result comes as a natural conclusion from \cite{cohn2007universally} where they prove that any sharp code is a minimal hyperspherical $f$-energy $N$-point configuration for any interaction potential $f$ that is absolutely monotone on $[-1,1]$ including all Riesz $s$-potentials $f(t)=2(t-2t)^{-s/2}$ for $s>0$.

Before we move on, we need to introduce the definition of sharp code:

\begin{definition}
Let $\hat{\bm{V}}_N=\{\hat{\bm{v}}_1,\cdots,\hat{\bm{v}}_N\}$ be a $N$-point configuration on $\mathbb{S}^{d'}$.
\begin{itemize}
    \item If for every $(d'+1)$-variate polynomial $P$ of degree at most $m$,
    \begin{equation*}
        \int_{\mathbb{S}^{d'}} P d\sigma_{d'}=\frac{1}{N}\sum_{i=1}^N P(\hat{\bm{v}}_i)
    \end{equation*}
    then $\hat{\bm{V}}_N$ is called a spherical $m$-design.
    \item If $\hat{\bm{V}}_N$ is a configuration of $N$ distinct points such that the set of inner products between distinct points in $\hat{\bm{V}}_N$ has cardinality $k$, then $\hat{\bm{V}}_N$ is called a spherical $k$-distance set.
    \item The configuration $\hat{\bm{V}}_N$ is a sharp code if it is both a $k$-distance set and a spherical $(2k-1)$-design.
\end{itemize}
\end{definition}

The Cohn-Kumar Universal Optimality theorem~\cite{cohn2007universally} states that any sharp code is universally optimal. By universal optimality, we mean that

\begin{definition}
    An $N$-point configuration $\hat{\bm{V}}_N$ on $\mathbb{S}^{d'}$ is called universally optimal if
    \begin{equation*}
    E_f(\hat{\bm{V}}_N):=\sum_{\hat{\bm{v}}_1,\hat{\bm{v}}_2\in\hat{\bm{V}}_N, \hat{\bm{v}}_1\neq\hat{\bm{v}}_2 } f(\hat{\bm{v}}_1^\top\hat{\bm{v}}_2) = \min_{\hat{\bm{V}}_N\subset \mathbb{S}^{d'}}E_f(\hat{\bm{V}}_N)
    \end{equation*}
    holds for any absolutely monotone function $f:[-1,1)\rightarrow \mathbb{R}$.
\end{definition}

Then formally, Cohn-Kuma Universal Optimality Theorem states:

\begin{theorem}
    If $\hat{\bm{V}}_N$ is a sharp code on $\mathbb{S}^{d'}$, then $\hat{\bm{V}}_N$ is universally optimal.
\end{theorem}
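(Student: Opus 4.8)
The plan is to prove the theorem through the linear programming (LP) bound for energy, which is the standard route to universal optimality. First I would fix the normalized ultraspherical (Gegenbauer) polynomials $G_0, G_1, G_2, \dots$ associated with $\mathbb{S}^{d'}$, normalized so that $G_0 \equiv 1$ and $G_k(1) = 1$. The key structural input is Schoenberg's positive-definiteness theorem: for every $k \ge 0$ and every configuration $\hat{\bm{V}}_N = \{\hat{\bm{v}}_1, \dots, \hat{\bm{v}}_N\} \subset \mathbb{S}^{d'}$,
\[
\sum_{i,j=1}^N G_k(\inp{\hat{\bm{v}}_i}{\hat{\bm{v}}_j}) \ge 0,
\]
with the $k = 0$ term equal to $N^2$. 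Consequently, if an auxiliary function $h:[-1,1]\to\mathbb{R}$ admits a Gegenbauer expansion $h = \sum_{k\ge 0}\hat h_k G_k$ with $\hat h_k \ge 0$ for all $k \ge 1$, then summing the expansion termwise gives the LP bound
\[
\sum_{i\neq j} h(\inp{\hat{\bm{v}}_i}{\hat{\bm{v}}_j}) = \sum_{i,j} h(\inp{\hat{\bm{v}}_i}{\hat{\bm{v}}_j}) - N h(1) \ge \hat h_0 N^2 - N h(1)
\]
for every configuration. If moreover $h(t)\le f(t)$ on $[-1,1)$, then $E_f(\hat{\bm{V}}_N) \ge \hat h_0 N^2 - N h(1)$ universally, so the whole problem reduces to building, for the given sharp code, a single $h$ with all three properties that makes this bound tight.

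Second, I would construct $h$ by Hermite interpolation. Let $t_1, \dots, t_k \in [-1,1)$ be the $k$ distinct inner products of the sharp code (it is a $k$-distance set), and define $h$ to be the unique polynomial of degree at most $2k-1$ matching the value and first derivative of $f$ at each $t_\ell$. Because $f$ is absolutely monotone on $[-1,1)$, the Hermite error formula gives $f(t) - h(t) = \frac{f^{(2k)}(\xi)}{(2k)!}\prod_{\ell=1}^k (t-t_\ell)^2 \ge 0$, so $h \le f$ on $[-1,1)$ with equality exactly at the nodes. This secures $E_f \ge E_h$ and, crucially, turns it into an equality for the sharp code, since every distinct pair of its points realizes one of the interpolation nodes.

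Third, I would establish tightness of the LP step for the sharp code via its spherical $(2k-1)$-design property. By the addition formula, $\sum_{i,j} G_m(\inp{\hat{\bm{v}}_i}{\hat{\bm{v}}_j})$ is a positive multiple of $\sum_\ell \big(\sum_i Y_{m\ell}(\hat{\bm{v}}_i)\big)^2$, where $\{Y_{m\ell}\}$ is an orthonormal basis of degree-$m$ spherical harmonics. For a $(2k-1)$-design and $1 \le m \le 2k-1$, each $\sum_i Y_{m\ell}(\hat{\bm{v}}_i)$ equals $N\int Y_{m\ell}\,d\sigma_{d'} = 0$, so these terms vanish. Since $h$ has degree at most $2k-1$, its expansion involves only $G_0,\dots,G_{2k-1}$, whence $\sum_{i,j} h = \hat h_0 N^2$ for the sharp code. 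Combined with $E_f = E_h$ from the previous step, the sharp code attains the universal lower bound $\hat h_0 N^2 - N h(1)$, so it is an energy minimizer for every absolutely monotone $f$.

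The main obstacle is the remaining hypothesis of the LP bound: that the Gegenbauer coefficients $\hat h_m$ of the interpolant are nonnegative for all $m \ge 1$. Unlike $h \le f$, which falls out of absolute monotonicity, and the design cancellation, which is a clean computation, positivity of $\hat h_m$ does not follow formally and is the technical heart of Cohn--Kumar; it must exploit both the absolute monotonicity of $f$ and the rigidity of the node set $\{t_\ell\}$ forced by the sharp-code structure. I would handle it through the recursive sign-pattern analysis of the interpolation coefficients developed there, propagating the nonnegativity of the derivatives of $f$ through the Gegenbauer expansion. Once this positivity is secured, the three steps above assemble immediately into universal optimality.
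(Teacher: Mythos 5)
Your proposal and the paper take fundamentally different routes, because the paper does not actually prove this statement at all: it is quoted verbatim as the Cohn--Kumar Universal Optimality Theorem from \cite{cohn2007universally} and used as a black box to deduce that the cross-polytope (a sharp code) minimizes the $s=2$ Riesz energy. You, by contrast, sketch the genuine proof from that reference: the Delsarte--Yudin linear programming bound $E_f(\hat{\bm{V}}_N)\geq \hat h_0 N^2 - N h(1)$ for any auxiliary $h\leq f$ with nonnegative Gegenbauer coefficients, the Hermite interpolant of degree $\leq 2k-1$ at the $k$ inner products whose error term is a perfect square times $f^{(2k)}\geq 0$ (giving $h\leq f$ with equality on the code's distance set), and the $(2k-1)$-design property killing all harmonics of degree $1\leq m\leq 2k-1$ so that the bound is attained. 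This architecture is correct, and your identification of where each hypothesis of ``sharp code'' enters (the $k$-distance set fixes the interpolation nodes; the design property gives tightness) is exactly right.

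The one substantive caveat is the step you yourself flag: nonnegativity of the Gegenbauer coefficients $\hat h_m$ of the Hermite interpolant. This is not a routine verification but the technical core of Cohn--Kumar (their Section 5), and it genuinely uses the interplay between absolute monotonicity of $f$ and the structure of the node set; it is false for arbitrary interpolation nodes. Deferring it to ``the recursive sign-pattern analysis developed there'' is an honest and accurate description of the literature, but it means your write-up is a faithful proof outline rather than a self-contained proof. Given that the paper itself supplies no proof and simply cites the result, your sketch is strictly more informative than what the paper offers, and it contains no incorrect steps---only this one explicitly acknowledged black box.
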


Because the vertices of the cross-polytope are a sharp code, then this vertex set ($2d'+2$ points in total) is universally optimal, which implies that
\begin{equation}
    E_f(\hat{\bm{W}}_N) = \min_{\hat{\bm{W}}_N\subset \mathbb{S}^{d'}}E_f(\hat{\bm{W}}_N)
\end{equation}
where $\hat{\bm{W}}_N$ denote the vertex set of the cross-polytope. Then we let $s=2$ for the $f$-energy and $d'=d-1$, and we prove our theorem.\QEDA

\newpage
\section{Proof of Theorem~\ref{thm:asyc}}

This theorem is in fact a well-known result (see \cite{hardin2004discretizing,kuijlaars1998asymptotics,saff1997distributing,borodachov2019discrete}). This general result is stated as

\begin{theorem}\label{async_sphere}
    If $A\subset\mathbb{R}^p$ is compact with $\dim A>0$ and $0<s<\dim A$, then
    \begin{equation*}
        \lim_{N\rightarrow\infty}\frac{\varepsilon_s(A,N)}{N^2}=W_s(A),
    \end{equation*}
    where $\varepsilon_s(A,n):=\min_{\hat{\bm{W}}_n\subset A}E_s(\hat{\bm{W}}_n)$ and $W_s(A)$ is Wiener constant. Moreover, the equilibrium measure $\mu_{s,A}$ on $A$ is unique for the Riesz $s$-kernel when $0<s<\dim A$. Finally, any sequence $\{\hat{\bm{v}}_1^N,\cdots,\hat{\bm{v}}_N^N\}_{N=2}^{\infty}$ of asympototically $s$-energy minimizing $N$-point configuration on $A$ satisfies 
    \begin{equation*}
    v(\{\hat{\bm{v}}_1^N,\cdots,\hat{\bm{v}}_N^N\})\rightarrow_{\text{weak}}\mu_{s,A},~N\rightarrow\infty
    \end{equation*}
\end{theorem}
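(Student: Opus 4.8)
The plan is to prove all three assertions of the theorem---the $N^{2}$-asymptotics of the minimal energy, the existence and uniqueness of the equilibrium measure $\mu_{s,A}$, and the weak-$*$ convergence of the empirical measures of asymptotically minimizing configurations---by the classical Frostman--Fekete route of potential theory. Throughout I write the continuous Riesz $s$-energy of a Borel probability measure $\mu$ on $A$ as $I_s(\mu):=\int_A\int_A\norm{x-y}^{-s}\,d\mu(x)\,d\mu(y)$ and set $W_s(A):=\inf_\mu I_s(\mu)$ over all such $\mu$. The hypothesis $0<s<\dim A$ is exactly what makes $A$ have positive $s$-capacity, so that $W_s(A)<\infty$ and the uniform measure on $\sS^{d-1}$ (where $\dim A=d-1$) has finite energy; this is the regime in which the whole argument runs.

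First I would settle existence and uniqueness of $\mu_{s,A}$. The set of probability measures on the compact set $A$ is weak-$*$ compact, and $I_s$ is lower semicontinuous there (the kernel is lower semicontinuous and nonnegative), so a minimizer $\mu_{s,A}$ attaining $W_s(A)$ exists. Uniqueness follows from the strict positive-definiteness of the Riesz kernel on $\R^p$, valid for $0<s<p$ and hence here since $s<\dim A\le p$: for finite-energy signed measures $\sigma$ one has $I_s(\sigma)\ge0$ with equality only when $\sigma=0$, so two distinct minimizers $\mu\neq\nu$ would give $I_s(\mu-\nu)>0$, contradicting convexity of $I_s$ along the segment joining them. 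On $A=\sS^{d-1}$ this is where the statement becomes concrete: since $I_s$ is rotation invariant and its minimizer is unique, $\mu_{s,A}$ must itself be rotation invariant and therefore equal to the uniform (normalized surface) measure---precisely the hyperspherical uniformity the theorem asserts.

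Next I would prove the two-sided bound $\varepsilon_s(A,N)/N^{2}\to W_s(A)$ together with the distributional claim. For the upper bound I would draw $N$ points i.i.d.\ from $\mu_{s,A}$; since $s<\dim A$ makes $I_s(\mu_{s,A})=W_s(A)$ finite, the expected energy is $\E\,E_s=N(N-1)\,W_s(A)$, so some configuration has energy at most this and $\limsup_N\varepsilon_s(A,N)/N^{2}\le W_s(A)$. For the lower bound and the convergence I pass to the empirical measures $\nu_N:=\tfrac1N\sum_i\delta_{\hat{\bm{v}}_i^{N}}$ (the normalized counting measures $v(\cdot)$ of the statement) of an asymptotically minimizing sequence; by compactness of $A$ and Prokhorov every subsequence has a weak-$*$ limit $\nu^{*}$. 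To compare the singular discrete energy with $I_s$ I truncate, $K_M:=\min(\norm{x-y}^{-s},M)$, which is bounded and continuous. Weak convergence then gives $\int\int K_M\,d\nu_{N}\,d\nu_{N}\to\int\int K_M\,d\nu^{*}\,d\nu^{*}$ along the subsequence, the diagonal term $M/N$ vanishes, and bounding $K_M\le\norm{x-y}^{-s}$ off the diagonal yields $\int\int K_M\,d\nu^{*}\,d\nu^{*}\le\liminf_N\varepsilon_s(A,N)/(N(N-1))$, which is $\le W_s(A)$ by the upper bound. Letting $M\to\infty$ by monotone convergence gives $I_s(\nu^{*})\le W_s(A)$; since $W_s(A)$ is the infimum we get $I_s(\nu^{*})=W_s(A)$ and $\liminf_N\varepsilon_s(A,N)/N^{2}=W_s(A)$, and by uniqueness $\nu^{*}=\mu_{s,A}$. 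As every subsequential limit equals $\mu_{s,A}$, the full sequence of empirical measures converges weakly to $\mu_{s,A}$---on the sphere, exactly the asserted uniform distribution.

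The delicate point, and the main obstacle, is matching the singular discrete energy with the continuous functional $I_s$: the kernel is infinite on the diagonal, so one cannot evaluate $I_s(\nu_N)$ directly, and it is the truncate-then-limit maneuver with $K_M$ (keeping the vanishing-diagonal bookkeeping under control) that makes the comparison rigorous. The second nontrivial ingredient is the strict positive-definiteness underlying uniqueness; this is standard for $0<s<p$ through the Fourier/Bessel representation of the Riesz energy on $\R^p$, but it is exactly the step that is not elementary, so I would invoke it---together with the technical energy lemmas---from the monograph treatment in \cite{borodachov2019discrete} rather than reprove it.
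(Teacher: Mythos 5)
Your argument is correct, but note first that the paper does not actually prove this statement: Theorem~\ref{async_sphere} is imported verbatim as a known result from the potential-theory literature (the citations to Saff--Kuijlaars, Hardin--Saff and Borodachov--Hardin--Saff), and the appendix merely specializes it to $A=\mathbb{S}^{d-1}$, $s=2$, $N=C$ to deduce Theorem~\ref{thm:asyc}. What you have written is, in substance, the standard proof from those references --- the Frostman--Fekete route: existence of $\mu_{s,A}$ by weak-$*$ compactness and lower semicontinuity of $I_s$, uniqueness from strict positive-definiteness of the Riesz kernel for $0<s<p$, the upper bound $\varepsilon_s(A,N)\le N(N-1)W_s(A)$ by sampling $N$ i.i.d.\ points from $\mu_{s,A}$, and the matching lower bound together with weak-$*$ convergence of the empirical measures via truncation of the kernel at level $M$, the vanishing $M/N$ diagonal contribution, and monotone convergence as $M\to\infty$. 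You also correctly isolate the two genuinely nontrivial inputs --- positive-definiteness of the Riesz kernel and finiteness of $W_s(A)$ when $s<\dim A$ (positive $s$-capacity via Frostman's lemma) --- and defer them to the monograph, which is exactly what the paper does wholesale. The one place I would tighten the write-up is the uniqueness step: rather than appealing loosely to ``convexity along the segment,'' state the parallelogram identity $I_s\bigl(\tfrac{\mu+\nu}{2}\bigr)=\tfrac12 I_s(\mu)+\tfrac12 I_s(\nu)-\tfrac14 I_s(\mu-\nu)$, so that $I_s(\mu-\nu)>0$ for distinct finite-energy minimizers immediately forces $I_s\bigl(\tfrac{\mu+\nu}{2}\bigr)<W_s(A)$, a contradiction. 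With that cosmetic repair your reconstruction is a complete and faithful proof of the cited result.
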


From the theorem above, with $s=2$, $d-1>s$, $N=C$ and $A=\mathbb{S}^{d-1}$, we have that $W_s(\mathbb{S}^{d-1})$ is a constant term, and most importantly, we have that these point sequences $\{\hat{\bm{\mu}}_1^C,\cdots,\hat{\bm{\mu}}_C^C\}$ asymptotically minimizes the hyperspherical energy on $\mathbb{S}^{d-1}$.

Moreover, the same theorem also gives that the leading term of the minimum hyperspherical energy is of order $\mathcal{O}(n^2)$ as $n\rightarrow\infty$.
\QEDA

\newpage
\section{Proof of Proposition~\ref{thm:randint}}

We show that zero-mean equal-variance Gaussian distributed vectors (after normalized to norm $1$) are uniformly distributed over the unit hypersphere with Theorem~\ref{sphereuniform}.

\begin{lemma}\label{lemma_sphereuniform}
Let $\bm{x}$ be a $n$-dimensional random vector with distribution $\mathcal{N}(0,1)$ and $\bm{U}\in\mathbb{R}^{n\times n}$ be an orthogonal matrix ($\bm{U}\bm{U}^\top =\bm{U}^\top\bm{U}=\bm{I} $). Then $\bm{Y}=\bm{U}\bm{x}$ also has the distribution of $\mathcal{N}(0,1)$.
\end{lemma}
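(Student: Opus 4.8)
The plan is to prove rotational invariance of the isotropic Gaussian by a direct change-of-variables computation on the density, keeping the characteristic-function route in reserve as a clean alternative. First I would record that $\bm{x}\sim\mathcal{N}(0,\bm{I})$ has density $p_{\bm{x}}(\bm{x})=(2\pi)^{-n/2}\exp(-\norm{\bm{x}}^2/2)$, so that the entire dependence on $\bm{x}$ enters only through $\norm{\bm{x}}^2$. The map $\bm{Y}=\bm{U}\bm{x}$ is a bijection with inverse $\bm{x}=\bm{U}^\top\bm{Y}$ because $\bm{U}$ is orthogonal, and its Jacobian determinant is $|\det\bm{U}^\top|=1$ since $\det\bm{U}=\pm1$.

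The key step is the norm-preservation identity $\norm{\bm{U}^\top\bm{y}}^2=(\bm{U}^\top\bm{y})^\top(\bm{U}^\top\bm{y})=\bm{y}^\top\bm{U}\bm{U}^\top\bm{y}=\norm{\bm{y}}^2$, which uses exactly the hypothesis $\bm{U}\bm{U}^\top=\bm{I}$. Applying the change-of-variables formula $p_{\bm{Y}}(\bm{y})=p_{\bm{x}}(\bm{U}^\top\bm{y})\,|\det\bm{U}^\top|$ and substituting this identity collapses the density back to $(2\pi)^{-n/2}\exp(-\norm{\bm{y}}^2/2)$, which is once more the density of $\mathcal{N}(0,\bm{I})$. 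That would complete the argument.

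As an alternative I would compute the characteristic function $\phi_{\bm{Y}}(\bm{t})=\E[\exp(i\bm{t}^\top\bm{U}\bm{x})]=\phi_{\bm{x}}(\bm{U}^\top\bm{t})=\exp(-\norm{\bm{U}^\top\bm{t}}^2/2)=\exp(-\norm{\bm{t}}^2/2)$, invoking the same orthogonality identity together with the fact that a distribution is determined by its characteristic function. There is no genuine obstacle here: the whole content is that an orthogonal transformation preserves the Euclidean norm and has unit-modulus determinant, so the isotropic Gaussian---whose law depends on the coordinate vector only through its norm---is left invariant. The one point worth stating carefully is that $\mathcal{N}(0,1)$ in the lemma denotes the standard \emph{multivariate} (isotropic) Gaussian with covariance $\bm{I}$, since the claim would fail for a non-isotropic covariance; once this reading is fixed the proof is immediate.
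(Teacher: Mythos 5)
Your proof is correct and follows essentially the same route as the paper's: a change of variables in the Gaussian density, using orthogonality of $\bm{U}$ to preserve $\norm{\bm{x}}^2$ (the paper writes $P(Y\in A)=P(X\in \bm{U}^\top A)$ and substitutes in the integral). You are in fact slightly more careful than the paper in making the unit Jacobian $|\det\bm{U}^\top|=1$ explicit and in flagging that $\mathcal{N}(0,1)$ must be read as the isotropic multivariate Gaussian.
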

\begin{proof}
For any measurable set $A\subset\mathbb{R}^n$, we have that
\begin{equation}
\begin{aligned}
    P(Y\in A)&= P(X\in U^\top A)\\
    &=\int_{U^\top A}\frac{1}{(\sqrt{2\pi})^n} e^{-\frac{1}{2}\langle x,x \rangle}\\
    &=\int_A\frac{1}{(\sqrt{2\pi})^n}e^{-\frac{1}{2}\langle Ux, Ux \rangle}\\
    &=\int_A\frac{1}{(\sqrt{2\pi})^n}e^{-\frac{1}{2}\langle x, x \rangle}
\end{aligned}
\end{equation}
because of orthogonality of $U$. Therefore the lemma holds.\QEDA
\end{proof}

\begin{theorem}\label{sphereuniform}
The normalized vector of Gaussian variables is uniformly distributed on the sphere. Formally, let $x_1,x_2,\cdots,x_n\sim \mathcal{N}(0,1)$ and be independent. Then the vector
\begin{equation}
    \bm{x}=\bigg{[} \frac{x_1}{z},\frac{x_2}{z},\cdots,\frac{x_n}{z} \bigg{]}
\end{equation}
follows the uniform distribution on $\mathbb{S}^{n-1}$, where $z=\sqrt{x_1^2+x_2^2+\cdots+x_n^2}$ is a normalization factor.
\end{theorem}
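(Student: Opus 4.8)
The plan is to exploit the rotational symmetry of the isotropic Gaussian, which is already captured by Lemma~\ref{lemma_sphereuniform}, and to transfer this symmetry to the sphere through the normalization map. Write $\pi:\R^n\setminus\{\bm 0\}\rightarrow\mathbb{S}^{n-1}$ for the radial projection $\pi(\bm y)=\bm y/\norm{\bm y}$, so that the random vector in the statement is exactly $\pi(\bm x)$ with $\bm x=(x_1,\dots,x_n)$. Since $\bm x\neq\bm 0$ almost surely (the event $\bm x=\bm 0$ has Lebesgue, hence Gaussian, measure zero), $\pi(\bm x)$ is well defined almost surely and its law $\nu$ is a Borel probability measure on $\mathbb{S}^{n-1}$. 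I would then show that $\nu$ is invariant under every orthogonal transformation, and finally argue that the uniform (surface) measure is the only such invariant probability measure.

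For the invariance step, fix an orthogonal matrix $\bm U$ and a Borel set $A\subset\mathbb{S}^{n-1}$. The key algebraic observation is that $\pi$ commutes with orthogonal maps: $\pi(\bm U\bm y)=\bm U\bm y/\norm{\bm U\bm y}=\bm U\bm y/\norm{\bm y}=\bm U\pi(\bm y)$, because $\bm U$ preserves the Euclidean norm. Hence $\{\pi(\bm x)\in\bm U A\}=\{\pi(\bm U^\top\bm x)\in A\}$, and by Lemma~\ref{lemma_sphereuniform} the vector $\bm U^\top\bm x$ has the same distribution as $\bm x$. Therefore $\nu(\bm U A)=P(\pi(\bm U^\top\bm x)\in A)=P(\pi(\bm x)\in A)=\nu(A)$, so $\nu$ is rotation invariant.

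The main obstacle, and the only nontrivial point, is the final step: showing that a rotation-invariant Borel probability measure on $\mathbb{S}^{n-1}$ must be the uniform measure $\sigma$. I would invoke the uniqueness of the invariant probability measure under a transitive compact-group action — the orthogonal group $O(n)$ acts transitively on $\mathbb{S}^{n-1}$ and carries a Haar probability measure, and averaging any invariant $\nu$ against this Haar measure forces $\nu=\sigma$. For a self-contained alternative I would instead change variables to polar coordinates $\bm y=r\bm\omega$ with $r>0,\ \bm\omega\in\mathbb{S}^{n-1}$: the Gaussian density $(2\pi)^{-n/2}e^{-\norm{\bm y}^2/2}$ factors as a function of $r$ alone times the surface element $d\sigma(\bm\omega)$, so the radius and the direction are independent and the direction has density proportional to $d\sigma$, i.e.\ is uniform. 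Either route completes the proof.
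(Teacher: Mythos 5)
Your proposal is correct and takes essentially the same route as the paper: invoke Lemma~\ref{lemma_sphereuniform} for the rotational invariance of the isotropic Gaussian, push that invariance through the normalization map, and conclude uniformity on $\mathbb{S}^{n-1}$. In fact you are more complete than the paper, which simply asserts the final implication, whereas you explicitly justify that a rotation-invariant Borel probability measure on the sphere must be the uniform measure (via uniqueness of the invariant measure under the transitive $O(n)$-action, or the polar-coordinate factorization).
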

\begin{proof}
A random variable has distribution $\mathcal{N}(0,1)$ if it has the density function
\begin{equation}
    f(x)=\frac{1}{\sqrt{2\pi}}e^{-\frac{1}{2}x^2}.
\end{equation}
A $n$-dimensional random vector $\bm{x}$ has distribution $\mathcal{N}(0,1)$ if the components are independent and have distribution $\mathcal{N}(0,1)$ each. Then the density of $\bm{x}$ is given by
\begin{equation}
    f(x)=\frac{1}{(\sqrt{2\pi})^n}e^{-\frac{1}{2}\langle x,x\rangle}.
\end{equation}
Then we use Lemma~\ref{lemma_sphereuniform} about the orthogonal-invariance of the normal distribution.

Because any rotation is just a multiplication with some orthogonal matrix, we know that normally distributed random vectors are invariant to rotation. As a result, generating $\bm{x}\in\mathbb{R}^n$ with distribution $\mathbb{N}(0,1)$ and then projecting it onto the hypersphere $\mathbb{S}^{n-1}$ produces random vectors $U=\frac{\bm{x}}{\|\bm{x}\|}$ that are uniformly distributed on the hypersphere. Therefore the theorem holds.\QEDA
\end{proof}

The above results indicate that as long as class proxies are initialize with zero-mean Gaussian, they are uniformly distributed over the hypersphere in a probabilistic sense. \QEDA

\newpage
\section{Derivation of HUG Surrogate for MHE and MHS}\label{app:der_mhe_mhs}

The derivation of $\mathcal{L}'_{\text{MHE-HUG}}$ is as follows:
\begin{equation}
\begin{aligned}
    \mathcal{L}_{\text{MHE-HUG}}&:=\alpha\cdot E_{s_b}\big(\{\hat{\bm{w}}_c\}_{c=1}^C\big)-\beta\cdot\sum_{c=1}^C E_{s_w}\big(\{\hat{\bm{x}}_i\}_{i\in{A_c}},\hat{\bm{w}}_c\big)\\
    &= \alpha\cdot\sum_{c\neq c'}\|\hat{\bm{w}}_c-\hat{\bm{w}}_{c'}\|^{-2}+
    \beta\cdot\sum_{c}\big( \sum_{i,j\in A_c,i\neq j} \|\hat{\bm{x}}_i-\hat{\bm{x}}_j\|+ 2\cdot\sum_{i\in A_c} \|\hat{\bm{x}}_i-\hat{\bm{w}}_c\|\big) \\
    &= \alpha\cdot\sum_{c\neq c'}\|\hat{\bm{w}}_c-\hat{\bm{w}}_{c'}\|^{-2}+
    \beta\cdot\sum_{c}\big( \sum_{i,j\in A_c,i\neq j} \|\hat{\bm{x}}_i-\hat{\bm{w}}_c+\hat{\bm{w}}_c-\hat{\bm{x}}_j\|\\
    &~~~~~~~~~~~~~~~~~~~~~~~~~~~~~~~~~~~~~~~~~~~~~~~~~~~~~~~~~~~~~~~~~~~~~~~~~~~~~~~~~~~~~~+ 2\cdot\sum_{i\in A_c} \|\hat{\bm{x}}_i-\hat{\bm{w}}_c\|\big) \\
    &\leq \alpha\cdot\sum_{c\neq c'}\|\hat{\bm{w}}_c-\hat{\bm{w}}_{c'}\|^{-2}+
    \beta\cdot\sum_{c}\big( \sum_{i,j\in A_c,i\neq j} (\|\hat{\bm{x}}_i-\hat{\bm{w}}_c\|+\|\hat{\bm{w}}_c-\hat{\bm{x}}_j\|)\\
    &~~~~~~~~~~~~~~~~~~~~~~~~~~~~~~~~~~~~~~~~~~~~~~~~~~~~~~~~~~~~~~~~~~~~~~~~~~~~~~~~~~~~~~+ 2\cdot\sum_{i\in A_c} \|\hat{\bm{x}}_i-\hat{\bm{w}}_c\|\big)\\
    &=  \alpha\cdot\sum_{c\neq c'}\|\hat{\bm{w}}_c-\hat{\bm{w}}_{c'}\|^{-2}+
    \beta'\cdot\sum_{c}\sum_{i\in A_c}\|\hat{\bm{x}}_i-\hat{\bm{w}}_c\|=:\mathcal{L}'_{\text{MHE-HUG}}
\end{aligned}
\end{equation}

The derivation of $\mathcal{L}'_{\text{MHS-HUG}}$ is as follows:
\begin{equation}
\begin{aligned}
    \mathcal{L}_{\text{MHS-HUG}}&:=\alpha\cdot \vartheta\big(\{\hat{\bm{w}}_c\}_{c=1}^C\big)-\beta\cdot\sum_{c=1}^C \vartheta\big(\{\hat{\bm{x}}_i\}_{i\in{A_c}},\hat{\bm{w}}_c\big)\\
    &= \alpha\cdot  \min_{c\neq c'} \|\hat{\bm{w}}_c-\hat{\bm{w}}_{c'}\|-\beta\cdot \sum_{c}\max_{\bm{u},\bm{v}\in\{ \{\hat{\bm{x}}_i\}_{i\in{A_c}},\hat{\bm{w}}_c \}, \bm{u}\neq\bm{v}  }\|\bm{u}-\bm{v}\|\\
    &\leq\alpha\cdot  \min_{c\neq c'} \|\hat{\bm{w}}_c-\hat{\bm{w}}_{c'}\|-\beta\cdot \sum_{c}\max_{i\in A_c}\|\hat{\bm{x}}_i-\hat{\bm{w}}_c\|=:\mathcal{L}'_{\text{MHS-HUG}}.
\end{aligned}
\end{equation}
Most importantly, $\sum_{c}\max_{\bm{u},\bm{v}\in\{ \{\hat{\bm{x}}_i\}_{i\in{A_c}},\hat{\bm{w}}_c \}, \bm{u}\neq\bm{v}  }\|\bm{u}-\bm{v}\|$ in $\mathcal{L}_{\text{MHS-HUG}}$ and $\mathcal{L}'_{\text{MHS-HUG}}$ share the same minimizer (minimum is $0$, which happens when intra-class feature collapse to its class proxy). Therefore, $\mathcal{L}'_{\text{MHS-HUG}}$ and $\mathcal{L}_{\text{MHS-HUG}}$ share the same maximizer, and $\mathcal{L}'_{\text{MHS-HUG}}$ can be viewed as a surrogate loss for $\mathcal{L}_{\text{MHS-HUG}}$.


\newpage
\section{Proof of Proposition~\ref{thm:mhe_mhs}}

For notational convenience, we first define $\varepsilon_s(\mathbb{S}^{d-1},n):=\min_{\hat{\bm{V}}_n\subset\mathbb{S}^{d-1}}E_s(\hat{\bm{V}}_n)$ and $\delta^{\rho}_n(\mathbb{S}^{d-1}):=\max_{\hat{\bm{V}}_n\subset\mathbb{S}^{d-1}}\vartheta(\hat{\bm{V}}_n)$. We then define that $\hat{\bm{V}}_n^s$ is a $s$-energy minimizing $n$-point configuration on $\mathbb{S}^{d-1}$ if $0<s<\infty$ (\ie, MHE configuration) and $\hat{\bm{V}}_n^\infty$ denotes a best-packing configuration on $\mathbb{S}^{d-1}$ if $s=\infty$ (\ie, MHS configuration). Since we are considering $s>0$, we only need to discuss the case of $K_s(\hat{\bm{v}}_i,\hat{\bm{v}}_j)=\rho(\hat{\bm{v}}_i,\hat{\bm{v}}_j)^{-s}$. Then we will have the following equation:
\begin{equation}\label{p1_eq1}
    \varepsilon_s(\mathbb{S}^{d-1},n)^{\frac{1}{s}}=E_s(\hat{\bm{V}}_n^s)^{\frac{1}{s}}\geq\frac{1}{\delta^\rho_n(\hat{\bm{V}}_n^s)}\geq\frac{1}{\delta^\rho_n(\mathbb{S}^{d-1})}.
\end{equation}
Moreover, we have that
\begin{equation}
\begin{aligned}
\varepsilon_s(\mathbb{S}^{d-1},n)^{\frac{1}{s}}&\leq E_s(\hat{\bm{V}}^\infty_n)^{\frac{1}{s}}\\
&=\frac{1}{\delta^\rho(\hat{\bm{V}}^\infty_n)}\bigg( \sum_{1\leq i\neq j \leq N} \big( \frac{\delta^\rho(\hat{\bm{V}}^\infty_n)}{\rho(\hat{\bm{v}}_i^\infty,\hat{\bm{v}}_j^\infty)} \big)^s \bigg)^{\frac{1}{s}}\\
&\leq \frac{1}{\delta^\rho(\hat{\bm{V}}^\infty_n)}\big(n(n-1)\big)^{\frac{1}{s}}
\end{aligned}
\end{equation}
Therefore, we will end up with
\begin{equation}\label{p1_eq3}
    \lim_{s\rightarrow \infty}\sup\varepsilon_s(\mathbb{S}^{d-1},n)^{\frac{1}{s}}\leq\frac{1}{\delta^\rho(\hat{\bm{V}}_n^\infty)}=\frac{1}{\delta^\rho_n(\mathbb{S}^{d-1})}.
\end{equation}
Then we take both Eq.~\ref{p1_eq1} and Eq.~\ref{p1_eq3} into consideration and have that
\begin{equation}
    \lim_{s\rightarrow\infty}\varepsilon_s(\mathbb{S}^{d-1},n)^{\frac{1}{s}}=\frac{1}{\delta_n^{\rho}(\mathbb{S}^{d-1})}
\end{equation}
which concludes the proof.\QEDA

\newpage
\section{Proof of Proposition~\ref{thm:hug_optimum}}

We write down the formulation of the HUG objectives (with MHE):
\begin{equation}
\begin{aligned}
    \min_{\{\hat{\bm{x}}_i\}_{i=1}^n,\{\hat{\bm{w}}_c\}_{c=1}^C}\mathcal{L}_{\text{MHE-HUG}}&:=\alpha\cdot E_{s_b}\big(\{\hat{\bm{w}}_c\}_{c=1}^C\big)-\beta\cdot\sum_{c=1}^C E_{s_w}\big(\{\hat{\bm{x}}_i\}_{i\in{A_c}},\hat{\bm{w}}_c\big)\\
    &=\alpha\cdot\sum_{c\neq c'}\|\hat{\bm{w}}_c-\hat{\bm{w}}_{c'}\|^{-2}+
    \beta\cdot\sum_{c}\big( \sum_{i,j\in A_c,i\neq j} \|\hat{\bm{x}}_i-\hat{\bm{x}}_j\|\\
    &~~~~~~~~~~~~~~~~~~~~~~~~~~~~~~~~~~~~~~~~~~~~~~~~~~~~~~~~~~~~~~~~~~~~+ 2\cdot\sum_{i\in A_c} \|\hat{\bm{x}}_i-\hat{\bm{w}}_c\|\big)
\end{aligned}
\end{equation}

\begin{equation}
    \min_{\{\hat{\bm{x}}_i\}_{i=1}^n,\{\hat{\bm{w}}_c\}_{c=1}^C}\mathcal{L}'_{\text{MHE-HUG}}= \alpha\cdot\sum_{c\neq c'}\|\hat{\bm{w}}_c-\hat{\bm{w}}_{c'}\|^{-2}+
    \beta'\cdot\sum_{c}\sum_{i\in A_c}\|\hat{\bm{x}}_i-\hat{\bm{w}}_c\|
\end{equation}

For both objectives, we can see that the minimizer of the second term (\ie, the intra-class variability term) is all intra-class feature collapse to their class proxy and therefore the second term achieves the global minimum $0$.

For the first term of both objectives, the global minimizer can be obtain directly from Theorem~\ref{thm:rso}, Theorem~\ref{thm:polytope} and Theorem~\ref{thm:asyc}. It is easy to see that the global minimizer of the inter-class separability term and the intra-class variability term does not contradict with each other and can be achieved simultaneously.\QEDA

\newpage
\section{Proof of Proposition~\ref{thm:lossce}}
\label{app:ineq}

\begin{equation}
\begin{aligned}
    &\sum_{i=1}^n\log(1+\sum_{j=1\neq y_i}^C \exp(\langle\bm{w}_j,\bm{x}_i\rangle-\langle\bm{w}_{y_i},\bm{x}_i\rangle))\\
    \geq&\sum_{i=1}^n\sum_{j=1\neq y_i }^C \log(1+\exp(\langle\bm{w}_j,\bm{x}_i\rangle-\langle\bm{w}_{y_i},\bm{x}_i\rangle))\\
    \geq&\sum_{i=1}^n\sum_{j=1\neq y_i}^C(\langle\bm{w}_j,\bm{x}_i\rangle-\langle\bm{w}_{y_i},\bm{x}_i\rangle)\\
    =& \underbrace{\sum_{i=1}^n\sum_{j\neq y_i}^C\langle\bm{w}_j,\bm{x}_i\rangle}_{Q_1\textnormal{:~Coupling IS and IV}}-\underbrace{(C-1)\sum_{i=1}^n\langle\bm{w}_{y_i},\bm{x}_i\rangle}_{Q_2\textnormal{:~Inter-class Variability}}\\
\end{aligned}
\end{equation}

\begin{equation}
\begin{aligned}
    &\sum_{i=1}^n\log(1+\sum_{j=1\neq y_i}^C \exp(\langle\bm{w}_j,\bm{x}_i\rangle-\langle\bm{w}_{y_i},\bm{x}_i\rangle))\\
    \leq&\log(1+\sum_{i=1}^n\sum_{j=1\neq y_i}^C\exp(\langle\bm{w}_j,\bm{x}_i\rangle-\langle\bm{w}_{y_i},\bm{x}_i\rangle))\\
    \leq&\log(1+\sum_{i=1}^n\sum_{j=1\neq y_i}^C(\exp(\langle\bm{w}_j,\bm{x}_i\rangle)+\exp(-\langle\bm{w}_{y_i},\bm{x}_i\rangle)))\\
    =& \log\big(1+\underbrace{\sum_{i=1}^n\sum_{j\neq y_i}^C\exp(\langle\bm{w}_j,\bm{x}_i\rangle)}_{Q_3\textnormal{:~Coupling IS and IV}}+\underbrace{(C-1)\sum_{i=1}^n\exp(-\langle\bm{w}_{y_i},\bm{x}_i\rangle)}_{Q_4\textnormal{:~Inter-class Variability}}\big)
\end{aligned}
\end{equation}
\QEDA

\newpage
\section{Derivation of CE's Lower Bound}\label{app:ce_lower_bound}

The derivation is actually very simple and this result is originally given by \cite{boudiaf2020unifying}. We find that it naturally matches the intuition behind HUG. For our paper to be self-contained, we briefly give the simple derivation below. For the details, please refer to Proposition~1 in \cite{boudiaf2020unifying}.

We start by rewriting the CE loss as
\begin{equation}
    \mathcal{L}_{\text{CE}}=\underbrace{-\sum_{i=1}^n\langle\bm{w}_{y_i},\bm{x}_i\rangle+\frac{\lambda n}{2}\sum_{c=1}^C \langle\bm{w}_c,\bm{w}_c\rangle }_{Q_1(\bm{w})}+\underbrace{\sum_{i=1}^n\log\sum_{c=1}^C \exp(\langle\bm{w}_c,\bm{x}_i\rangle) -\frac{\lambda n}{2}\sum_{c=1}^C \langle\bm{w}_c,\bm{w}_c\rangle }_{Q_2(\bm{w})}
\end{equation}
where $\lambda$ can be chosen such that both $Q_1(\bm{w})$ and $Q_2(\bm{w})$ become convex functions with respect to $\bm{w}$. Taking advantage of the convexity, we can separately set the gradient of $Q_1(\bm{w})$ and $Q_2(\bm{w})$ with respect to $\bm{w}$ as 0 and compute their minima. Specifically, we end up with
\begin{equation}
    Q_1(\bm{w})\geq Q_1(\bm{w}^*_{Q_1})=-\frac{1}{2\lambda n}\sum_{i=1}^n\sum_{j\in A_{y_i}} \langle \bm{x}_i,\bm{x}_j \rangle,
\end{equation}
\begin{equation}
    Q_2(\bm{w})\geq Q_2(\bm{w}^*_{Q_2})=\sum_{i=1}^n\log\sum_{c=1}^C \exp\bigg( \frac{1}{\lambda n} \sum_{j=1}^n l_{jc} \langle \bm{x}_i,\bm{x}_j \rangle  \bigg) - \frac{n}{2\lambda} \sum_{c=1}^C \norm{\frac{1}{n}\sum_{i=1}^n l_{ic} \bm{x}_i }^2,
\end{equation}
where $l_{ic}=\frac{\exp(\rangle \bm{w}_c , \bm{w}_i \rangle)}{\sum_j \exp(\langle \bm{w}_j,\bm{x}_i \rangle)}$ denotes the softmax confidence. Combining the two lower bounds above, we can have that
\begin{equation}
\begin{aligned}
    \mathcal{L}_{\text{CE}}&\geq Q_1(\bm{w}^*_{Q_1}) + Q_2(\bm{w}^*_{Q_2}) \\
    &= \sum_{i=1}^n\log\sum_{c=1}^C \exp\bigg( \frac{1}{\lambda n} \sum_{j=1}^n l_{jc} \langle \bm{x}_i,\bm{x}_j \rangle  \bigg) - \frac{n}{2\lambda} \sum_{c=1}^C \norm{\frac{1}{n}\sum_{i=1}^n l_{ic} \bm{x}_i }^2 -\frac{1}{2\lambda n}\sum_{i=1}^n\sum_{j\in A_{y_i}} \langle \bm{x}_i,\bm{x}_j \rangle
\end{aligned}
\end{equation}
where the first two terms encourage larger inter-class hyperspherical uniformity, and the last term promotes smaller intra-class hyperspherical uniformity.

\newpage
\section{Proof of Theorem~\ref{thm:ce_converge}}
This theorem follows naturally from the main result in \cite{lu2022neural}. \cite{lu2022neural} has proved that the minimizer of a simplified form of the cross-entropy loss is the simplex ETF when $2\leq C\leq d+1$ and the minimizer also asymptotically converges to uniform measure on the hypersphere. More formally, we have

\begin{theorem}[\cite{lu2022neural}]
    Consider the following variational problem
    \begin{equation}
        \begin{aligned}
            &\min_{\bm{u}}\mathcal{L}_{\alpha}(\bm{u}):=\sum_{i=1}^n \log\bigg( \frac{\sum_{j=1}^n\exp(\langle \bm{u}_j,\bm{u}_i \rangle)}{\exp(\langle\bm{u}_i,\bm{u}_i\rangle)} \bigg)\\
            &~~~~~~\textnormal{s.t.}~~\bm{u}_i\in\mathbb{R}^d,\norm{\bm{u}_i}=1,\forall i
        \end{aligned}
    \end{equation}
    Let $\mu_n$ be the probability measure on $\mathbb{S}^d$ generated by a minimizer
    \begin{equation}
        \mu_n=\frac{1}{n}\sum_{i=1}^n\delta_{\bm{u}_i},
    \end{equation}
    then for any $\alpha>0$, $\mu_n$ converges weakly to the unform measure on $\mathbb{S}^{d-1}$ as $n\rightarrow\infty$.
\end{theorem}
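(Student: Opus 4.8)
The plan is to recast the discrete variational problem as a mean-field energy over probability measures, show the empirical minimizers converge to a minimizer of a continuous functional, and then identify that minimizer with the uniform measure $\sigma$ on $\mathbb{S}^{d-1}$. First I would normalize and rewrite the objective: writing $\mu_n=\frac{1}{n}\sum_{i=1}^n\delta_{\bm{u}_i}$ and using $\|\bm{u}_i\|=1$, the inner sum equals $n\int e^{\alpha(\langle x,y\rangle-1)}\,d\mu_n(y)$ evaluated at $x=\bm{u}_i$, so that
\begin{equation}
\frac{1}{n}\mathcal{L}_\alpha(\bm{u})-\log n=\int_{\mathbb{S}^{d-1}}\log\Big(\int_{\mathbb{S}^{d-1}}e^{\alpha(\langle x,y\rangle-1)}\,d\mu_n(y)\Big)\,d\mu_n(x)=:J(\mu_n).
\end{equation}
Thus minimizing $\mathcal{L}_\alpha$ over point configurations is equivalent to minimizing the nonlinear log-interaction functional $J$ over empirical measures of size $n$. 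The key structural facts about the kernel $K_\alpha(x,y)=e^{\alpha\langle x,y\rangle}$ are that it is rotation invariant and, being absolutely monotone in the inner product, positive definite on $\mathbb{S}^{d-1}$ for every $\alpha>0$ by Schoenberg's theorem; its Funk--Hecke expansion has strictly positive spherical-harmonic coefficients. Since $K_\alpha$ is continuous and bounded below by $e^{-\alpha}>0$ on the compact sphere, the inner potential is bounded away from $0$ and the $\log$ is harmless.

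Second I would run the direct method. The space $\mathcal{P}(\mathbb{S}^{d-1})$ is weak-$*$ compact, and $J$ is in fact weak-$*$ continuous: if $\mu_k\rightharpoonup\mu$, then $K_\alpha\mu_k\to K_\alpha\mu$ uniformly (continuous kernel on a compact set), hence $\log(K_\alpha\mu_k)\to\log(K_\alpha\mu)$ uniformly, and integrating a uniformly convergent sequence against weak-$*$ convergent measures passes to the limit. Consequently any sequence of discrete minimizers $\mu_n$ has weak-$*$ limit points, and a standard $\Gamma$-convergence argument (liminf inequality from lower semicontinuity, recovery sequence obtained by discretizing any target measure with near-equidistributed points) shows that every limit point is a global minimizer of $J$ over all of $\mathcal{P}(\mathbb{S}^{d-1})$.

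Third -- and this is the crux -- I would show that $\sigma$ is the unique minimizer of $J$. The stationarity check is clean: because $(K_\alpha\sigma)(x)$ is constant in $x$ by rotation invariance, the effective potential $V_\sigma(x)=\log(K_\alpha\sigma)(x)+\int \tfrac{K_\alpha(x,y)}{(K_\alpha\sigma)(y)}\,d\sigma(y)$ is constant on the entire sphere, so $\sigma$ satisfies the Euler--Lagrange condition and is a critical point. To promote this to global optimality I would exploit the rotational symmetry: $J(R_\sharp\mu)=J(\mu)$ for every $R\in O(d)$, and the Haar average $\int_{O(d)}R_\sharp\mu\,dR=\sigma$ for any $\mu$; if $J$ is convex along this averaging, Jensen's inequality yields $J(\sigma)\le J(\mu)$, with strictness -- hence uniqueness -- coming from the strict positive definiteness of $K_\alpha$ (equivalently, injectivity of $K_\alpha$ on signed measures) unless $\mu=\sigma$. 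The delicate point is exactly the convexity/uniqueness: the $\log$ nonlinearity destroys the quadratic-energy structure, so one cannot simply invoke the classical fact that $\sigma$ minimizes $\iint K_\alpha\,d\mu\,d\mu$; establishing convexity of $J$ along rotational averaging, or an alternative uniqueness argument forcing $K_\alpha\mu$ constant and then applying injectivity, is where the real work of \cite{lu2022neural} lies and is the main obstacle.

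Finally, uniqueness of the minimizer upgrades the subsequential convergence of Step~2 to full weak-$*$ convergence: since every weak-$*$ limit point of the minimizers $\mu_n$ equals $\sigma$, the whole sequence converges to $\sigma$, which is the assertion. I expect Steps~1 and~2 to be routine soft analysis; all of the genuine difficulty is concentrated in the variational characterization of $\sigma$ as the unique global minimizer of the log-interaction functional.
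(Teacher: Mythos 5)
This statement is not proved in the paper at all: it is quoted verbatim from \cite{lu2022neural}, and the surrounding appendix merely combines it with Theorem~\ref{thm:asyc} to deduce Theorem~\ref{thm:ce_converge}. So there is no in-paper argument to compare against; what you have written is a reconstruction of the cited result, and it has to be judged on its own terms.

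Your Steps 1, 2 and 4 are sound. The reformulation $\tfrac{1}{n}\mathcal{L}_\alpha(\bm{u})-\log n=\int\log\bigl(\int e^{\alpha(\langle x,y\rangle-1)}\,d\mu_n(y)\bigr)d\mu_n(x)=J(\mu_n)$ is correct (you rightly reinsert the $\alpha$ that the paper's transcription drops from the exponent), the kernel is bounded below by $e^{-2\alpha}>0$ so the logarithm causes no trouble, $J$ is indeed weak-$*$ continuous on $\mathcal{P}(\mathbb{S}^{d-1})$, and once uniqueness of the minimizer is known, subsequential convergence upgrades to full convergence by compactness. The genuine gap is exactly the one you flag yourself: showing that the uniform measure $\sigma$ is the \emph{unique global} minimizer of $J$. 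Your proposed route --- Jensen's inequality along the Haar average $\int_{O(d)}R_\sharp\mu\,dR=\sigma$ --- requires $J$ to be convex along that averaging, and this is not established; indeed it is doubtful. A quick sanity check shows why the obvious shortcuts fail: concavity of $\log$ gives $J(\mu)\le\log\iint K_\alpha\,d\mu\,d\mu$, and strict positive definiteness of $K_\alpha$ gives $\iint K_\alpha\,d\mu\,d\mu\ge\iint K_\alpha\,d\sigma\,d\sigma$ with $J(\sigma)=\log\iint K_\alpha\,d\sigma\,d\sigma$ --- so both $J(\mu)$ and $J(\sigma)$ are bounded \emph{above} by the same quantity, which orders nothing. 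The stationarity of $\sigma$ that you verify is correct but far from sufficient, since critical points of nonconvex functionals need not be global minima. As it stands, your argument proves only that limit points of the minimizers are minimizers of $J$, without identifying them; the decisive comparison $J(\mu)>J(\sigma)$ for $\mu\neq\sigma$ --- the actual content of \cite{lu2022neural} --- remains open in your writeup. To close it you would need either a genuine convexity/displacement argument for the log-interaction functional or the kind of quantitative linearization-plus-spherical-harmonics estimate used in the cited work.
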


From Theorem~\ref{thm:asyc}, we know that HUG with specific potential energy also converges to the uniform measure on $\mathbb{S}^{d-1}$. Combining the results above, we can conclude that HUG and CE share the same minimizer. \QEDA

\newpage
\section{Experimental Details}\label{app:exp_detail}
\textbf{General settings}. For MHE-HUG and MHS-HUG, $\alpha$ and $\beta$ are set as 0.15 and 0.015, respectively. For MGD-HUG, $\alpha$ and $\beta$ are set as 0.15 and 0.03, respectively. We train the model for 200 epochs with 512 batchsize for both the cross-entropy (CE) loss and HUG. We use the stochastic gradient descent with momentum 0.9 and weight decay $2 \times10^{-4}$. The initial learning rate is set as 0.1 for both CIFAR-100 and CIFAR-10 and is divided by 10 at 60, 120, 180 epoch. For the general classification experiments, we use multiple architectures, including ResNet-18, VGG16 and DenseNet121. we use the simple data augmentation: 4 pixels are padded on each side, and image is randomly cropped.

\textbf{Long-tailed recognition.} We follow LDAM~\citep{cao2019learning} to obtain imbalanced CIFAR-10 and CIFAR-100 datasets with different imbalanced ratio. Following LDAM, we use ResNet-32 as our base network. The other setting is the same as our general setting.

\textbf{Continual learning.} We follow DER~\citep{buzzega2020dark} to construct our continual learning experiments. We split both the CIFAR-10 and CIFAR-100 training set into 5 tasks. Each task has 2 classes and 20 classes for CIFAR-10 and CIFAR-100, respectively. The training batchsize is set as 64, where there are 32 incoming samples and 32 replayed samples. Different size of memory buffer is also studied.

\textbf{Adversarial robustness.} For the experiments of adversarial robustness, we first obtain the model trained with CE and HUG. With the information of the attacked model, PGD~\citep{madry2017towards} generates some adversarial examples to mislead the attacked model. The test accuracy in the experiments of adversarial robustness shows the accuracy of the perturbed samples. 

\textbf{Visualizing loss landscape.} We perturb neuron weights to visualize the loss landscape, as proposed in \cite{li2018visualizing}. For details, we perturb the model weight with 400 interpolation points in two random vectors around the current model weight minima. The visualization method is also the same as \cite{liu2021orthogonal}.

\newpage
\section{Additional Experimental Results}\label{app:add_results}

\textbf{Training convergence.} We observe the training convergence of HUG on CIFAR-10 and CIFAR-100. Both the evaluation accuracy and the training loss, including the overall losses, the intra-class loss and the inter-class loss, are shown in Figure~\ref{fig:loss_convergence}. For both the CIFAR-10 and CIFAR-100, the inter-class uniformity loss remains relatively small, which is consistent with the empirical finding in \cite{lin2020regularizing,liu2021orthogonal}. Moreover, we find that the intra-class uniformity loss (\ie, intra-class variability) dominates the overall loss on CIFAR-100 dataset and it is relatively difficult to optimize when the class number becomes large.

\begin{figure}[h]
\vspace{-.2em}
		\centering
		\includegraphics[width=0.43\linewidth]{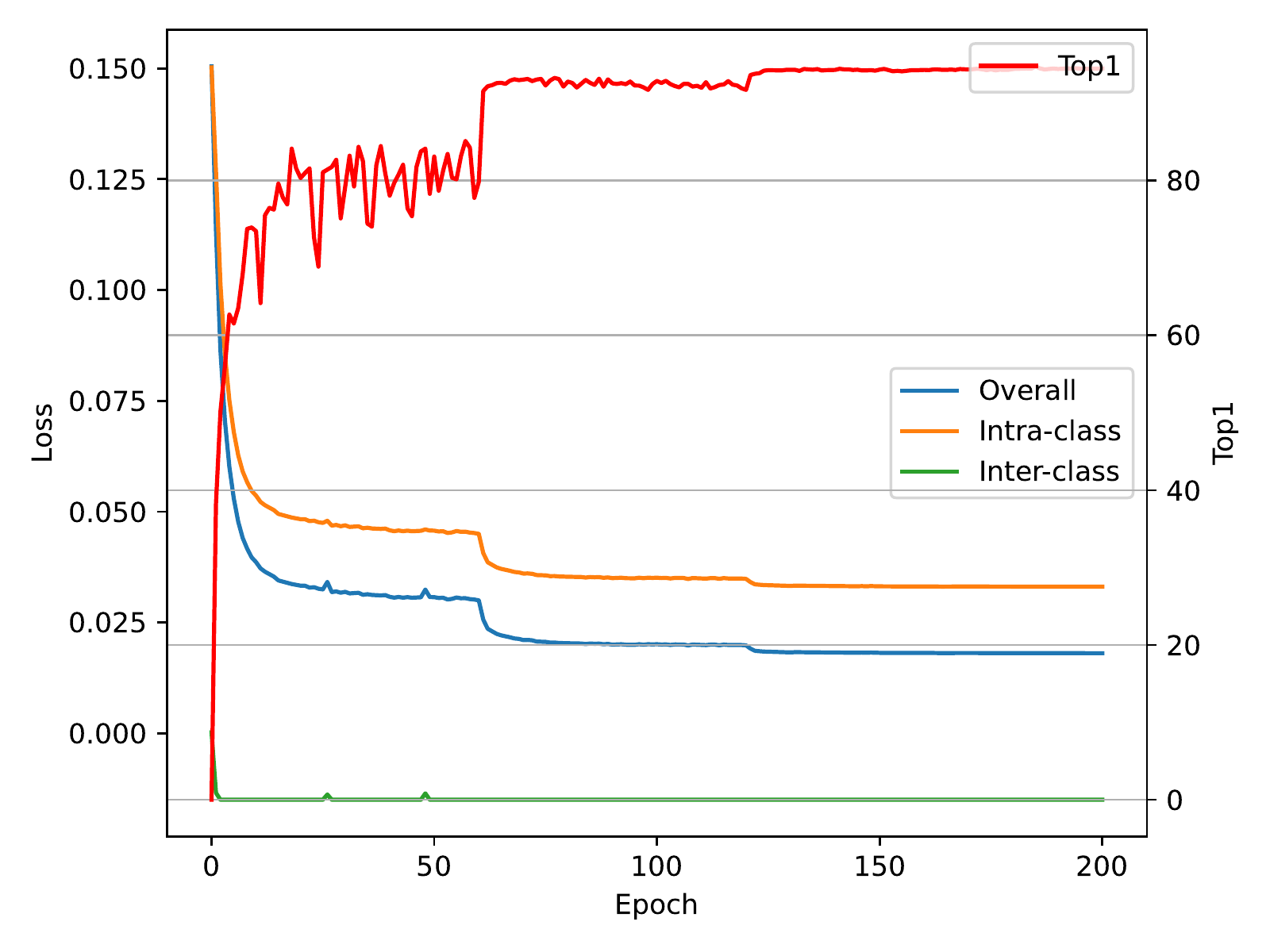}
		\hspace{0.28in}
		\centering
		\includegraphics[width=0.43\linewidth]{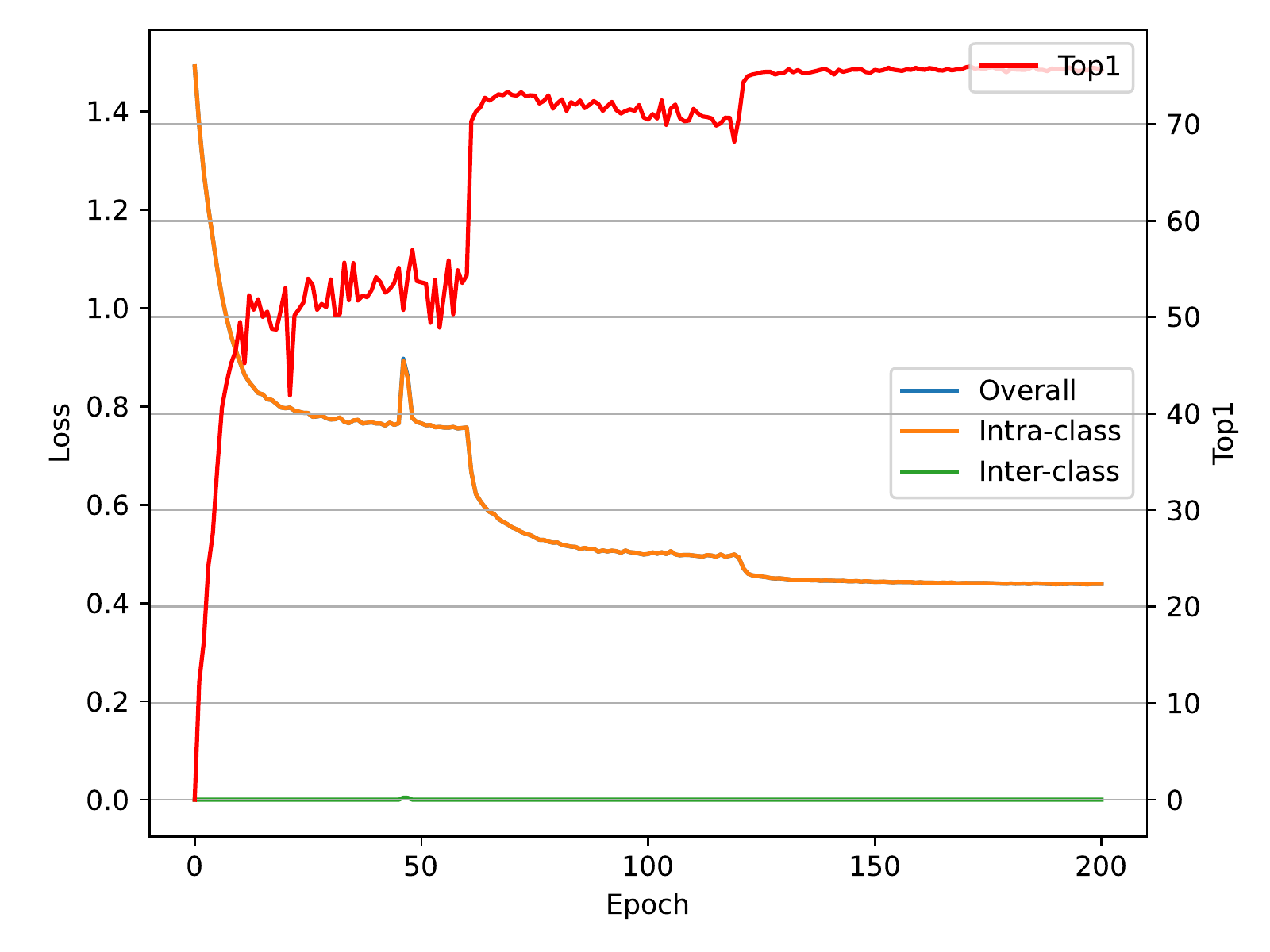}
  \vspace{-0.7em}
	\caption{\scriptsize HUG's training loss and testing accuracy (\%) on CIFAR-10 (left) and CIFAR-100 (right).}
	\label{fig:loss_convergence}
\end{figure}

\textbf{2D loss contour.} We also utilize the method in \cite{li2018visualizing} to visualize the 2D loss landscape, which is more easy to visualize the flatness of the loss landscape. As shown in Figure~\ref{fig:Contour}, the 2D loss landscape of our HUG loss is flatter than the widely used CE loss, showing that HUG yields a flat minima which may have better generalization ability.

\begin{figure}[h]
\vspace{-.2em}
		\centering
		\includegraphics[width=0.23\linewidth]{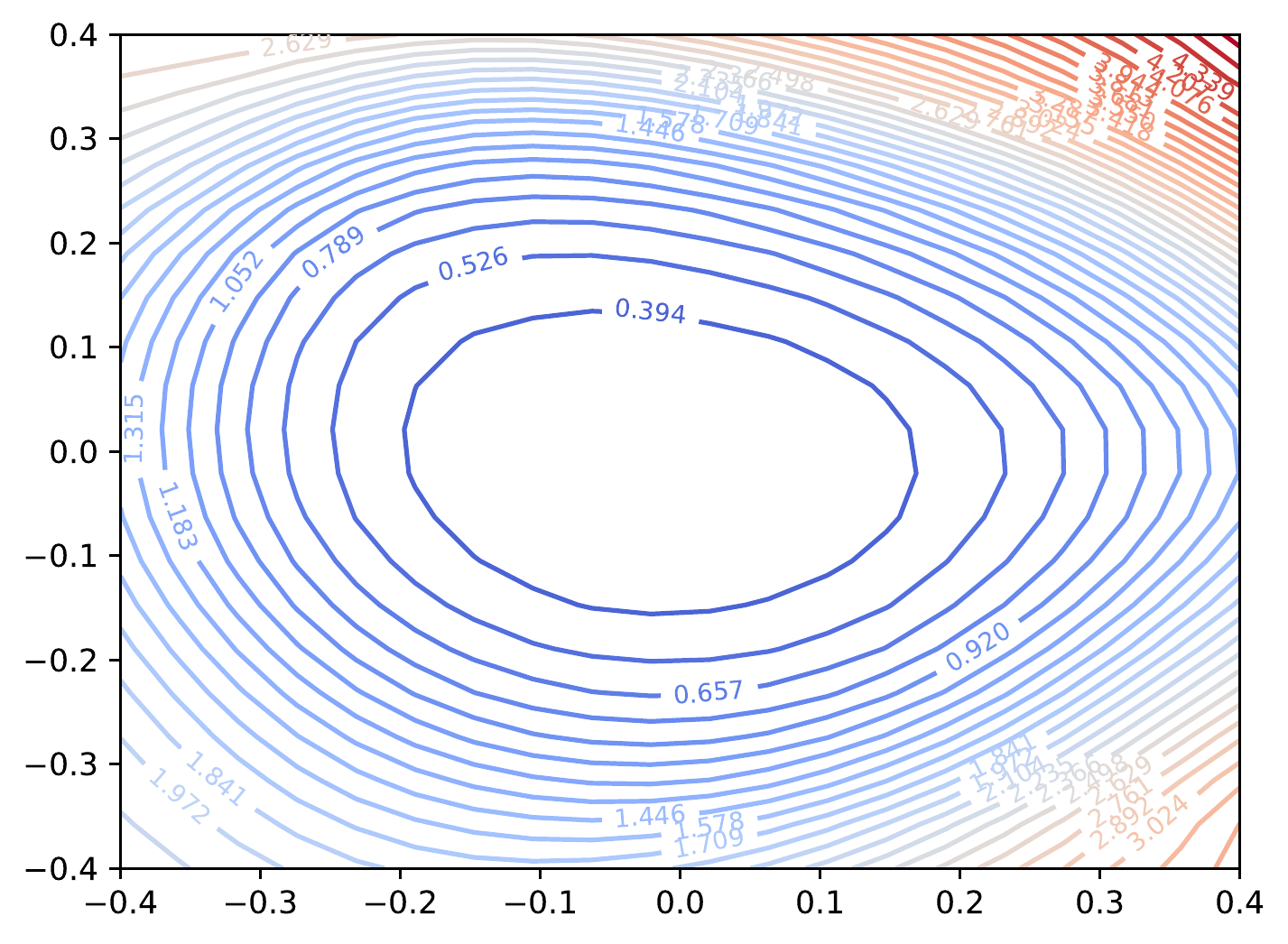}
		\centering
		\includegraphics[width=0.23\linewidth]{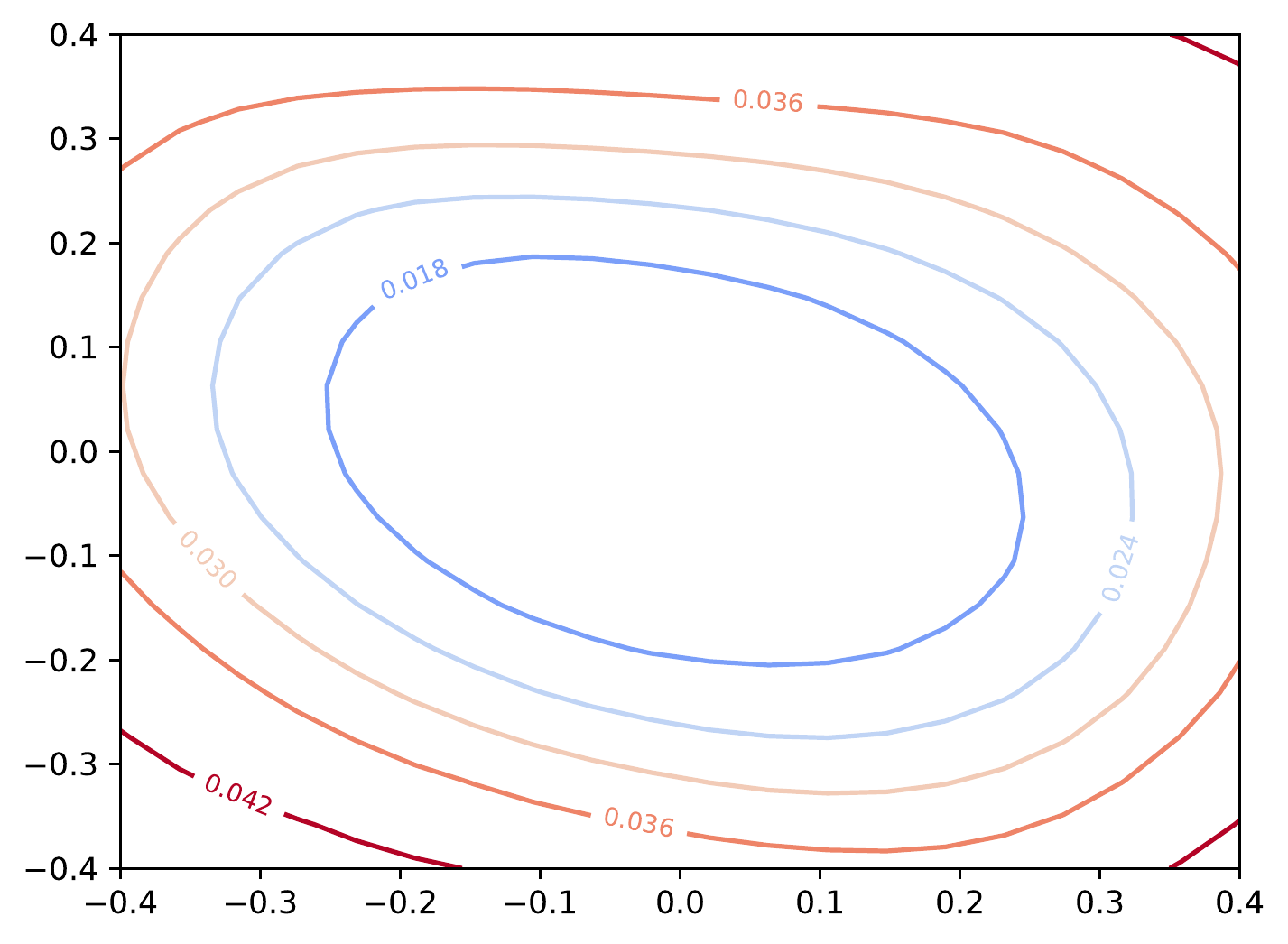}
		\centering
		\includegraphics[width=0.23\linewidth]{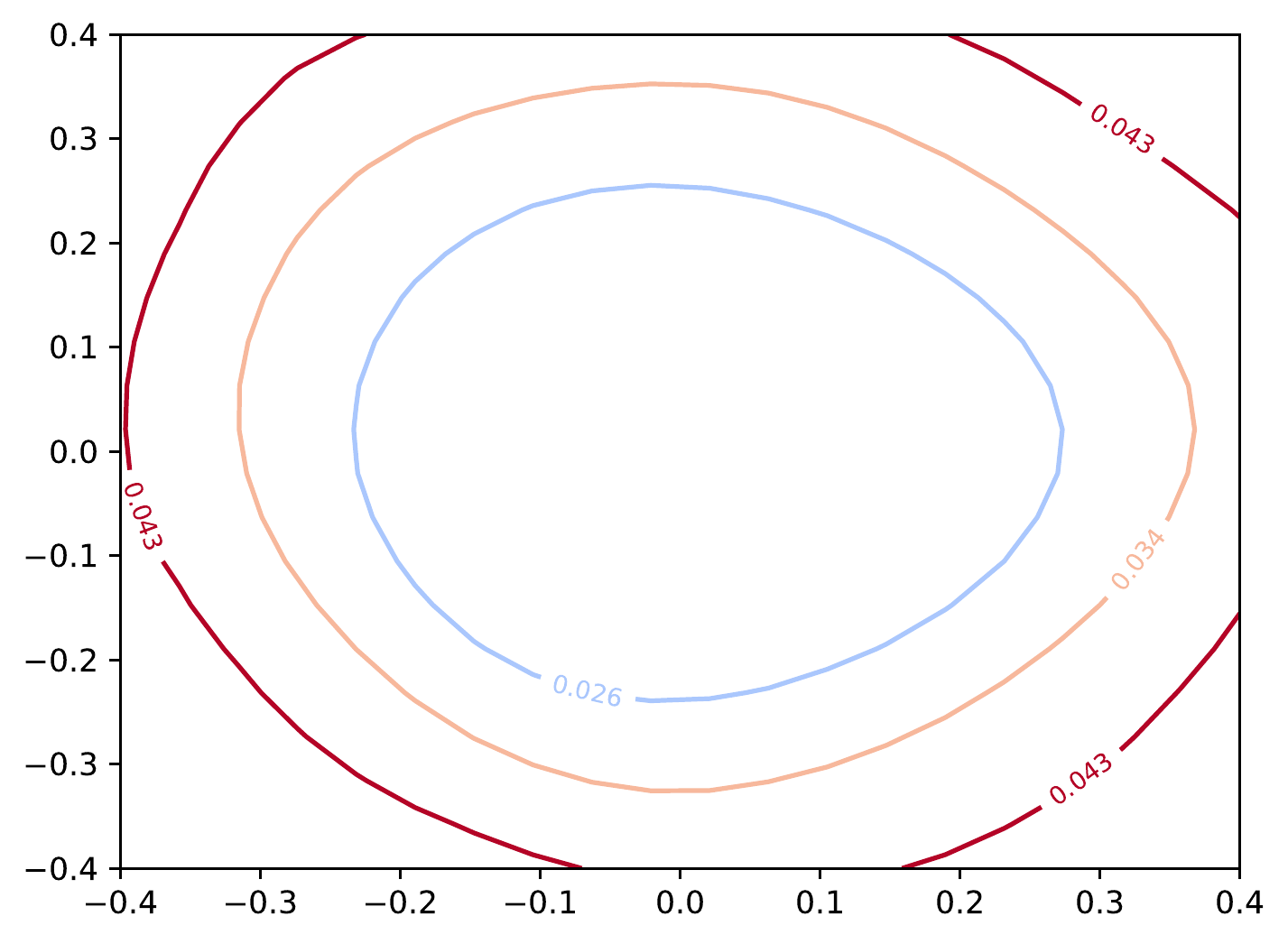}
		\centering
		\includegraphics[width=0.23\linewidth]{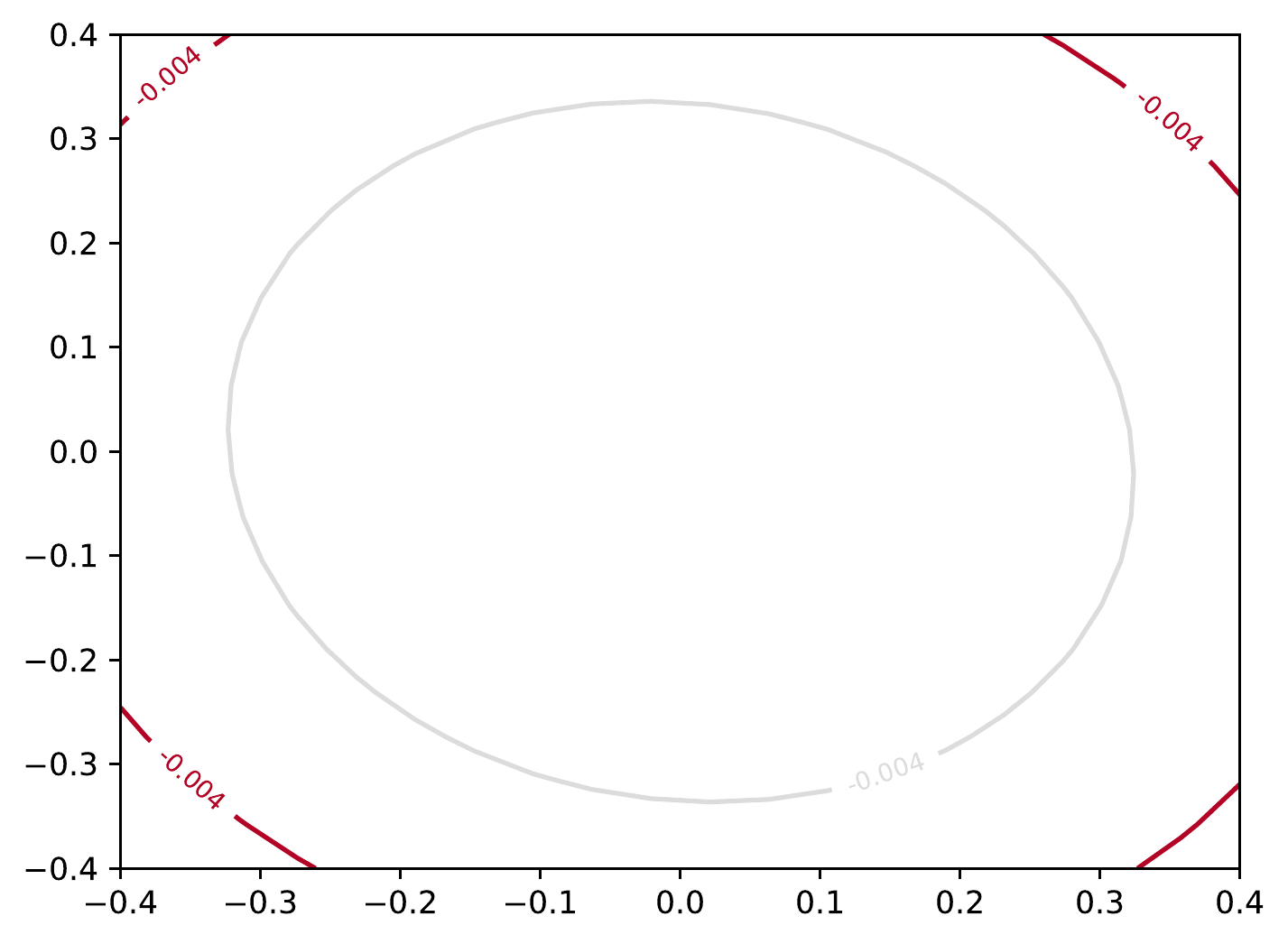}
  \vspace{-0.7em}
	\caption{\scriptsize The 2D Loss Contour of different loss objective. From left to the right: (1). CE loss. (2). HUG overall loss. (3). intra-class loss. (4). inter-class loss.}
	\label{fig:Contour}
\end{figure}

\textbf{The ablation of $\alpha$ and $\beta$.} In our HUG framework, we introduce two scaling hyperparameters, $\alpha$ for the inter-class hyperspherical uniformity, $\beta$ for the intra-class hyperspherical uniformity. We investigate the effect of the two hyperparameters for the model performance. As shown in Table~\ref{table:ablation}, HUG is not sensitive to $\alpha$, as the inter-class hyperspherical uniformity is always easy to optimize. HUG is also not sensitive to $\beta$ in a wide range. The ablations are conducted on CIFAR-100. $\alpha$ is set as 0.15 when we perform ablation on $\beta$. $\beta$ is set as 0.015 when doing ablation on $\alpha$.

\begin{table}[h]
    \scriptsize
	\centering
	\setlength{\tabcolsep}{3pt}
	\renewcommand{\arraystretch}{1.4}
	\vspace{2.5mm}
	\begin{tabular}{l|cccccccc}
		  $\alpha$ & 0.0003 & 0.0015 & 0.015 & 0.05 & 0.15 & 0.5 & 1.5 & 5.0 \\\shline
		  Accuracy & 76.31 & 75.99 & 76.28 & 76.16 & 76.48 & 76.32 & 76.1 & 76.03  \\
    \specialrule{0em}{6pt}{0pt}
    \end{tabular}
    \begin{tabular}{l|cccccccc}
		  $\beta$ & 0.005 & 0.015 & 0.05 & 0.15 &  0.3 & 0.5 & 1.5 & 5.0  \\\shline
        Accuracy & 74.15 & 76.48 & 76.12 & 75.87 & 75.59 & 75.24 &74.81 &74.00 \\
		  \specialrule{0em}{-5.5pt}{0pt}
	\end{tabular}
	\caption{\scriptsize Effect of hyperparameters $\alpha$ and $\beta$.} \label{table:ablation}
\end{table}

\end{document}